\theoremstyle{plain}
\newtheorem{theorem}{Theorem}[section]
\newtheorem{lemma}[theorem]{Lemma}
\theoremstyle{definition}
\theoremstyle{remark}
\newtheorem{remark}[theorem]{Remark}
\icmltitlerunning{Submission and Formatting Instructions for ICML 2025}
\definecolor{uclablue}{rgb}{0.15, 0.45, 0.68}
\newcommand{\model}{FloE\xspace}
\newtheorem{question}{\bf Question}
\newtheorem{observation}{\bf Observation}
\newcommand{\lz}[1]{%
{\textcolor{black}{#1}}%
}
\newcommand{\zl}[1]{%
{\textcolor{black}{#1}}%
}
\newcommand{\zj}[1]{%
{\textcolor{black}{#1}}%
}
\newcommand{\zyx}[1]{%
{\textcolor{black}{#1}}%
}
\begin{document}

\twocolumn[
\icmltitle{
% \raisebox{-0.2\height}{\includegraphics[height=0.8cm]{icml2025/fig/floe_logo.pdf}}%
%     \hspace{0.15cm} % 在Logo和标题之间添加水平间距
% FloE: On-the-Fly MoE Inference on Memory-constrained GPU}
% \title{
\scalerel*{\includegraphics{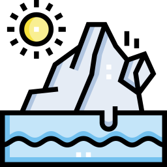}}{{\rule{2ex}{2ex}}}
\model: On-the-Fly MoE Inference on Memory-constrained GPU
}
% It is OKAY to include author information, even for blind
% submissions: the style file will automatically remove it for you
% unless you've provided the [accepted] option to the icml2025
% package.

% List of affiliations: The first argument should be a (short)
% identifier you will use later to specify author affiliations
% Academic affiliations should list Department, University, City, Region, Country
% Industry affiliations should list Company, City, Region, Country

% You can specify symbols, otherwise they are numbered in order.
% Ideally, you should not use this facility. Affiliations will be numbered
% in order of appearance and this is the preferred way.
\icmlsetsymbol{equal}{*}

\begin{icmlauthorlist}
\icmlauthor{Yuxin Zhou}{equal,yyy,comp}
\icmlauthor{Zheng Li}{equal,yyy,comp}
\icmlauthor{Jun Zhang}{yyy,comp}
\icmlauthor{Jue Wang}{yyy}
\icmlauthor{Yiping Wang}{uw}
\icmlauthor{Zhongle Xie}{yyy,comp}
\icmlauthor{Ke Chen}{yyy,comp}
\icmlauthor{Lidan Shou}{yyy,comp}

\end{icmlauthorlist}

\icmlaffiliation{yyy}{The State Key Laboratory of Blockchain and Data Security, Zhejiang University}
\icmlaffiliation{comp}{Hangzhou High-Tech Zone (Binjiang) Institute of Blockchain and Data Security}
\icmlaffiliation{uw}{Paul G. Allen School of Computer Science \& Engineering, University of Washington}
\icmlcorrespondingauthor{Zhongle Xie}{xiezl@zju.edu.cn}
\icmlcorrespondingauthor{Lidan Shou}{should@zju.edu.cn}

\icmlkeywords{Machine Learning, ICML}

\vskip 0.3in
]

\printAffiliationsAndNotice{\icmlEqualContribution} % otherwise use the standard text.

\begin{abstract}

With the widespread adoption of Mixture-of-Experts (MoE) models, there is a growing demand for efficient inference on memory-constrained devices.
While offloading expert parameters to CPU memory and loading activated experts on demand has emerged as a potential solution, the large size of activated experts overburdens the limited PCIe bandwidth, hindering the effectiveness in latency-sensitive scenarios.
To mitigate this, we propose \model{}, an on-the-fly MoE inference system on memory-constrained GPUs.
\model{} is built on the insight that there exists substantial untapped redundancy within sparsely activated experts.
It employs various compression techniques on the expert's internal parameter matrices to reduce the data movement load, combined with low-cost sparse prediction, achieving perceptible inference acceleration in wall-clock time on resource-constrained devices.
Empirically, \model{} achieves a 9.3$\times$ compression of parameters per expert in Mixtral-8$\times$7B; enables deployment on a GPU with only 11GB VRAM, reducing the memory footprint by up to 8.5$\times$; and delivers a 48.7$\times$ inference speedup compared to DeepSpeed-MII on a single GeForce RTX 3090—all with only a 4.4\% $\sim$ 7.6\% average performance degradation.
% The code is available at
\end{abstract}

\section{Introduction}
\label{introduction}
% 1. LLM的推断成本：随着大规模语言模型（LLM）的重要性提升，推断成本成为一个关键问题，尤其是在设备资源有限的场景中。
% 2. MoE（Mixture of Experts）技术的优势与挑战：MoE在一定程度上能降低推断成本，但带来了大量驻留在GPU memory中的专家参数，导致内存压力增大。
% 3. 专家卸载（Offloading）策略的局限性：现有的专家卸载策略在延迟敏感的在线推断场景下存在性能瓶颈，无法有效处理低延迟、高效推断需求。【problem targeting（brief review+limitation）】
% 4. 核心思路：提出在推断前基于上下文的稀疏性（专家和激活）来卸载冗余专家，在推断时只加载少量必要的专家参数及其权重。关键在于：
%   - 如何选择：根据上下文信息预测加载哪些专家和权重。
%   - 何时加载：精准调度加载时机，避免不必要的延迟。

% intro总的flow：
% Paragraph1. Dense LLM is nice but high inference cost -> MoE introduces sparsely activated expert to resolve it  
% -->  offloading tech [offline + on-the-fly/online] (1 para) 
% -> focus on offloading under on-the-fly scenario [the limitation of existing work] (1 para) -> 
% challenges -> 
% our idea

% Large language models (LLMs) such as GPT-3~\citep{openai:2023gpt3}, LLaMA~\citep{meta2024llama3}, and PaLM~\citep{Aak:2023palm} have revolutionized natural language applications, achieving exceptional task generalization driven by their massive scale.
% , with some models exceeding 400 billion parameters~\citep{meta2024llama3}.
% zl: evidence/citation?
% zl: we need to explain the 'input' here.
% zl: a major bottleneck to what?
% zj [ToDo]: 
% However, the computational efficiency of these dense LLMs, which activate all parameters for each input, remains a major bottleneck.
% zl: May use the most recent version of deepseek (d)
Mixture of Experts (MoE) models including DeepSeek-R1~\citep{deepseekai2025deepseekr1}, GPT-4~\citep{openai:2023gpt4}, 
% Qwen2.5-Plus~\citep{qwen2025qwen25}, 
Phi-4~\citep{abdin2024phi4}, Mixtral~\citep{albert2024mixtral}, etc., offer a paradigm shift in the large language model (LLM) architecture by introducing sparsely activated experts.
These sparse LLMs contextually activate only a subset of experts per token, significantly reducing inference costs while maintaining generative performance.
However, the abundance of idle, non-activated experts during MoE inference significantly hampers efficient GPU memory utilization, making it challenging to deploy MoE models on memory-constrained GPUs.
% zl: add citation (d)
% citation of typical two activated experts?
For instance, running inference for Mixtral-8$\times$7B, where two experts are activated, requires approximately 94GB of VRAM in \texttt{FP16} precision.
Of this, 30\% of the activated parameters (27.3GB) are utilized during decoding, while the remaining 66.8GB is occupied by non-activated experts, resulting in significant inefficiency~\citep{shin2024sparseinfer}.

\zl{
To address the problem, offloading techniques~\citep{edgemoe,art2023fasr,pre-gated,song2024promoe,xue2024moe-infinity,tang2024hobbit},
which unmount expert parameters to CPU memory and load them into GPU memory on demand for each input, offers a natural solution.
}
However, offloading shifts the decoding bottleneck from memory-bound to I/O-bound, as transferring billions of parameters through the low-bandwidth PCIe bus incurs substantial data transfer delays. 
\zl{
For comparison, the DRAM-to-VRAM bandwidth (32GB/s for PCIe 4.0) is orders of magnitude lower than the bandwidth between GPU memory and on-chip computation units (300GB/s).}
% zl: may directly draw the Geforce 3090 into the figure and save the space here. (d)
% zj: Todo, may we can remove the figure to save space
% As shown in~\Cref{fig:bottleneck}(b), 
% Given a memory-constrained GPU such as the GeForce RTX 3090, the DRAM-to-VRAM bandwidth (32GB/s) is orders of magnitude lower than the 300GB/s bandwidth between GPU memory and on-chip computation units (approximated as SRAM). 
% As a result, 
\zl{Consequently,} existing MoE inference systems with expert offloading, designed for edge-side continuous serving scenarios (i.e.,~single-batch latency-sensitive inference)~\cite{kong-etal-2024-swapmoe,art2023fasr,pre-gated,edgemoe,tang2024hobbit}, still fail to support \textit{on-the-fly} inference, where the loading process is \textit{perceptible} to the user because its overhead cannot be hidden by the model computation.
% zl: what is the disadvantage of ultra-low-bit quantization regarding on-the-fly inference?(d)
Ultra-low-bit quantization effectively reduces the size of transmitted parameters to mitigate the latency of activated expert loading~\citep{art2023fasr,edgemoe}, but at the cost of significantly degraded generation performance.
% Thus, this brings forth a pressing question:
\zl{Thus, a pressing question emerges:}

% \textit{Can we enable on-the-fly inference for MoE models on the GPU with limited memory, while minimizing generation performance degradation?}
% zl: consistency: memory-constrained GPU vs. consumer-grade device. Additionally, maybe it is better to use `how' instead of `can'. (d)
% zj: (d)
\textit{How can we hide the I/O overhead of activated experts within model computation to enable on-the-fly MoE inference on the memory-constrained GPU while minimizing generation performance degradation?}

% zl: check the correctness.
\zl{
In this paper, we present an on-the-\textbf{fl}y M\textbf{oE} inference system, coined \textbf{\model{}}, for consumer-grade devices.
\model{} reduces the I/O overhead of the experts, namely the transfer cost of the matrices for up, gate, and down projections, via a hybrid compression mechanism (\cref{sec:hybridcompression}).
Despite the utilization of the well-known inter-expert sparsity, the compression exploits the vast, untapped intra-expert sparsity in MoE models with a novel contextual sparsification scheme (\cref{ssec:sparse_gatedown}), balancing the transfer cost and the downstream performance.
In detail, the system first identifies low-magnitude, no-salient output activations of up projection and then removes the corresponding channel weights from the gate and down projections.
Meanwhile, we observe that the up projection matrix has limited sensitivity on performance against quantization, motivating us to enable the ultra-low-bit quantization in \model{} to reduce the transfer overhead further (\cref{ssec:quantize_up}).
}

\zl{
Although the hybrid compression reduces per-transfer cost for MoE models, the pipelining between transfer and computation is prevented due to the sequential execution of routing, quantized up projection computation, and DRAM expert fetching, inhibiting on-the-fly inference.
Therefore, we investigate the weights and the input during the computation and locate a high similarity between the shared hidden state input before routing and up projection of the MoE model.
Based on the finding, we devise two efficient yet effective sparsity predictors: (1) an inter-expert learning-based predictor to guide the routing of the activation expert of the next layer with the hidden state of the current layer; (2) an intra-expert reused-based predictor precomputing the context sparsity distribution with the hidden state of the current layer and the reused up projection.
The two predictors, with the help of prefetching, enable the pipelining of transfer and computation for on-the-fly inference. (\cref{sec:predictor})
}

\zl{
To integrate all the techniques above, we at last propose an efficient sparse kernel and compact asynchronous transfer from DRAM to VRAM to achieve system-wide efficiency (\cref{sec:sysoptim}).
The experimental study on various GPU specs and downstream tasks evidence the efficiency and efficacy of \model{} (\cref{sec:evaluation}).
Notably, for the popular Mixtral-8$\times$7B, \model{} achieves 9.3$\times$ parameter compression per expert, enables deployment on a GPU with just 11GB VRAM 
% (reducing memory usage by up to 8.5$\times$)
, and delivers a 2.6$\times$ inference speedup on an RTX 3090, with only 4.4\%$\sim$7.6\% average performance degradation.
}

\begin{figure*}[!t]
% \vskip -0.5in
\begin{center}
\includegraphics[width=1.0\textwidth]{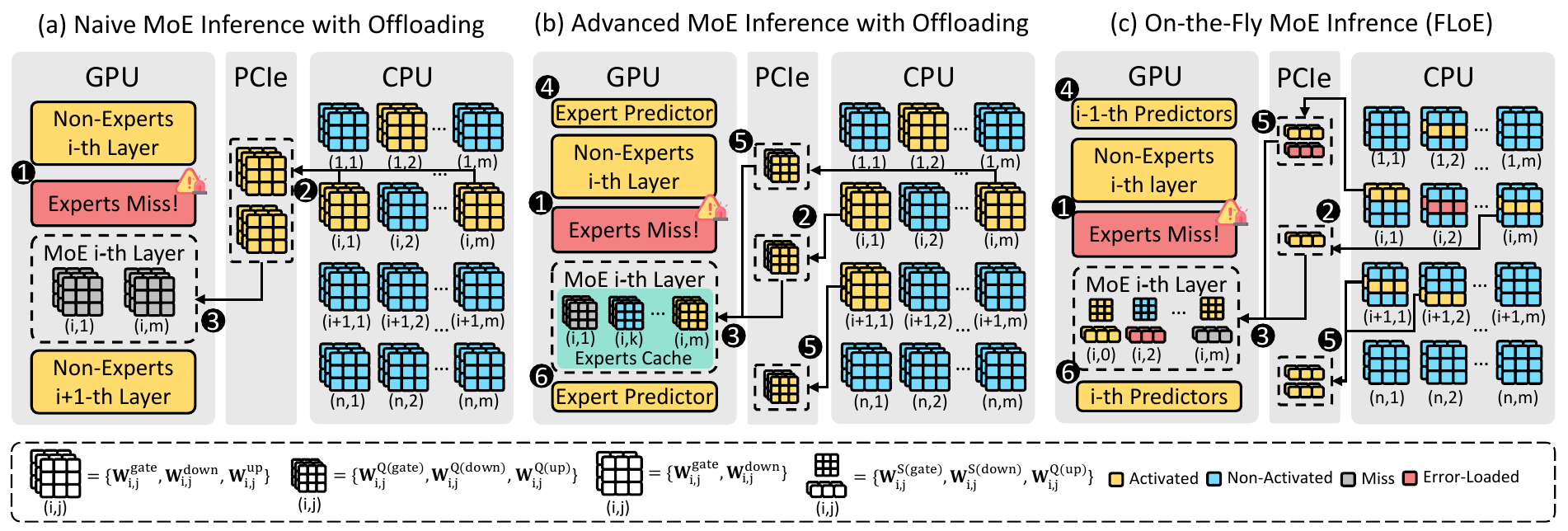}
\vskip -0.1in
\caption{\zj{Comparison of MoE inference offloading systems for memory-constrained GPUs: (a) Naive MoE Inference with Offloading; (b) Advanced MoE Inference with Offloading; (c) On-the-Fly MoE Inference (\model{}).}}
\label{fig:framework}
\end{center}
\vskip -0.3in
\end{figure*}

\section{Related Work}
\label{sec:related}
\paragraph{Experts Offloading.}
Efficient deployment of MoE models faces substantial challenges due to their parameter counts, particularly under resource constraints. Current inference frameworks like Llama.cpp~\citep{llama.cpp}, HuggingFace Accelerate~\citep{accelerate}, and DeepSpeed Inference~\citep{aminabadi2022deepspeed} employ experts offloading by selectively transferring VRAM-dominant expert weights to DRAM~\citep{sheng2023flexgen}. However, constrained PCIe bandwidth creates transfer bottlenecks during CPU-GPU expert transfers~\citep{kamahori2024fiddler}.  

To mitigate this, prefetching strategies predict and preload required experts through two paradigms: experience-based statistical methods using offline activation traces~\citep{edgemoe} (limited to top-1 expert activation strategy~\citep{fedus2022switch}), and intermediate result-driven approaches leveraging hidden states~\citep{art2023fasr,pre-gated,song2024promoe,tang2024hobbit} or prior expert indices~\citep{xue2024moe-infinity}. The former fails under multi-expert activation due to exponential path growth~\citep{dai2024deepseekmoe}, while the latter faces an accuracy-latency tradeoff: early-stage predictions from intermediate results~\citep{song2024promoe} diminish prefetching accuracy, necessitating costly expert reloads, whereas adjacent-layer predictions~\citep{pre-gated,art2023fasr} prevent computation-communication overlap.  

% zl: inconsistency. ultra-low-bit?(d)
While ultra-low-bit expert quantization reduces transfer overhead at the cost of accuracy~\citep{art2023fasr,edgemoe}, CPU-based partial computation~\citep{kamahori2024fiddler,xue2024powerinfer2,tang2024hobbit} achieves limited acceleration due to insufficient throughput for high-dimensional matrix operations.

In contrast to our focus on on-the-fly inference in latency-sensitive scenarios, alternative offloading solutions, such as MoE-lightning~\citep{cao2024moe-lightning}, are primarily designed for high-throughput inference in offline scenarios.
% Furthermore, another category of offloading solutions, such as MoE-lightning\cite{cao2024moe-lightning} and FlexGen\cite{sheng2023flexgen}, focuses on enhancing inference throughput in offline scenarios.
\paragraph{Sparsity in LLMs.}
Maintaining model quality while minimizing parameter transfer necessitates synergistic sparsity and quantization. 
% zl: inconsistency. consumer-grade?(d)
Weight pruning~\citep{sun2023wanda,frantar2023sparsegpt,ma2023llmpruner} zeroes subsets of LLM weights to reduce computational/memory overhead but faces performance degradation and hardware compatibility issues on consumer-grade devices.

Activation sparsity—conditional computation via zero-rich hidden states—naturally occurs in ReLU-based MLPs~\citep{liu2023deja,alizadeh2023llminflash,shin2024sparseinfer} but diminishes in modern architectures using non-ReLU MLPs (e.g., SwiGLU~\citep{shazeer2020glu}), limiting direct applicability. 
% zl: require -> requires(d)
Recent research has thus concentrated on reintroducing activation sparsity within newer architectures~\citep{mirzadeh2023relu,zhang2024relu,song2024prosparse,song2024turbo}, but requires extensive pretraining (billions of tokens). Training-free activation sparsity~\citep{lee2024cats,liu2024teal}, achieved through activation magnitude pruning in SwiGLU-based LLMs, remains tailored for dense models with uniform parameter utilization across inputs.

\section{\model{}: On-the-Fly MoE Inference}
\label{sec:floe}
\subsection{MoE Inference with Offloading}

\cref{fig:framework}(a) illustrates naive MoE inference with offloading. Non-expert weights, frequently activated during inference, reside persistently in VRAM and are computed on the GPU. Expert weights, due to their sparse activation, are offloaded to DRAM. If certain experts' weights are missing from VRAM (\cref{fig:framework}(a) \ding{202}), the system transfers these weights over the PCIe bus (\cref{fig:framework}(a) \ding{203}), after which the GPU proceeds with subsequent computations (\cref{fig:framework}(a) \ding{204}).

\zl{
As mentioned in the introduction, expert offloading shifts the decoding bottleneck from memory-bound to I/O bound.
Specifically, the expert transferring from DRAM to VRAM incurs long latency. 
}
For example, \zl{the} expert in the Mixtral-8$\times$7B model has over 300MB of \texttt{FP16} parameters, taking nearly 15ms to transfer over a 16-channel PCIe 4.0 bus, \zl{whilst} a single expert’s computation on a GeForce RTX 3090 takes only about 5ms.

\cref{fig:framework}(b) shows advanced MoE offloading~\citep{art2023fasr,edgemoe,pre-gated,song2024promoe,xue2024moe-infinity,tang2024hobbit}(detailed related works discussion in~\cref{sec:related}).
\zl{
Despite the process in the naive solution, an extra expert predictor~(\cref{fig:framework}(b) \ding{205}) is implemented to prognosticate the expert visiting in the near future.
The prognosticated expert, as shown as \cref{fig:framework}(b) \ding{206}, is quantized and preloaded in a GPU-resident expert cache~(\cref{fig:framework}(b) \ding{204}), managed by a replacement policy.
Compared to the naive solution, the advanced MoE offloading can achieve better transfer efficiency due to the usage of the expert predictor and the cache.
}

\zj{
Next, we present \model{}, an inference system that delivers on-the-fly MoE model inference on consumer-grade GPUs. \model{} uses a hybrid compression scheme—integrating contextual sparsity and ultra-low-bit quantization (\cref{fig:framework}(c) \ding{207})—detailed in~\cref{sec:hybridcompression}. In \cref{sec:predictor}, \model{} introduces dual predictors (\cref{fig:framework}(c) \ding{205}) for inter- and intra-expert sparsity to accurately prefetch activated compressed weights (\cref{fig:framework}(c) \ding{206}) while minimizing DRAM usage. Finally,~\cref{sec:sysoptim} describes system co-optimizations that further enhance \model{}’s efficiency.}

\subsection{Expert Hybrid Compression}
\label{sec:hybridcompression}

% \cref{fig:framework}(b) reveals that existing experts offloading systems typically opt to compress the entire expert through quantization $\mathtt{Q}(\cdot)$, where $\mathcal{E}_{ij}$ is quantized as:
% \(
% \mathcal{E}_{ij}^{\mathtt{Q}} = 
% \left\{ 
% \mathbf{W}^{\mathtt{Q}\textnormal{(gate)}}_{ij}, 
% \mathbf{W}^{\mathtt{Q}\textnormal{(down)}}_{ij}, 
% \mathbf{W}^{\mathtt{Q}\textnormal{(up)}}_{ij} 
% \right\}.
% \)

% Nevertheless, conventional quantization
% % techniques 
% (e.g.,~\texttt{INT8}, \texttt{INT4}) induces transmission latencies that do not satisfy the stringent requirements of latency-sensitive inference scenarios. Conversely, employing ultra-low bit-widths (e.g.,~\texttt{INT2}, \texttt{INT1}) significantly compromises the model's performance.

\zj{As shown in~\cref{fig:framework}, in a SwiGLU-based MoE model, each expert $\mathcal{E}_{ij}$ consists of three matrices $\{\mathbf{W}^{\textnormal{gate}}_{ij}, \mathbf{W}^{\textnormal{down}}_{ij}, \mathbf{W}^{\textnormal{up}}_{ij}\}$.
\zl{
We denote the number of layers and the number of experts per layer as $m$ and $n$, respectively.
}
% where \(m\) is the number of layers and \(n\) the number of experts per layer.
Although advanced MoE offloading
% ~\cite{art2023fasr} 
proposes compressing experts using ultra-low-bit quantization (e.g.,~\texttt{INT2}, \texttt{INT1}) to reduce transfer costs, this significantly degrades model performance.
}

\zj{
We argue that applying a uniform ultra-low-bit quantization strategy $\mathtt{Q}(\cdot)$ across all matrices 
(see~\cref{fig:framework}(b) 
$\{ 
\mathbf{W}^{\mathtt{Q}\textnormal{(gate)}}_{ij}, 
\mathbf{W}^{\mathtt{Q}\textnormal{(down)}}_{ij}, 
\mathbf{W}^{\mathtt{Q}\textnormal{(up)}}_{ij} \}$
) within an expert fails to strike an optimal balance between efficiency and performance.
Thus, \model{} introduces a unique twist with a hybrid strategy that tailors compression methods to the properties of the projection matrices.
Specifically, contextual activation sparsity $\mathtt{S}(\cdot)$ is applied to the gate \zl{projection $\mathbf{W}^{\textnormal{gate}}{ij}$ and down projection $\mathbf{W}^{\textnormal{down}}_{ij}$ to produce $\mathbf{W}^{\mathtt{S}\textnormal{(gate)}}_{ij}$ and $\mathbf{W}^{\mathtt{S}\textnormal{(down)}}_{ij}$. Meanwhile, ultra-low-bit quantization $\mathtt{Q}(\cdot)$ (\texttt{INT2}) is used for the up projection $\mathbf{W}^{\textnormal{up}}_{ij}$ to yield $\mathbf{W}^{\mathtt{Q}\textnormal{(up)}}_{ij}$.}
}

% Thus, \model{} proposes a hybrid compression strategy 
% % $\mathtt{C}(\cdot)$ 
% that customizes the selection of different compression techniques based on the properties of the projection matrices. 
% For the $ \mathbf{W}^{\textnormal{gate}}_{ij} $ and the $ \mathbf{W}^{\textnormal{down}}_{ij} $, contextual sparsity $\mathtt{S}(\cdot)$ is used, while the $ \mathbf{W}^{\textnormal{up}}_{ij} $ is compressed using ultra-low bit quantization (\texttt{INT2}) . This results in the compression of the expert $ \mathcal{E}_{ij} $ as: \(\mathcal{E}_{ij}^{\mathtt{C}} = \left\{ \mathbf{W}^{\mathtt{S}\textnormal{(gate)}}_{ij}, \mathbf{W}^{\mathtt{S}\textnormal{(down)}}_{ij}, \mathbf{W}^{\mathtt{Q}\textnormal{(up)}}_{ij} \right\}\).

% \subsubsection{Experts sparse activated}
% zj:
\subsubsection{Contextual Sparsification for Gate \& Down Projections}
\label{ssec:sparse_gatedown}
% insight1:In sparsely activated MoE  models, the experts themselves still maintain a high degree of sparsity.

% we use the up_proj outputs as the sparse criterion.
% why we use the up_proj outputs as the sparse criterion?
%   -silu(gate): 
%   -up
%   -down
% efficient sparse kernels: motivation and method.

% transition:Since up is used as the sparsity criterion, each computation of up involves full-parameter calculations. Therefore, we opt to compress the up_proj by applying ultra-low-bit quantization, thereby reducing the transmission overhead.

\begin{figure}[tbp]
\begin{center}
    % 第一行子图
    % \hspace{-0.6cm}
    \subfigure[$\mathbf{W}^{\textnormal{gate}}_{*}$ Frequency]{
        \includegraphics[width=0.326\linewidth]{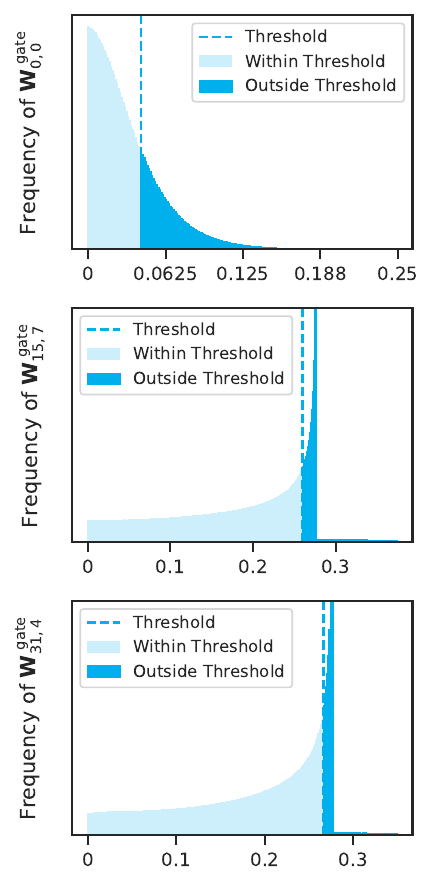}
        \label{subfig:quantization_sensitivity2}
    }%
    % \hspace{-0.2cm}
    \subfigure[$\mathbf{W}^{\textnormal{down}}_{*}$ Frequency]{
        \includegraphics[width=0.326\linewidth]{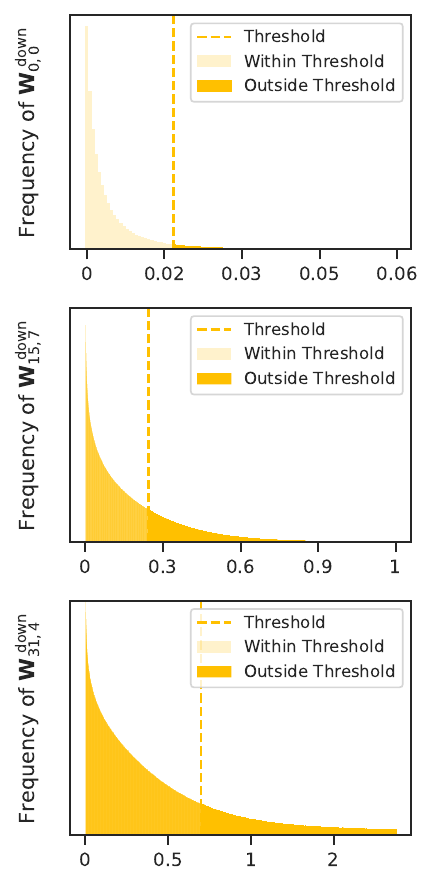}
        \label{subfig:sparsification_sensitivity2}
    }%
    % \hspace{-0.2cm}
    \subfigure[$\mathbf{W}^{\textnormal{up}}_{*}$ Frequency]{
        \includegraphics[width=0.326\linewidth]{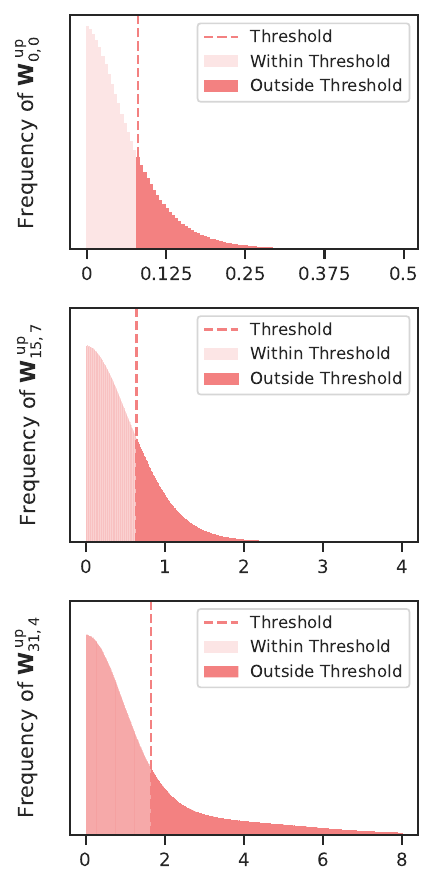}
        \label{subfig:up_activation}
    }%
    % \hspace{-0.7cm}
    % \vskip-0.2in 
    \caption{
\zj{Activation distributions of Mixtral-8$\times$7B’s three hidden states at experts $\mathcal{E}_{0,0}$ (shallow layer), $\mathcal{E}_{15,7}$ (middle layer), and $\mathcal{E}_{35,4}$ (deep layer).}}
    \label{fig:sparsity_ob}
\end{center}
    % \vskip -0.3in
\end{figure}

% zl: need to introduce contextual activation with one phrase/sentence
% zj [done]
\zj{Contextual activation sparsity reduces model computations dependent on low-magnitude, non-salient contextual activations by pruning the corresponding channel weights, with minimal impact on model performance~\citep{liu2023deja,lee2024cats,liu2024teal}.
However, the MoE model already performs sparse activation inference through the router, selecting the experts to participate in the computation based on the given context.
}
% However, the MoE model has already undergone sparse activation inference based on context through the router.

\begin{question}\label{question_1}
\textit{
\zj{Does internal sparsity in experts of MoE models exist and persist consistently across layers?}}
\end{question}

\begin{observation}\label{observation_1}
\textit{
\zj{The experts within a sparsely activated MoE model maintain a 
high 
% degree of 
internal sparsity across layers.}}
\end{observation}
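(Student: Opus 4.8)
The plan is to substantiate \cref{observation_1} \emph{empirically} rather than by formal derivation, since ``internal (intra-expert) sparsity'' is a statistical property of the intermediate activations induced by the real input distribution, not an algebraic identity. First I would fix an operational definition. For a SwiGLU expert $\mathcal{E}_{ij}$ on input $\mathbf{x}$, write the up-projection output as $\mathbf{u}_{ij}=\mathbf{W}^{\textnormal{up}}_{ij}\mathbf{x}\in\mathbb{R}^{d}$ and the gated intermediate state as $\mathbf{h}_{ij}=\phi(\mathbf{W}^{\textnormal{gate}}_{ij}\mathbf{x})\odot\mathbf{u}_{ij}$, with $\phi$ the SwiGLU gating nonlinearity. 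Call channel $k$ \emph{non-salient} if $|\mathbf{u}_{ij}[k]|$ (equivalently $|\mathbf{h}_{ij}[k]|$) lies below the $(1-\rho)$-quantile of its peers, and define the contextual sparsity $\rho^{\star}_{ij}(\mathbf{x};\varepsilon)$ as the largest $\rho$ for which zeroing all non-salient channels changes $\|\mathbf{h}_{ij}\|_{1}$ — or, more conservatively, the expert output $\mathbf{W}^{\textnormal{down}}_{ij}\mathbf{h}_{ij}$ — by at most a fraction $\varepsilon$. I would estimate this by forwarding a calibration corpus (a mixture of C4, WikiText-2, and the downstream benchmark prompts used in \cref{sec:evaluation}), accumulating per-channel magnitudes over the tokens routed to each expert, and reading off the sorted-magnitude curves as plotted in \cref{fig:sparsity_ob}.

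Second, to establish that the sparsity \emph{exists and persists across layers} (\cref{question_1}), I would compute $\rho^{\star}_{ij}$ for every expert $\mathcal{E}_{ij}$, $i\in\{0,\dots,m-1\}$, $j\in\{0,\dots,n-1\}$, aggregate within each layer (inter-expert mean and spread), and plot the aggregate against depth $i$. \cref{fig:sparsity_ob} already shows the qualitative shape at a shallow ($\mathcal{E}_{0,0}$), a middle ($\mathcal{E}_{15,7}$), and a deep ($\mathcal{E}_{35,4}$) expert; the full sweep over all $mn$ experts would show that the heavy concentration of activation mass onto a small channel subset is a layer-wide phenomenon, not an artifact of three hand-picked layers. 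To make the observation \emph{actionable} rather than merely descriptive, I would add a downstream control: apply the contextual pruning (skipping the corresponding rows of $\mathbf{W}^{\textnormal{gate}}_{ij}$ and columns of $\mathbf{W}^{\textnormal{down}}_{ij}$) token-by-token at inference time and check that perplexity and task accuracy move only within the $4.4\%\sim7.6\%$ band reported in the abstract — this certifies that the measured smallness is genuinely removable redundancy and not functionally important signal that merely happens to be small.

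The hard part will be disentangling \emph{genuine removable sparsity} from mere numerical smallness — precisely the difficulty that does not arise for ReLU MLPs (where non-activation is exact) but does for SwiGLU (where no coordinate is exactly zero and $\phi$ only attenuates). Three controls are needed. (i) The per-channel ranking is \emph{input-dependent}: a channel that is non-salient on average may be salient for a non-trivial minority of tokens, so $\rho^{\star}_{ij}$ must be measured and the pruning applied \emph{per token}, not on aggregated statistics. (ii) Shallow-layer experts are the delicate case: early LLM layers tend to carry broadly spread, near-low-rank signal, so I expect shallow-layer sparsity to be somewhat lower than deep-layer sparsity, and the word ``high'' in \cref{observation_1} is defensible only if even this worst case clears a useful threshold (roughly $\rho^{\star}\gtrsim 0.5$ at $\varepsilon$ of a few percent). (iii) The conclusion could be an idiosyncrasy of Mixtral-8$\times$7B; I would replicate on at least one other SwiGLU-based MoE (e.g.\ a Phi- or DeepSeek-style model) to argue the effect is architectural rather than model-specific. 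With these controls in place, \cref{observation_1} follows by reading off the aggregated depth-versus-sparsity curves.
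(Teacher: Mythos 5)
Your proposal is correct and matches the paper's own treatment: the paper supports this observation empirically, by plotting magnitude distributions of the gate/up/down activations on C4 for representative shallow, middle, and deep experts (\cref{fig:sparsity_ob}), replicating on Phi-3.5-MoE and DeepSeek-V2 in the appendix, and then certifying that the smallness is removable via the sparsification-sensitivity perplexity and downstream-task experiments — exactly the measurement-plus-pruning-control design you describe. The only minor difference is that the paper uses a fixed per-expert threshold calibrated offline from the empirical CDF (\cref{eq:silu_2}) and samples a few representative experts rather than sweeping all $mn$ experts with per-token quantiles, but this does not change the substance of the argument.
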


% We conducted a preliminary study into the activation distribution characteristics within experts. We collected output activations from the $\mathbf{W}^{\textnormal{gate}}$ and $\mathbf{W}^{\textnormal{up}}$ matrices, as well as input hidden states to the $\mathbf{W}^{\textnormal{down}}$, of the Mixtral-8$\times$7B~\citep{albert2024mixtral} model on the C4 dataset~\citep{2019t5c4}, and visualized them in~\cref{fig:sparsity_ob}\footnote{We also evaluate Phi-3.5-MoE-instruct~\cite{abdin2024phi3technicalreporthighly} and DeepSeek-V2-Lite~\citep{deepseekv2}, with the results provided in the appendix.}. Consistent with observations made by CATS~\citep{lee2024cats} and TEAL~\citep{liu2024teal} in dense LLMs, we found that many of the activations are concentrated around zero. Therefore, during the forward pass, setting these activations to exact zero values can eliminate the need for corresponding weight computations and transfers within experts during inference. This concentration of activations around zero motivates our adoption of a magnitude-based activation sparse strategy.
\zyx{
We conducted a preliminary study on the activation distribution within experts, analyzing output activations from the $\mathbf{W}^{\textnormal{gate}}$ and $\mathbf{W}^{\textnormal{up}}$ matrices, and input hidden states to the $\mathbf{W}^{\textnormal{down}}$ matrix of the Mixtral-8$\times$7B~\citep{albert2024mixtral} model on the C4 dataset~\citep{2019t5c4}, visualized in~\cref{fig:sparsity_ob}\footnote{Phi-3.5-MoE-instruct~\cite{abdin2024phi3technicalreporthighly} and DeepSeek-V2~\citep{deepseekv2} are validated in the~\cref{sec:sparAppen}.
}. Consistent with findings from CATS~\citep{lee2024cats} and TEAL~\citep{liu2024teal}, we observed that many activations are concentrated around zero. This concentration motivates the use of a magnitude-based activation sparse strategy, where activations close to zero are set to exactly zero, eliminating corresponding weight computations and transfers during inference.}
% 上面三句太重复了，说了3次zero,Therefore, during the forward pass, magnitude-based activation 放到前面明确解释怎么做 sparse strategy can be use to set...大概这么缩一下

% Threshold-based Truncation Sparsity via up projection Activations

Given an input vector $\mathbf{x}$ and three projection weight matrices $\mathbf{W}^{\textnormal{gate}}, \mathbf{W}^{\textnormal{down}}, \mathbf{W}^{\textnormal{up}}$ in an expert $\mathcal{E}$, the corresponding activation output $\mathbf{a}_\mathcal{E}$ is computed as following forward pass:
\begin{equation}
\mathbf{a}_{\mathcal{E}}(\mathbf{x}) := \big(\texttt{SiLU}(\mathbf{x} \mathbf{W}^{\textnormal{gate}}) \odot (\mathbf{x} \mathbf{W}^{\textnormal{up}})\big) \mathbf{W}^{\textnormal{down}},
\label{eq:expert}
\end{equation}
\begin{equation}
\texttt{SiLU}(\mathbf{x}) := \mathbf{x} \cdot \sigma(\mathbf{x}) = \frac{\mathbf{x}}{1 + e^{-\mathbf{x}}},
\label{eq:silu}
\end{equation}
where $\odot$ denotes the Hardmard product and $\texttt{SiLU}(\cdot)$ is the activation function.
% ToDO
Therefore, magnitude-based sparsity can be determined from the outputs of the \texttt{SiLU} activation function, \(\mathbf{W}^{\textnormal{up}}\), and the inputs to \(\mathbf{W}^{\textnormal{down}}\). We define three activation functions:
\begin{equation}
\mathbf{a}_{\textnormal{gate}}(\mathbf{x}) = \texttt{SiLU}(\mathbf{x}\mathbf{W}^{\textnormal{gate}}), \quad 
\mathbf{a}_{\textnormal{up}}(\mathbf{x}) = \mathbf{x}\mathbf{W}^{\textnormal{up}},
\label{eq:activation}
\end{equation}
\begin{equation}
\mathbf{a}_{\textnormal{down}}(\mathbf{x}) = \mathbf{a}_{\textnormal{gate}}(\mathbf{x}) \odot \mathbf{a}_{\textnormal{up}}(\mathbf{x}),
\label{eq:up}
\end{equation}

and produce the following sparsity function:
\begin{equation}
\mathtt{S}_t(\mathbf{a}(\mathbf{x})) = 
\begin{cases} 
\mathbf{a}(\mathbf{x}), & \textnormal{if } |\mathbf{a(\mathbf{x})}| \geq t, \\
0, & \textnormal{if } |\mathbf{a(\mathbf{x})}| < t.
\end{cases}
\label{eq: s_t def}
\end{equation}
Here, \(\mathbf{a} \in \{\mathbf{a}_{\textnormal{gate}}, \mathbf{a}_{\textnormal{up}}, \mathbf{a}_{\textnormal{down}}\}\). The threshold \(t\) is derived from the sampled dataset based on the desired sparsity ratio:
\begin{equation}
t := \min\{t' : F(t') \geq k\},
\label{eq:silu_2}
\end{equation}
where \(F(\cdot)\) represents the empirical cumulative distribution function of absolute activation values for each expert, and \(k\) specifies the target sparsity ratio (e.g., 70\%). The distribution is empirically estimated offline using activations sampled from a general text corpus.

To evaluate the impact of magnitude-based activation pruning on model performance, we set thresholds for the outputs of the \texttt{SiLU} activation function, \(\mathbf{W}^{\textnormal{up}}\), and the inputs to \(\mathbf{W}^{\textnormal{down}}\) at various sparsity levels, then measured text perplexity on WikiText-2~\citep{wikitext2}. As shown in \cref{fig:sensitivity}(a), we find that pruning based on the \(\mathbf{W}^{\textnormal{down}}\) inputs is the least sensitive to sparsity: at 50\% sparsity, the perplexity increases by only about 0.5\%, and even at 90\% sparsity, perplexity remains relatively stable. In contrast, pruning the \(\mathbf{W}^{\textnormal{up}}\) outputs is slightly more sensitive, where 80\% sparsity roughly matches the 90\% sparsity level of the \(\mathbf{W}^{\textnormal{down}}\) inputs. Pruning the SiLU outputs is the most sensitive, pushing perplexity above 7 at 70\% sparsity. We provide further evaluations on downstream tasks in~\cref{exp:accuracy} and have a theoretical interpretation for this phenomenon (Refer to~\cref{sec: preliminary} for more details):
% and a formal proof in~\cref{sec: preliminary}. The following is an informal statement:
% \label{thm: three loss}
\newcommand{\informaltheorem}{
\begin{theorem}[informal]
    % Let \(\mathbf{a}_{\textnormal{gate}} \in \mathbb{R}^m\) and \(\mathbf{a}_{\textnormal{up}} \in \mathbb{R}^m\) be the activations after the \texttt{SiLU}
    % function 
    % and the up projection, respectively, and define \(\mathbf{a}_{\textnormal{down}} = \mathbf{a}_{\textnormal{gate}} \odot \mathbf{a}_{\textnormal{up}}\). Let \(\mathbf{W}^{\textnormal{down}} \in \mathbb{R}^{m \times n}\) be the weight matrix for the down projection. 
    From the definition of \(\mathtt{S}_t\) in~\cref{eq: s_t def}, we define:
    \begin{align}
        \mathcal{L}_{\textnormal{down}} 
        &= \mathbb{E}\bigl\|\bigl(\mathbf{a}_{\textnormal{down}} - \mathtt{S}_t(\mathbf{a}_{\textnormal{down}})\bigr)\,\mathbf{W}^{\textnormal{down}}\bigr\|_2^2,\\
        \mathcal{L}_{\textnormal{up}} 
        &= \mathbb{E}\bigl\|\bigl(\mathbf{a}_{\textnormal{down}} - \mathbf{a}_{\textnormal{gate}} \odot \mathtt{S}_t(\mathbf{a}_{\textnormal{up}})\bigr)\,\mathbf{W}^{\textnormal{down}}\bigr\|_2^2,\\
        \mathcal{L}_{\textnormal{gate}} 
        &= \mathbb{E}\bigl\|\bigl(\mathbf{a}_{\textnormal{down}} - \mathtt{S}_t(\mathbf{a}_{\textnormal{gate}}) \odot \mathbf{a}_{\textnormal{up}}\bigr)\,\mathbf{W}^{\textnormal{down}}\bigr\|_2^2.
    \end{align}
    Then under assumptions consistent with experimental observations, we have
    \begin{equation}
        \mathcal{L}_{\textnormal{down}} \;\leq\; \mathcal{L}_{\textnormal{up}} \;<\; \mathcal{L}_{\textnormal{gate}}.
    \end{equation}
\end{theorem}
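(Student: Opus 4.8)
The plan is to reduce all three quantities to a single ``energy removed from $\mathbf{a}_{\textnormal{down}}$'' form and then compare the three pruning masks. Since $\mathbf{a}_{\textnormal{down}} = \mathbf{a}_{\textnormal{gate}} \odot \mathbf{a}_{\textnormal{up}}$, in each scheme the affected coordinates of $\mathbf{a}_{\textnormal{down}}$ are set to exactly zero: pruning $\mathbf{a}_{\textnormal{down}}$ directly zeros coordinate $i$ when $|\mathbf{a}_{\textnormal{down},i}|<t$; pruning $\mathbf{a}_{\textnormal{up}}$ leaves the residual $\mathbf{a}_{\textnormal{gate},i}\bigl(\mathbf{a}_{\textnormal{up},i}-\mathtt{S}_t(\mathbf{a}_{\textnormal{up}})_i\bigr)=\mathbf{a}_{\textnormal{down},i}\,\mathbf{1}[\,|\mathbf{a}_{\textnormal{up},i}|<t\,]$, and symmetrically for $\mathbf{a}_{\textnormal{gate}}$. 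Hence each loss equals $\mathbb{E}\|(\mathbf{a}_{\textnormal{down}} \odot \mathbf{m}_{\bullet})\mathbf{W}^{\textnormal{down}}\|_2^2$ for a $0/1$ mask $\mathbf{m}_{\bullet}$ selecting the coordinates whose governing activation lies below the threshold from~\cref{eq:silu_2} at the shared target sparsity $k$, so all three masks have the same expected number of ones. Invoking the assumptions of~\cref{sec: preliminary} (matching~\cref{fig:sparsity_ob}) that the rows of $\mathbf{W}^{\textnormal{down}}$ have comparable norm and that the coordinates of $\mathbf{a}_{\textnormal{down}}$ are, conditioned on the mask, approximately uncorrelated, the $\mathbf{W}^{\textnormal{down}}$ factor becomes a common constant and cross-terms drop; the problem reduces to comparing $\widetilde{\mathcal{L}}_{\bullet} := \mathbb{E}\sum_i \mathbf{a}_{\textnormal{down},i}^2\,\mathbf{m}_{\bullet,i}$.

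For $\mathcal{L}_{\textnormal{down}} \le \mathcal{L}_{\textnormal{up}}$ and $\mathcal{L}_{\textnormal{down}} \le \mathcal{L}_{\textnormal{gate}}$ I would run a rearrangement/exchange argument: $\mathbf{m}_{\textnormal{down}}$ retains exactly the coordinates with the smallest $\mathbf{a}_{\textnormal{down},i}^2$, so for any competing mask $\mathbf{m}$ every coordinate where $\mathbf{m}$ is $1$ and $\mathbf{m}_{\textnormal{down}}$ is $0$ carries weight $\ge t_{\textnormal{down}}^2$, while every coordinate where $\mathbf{m}_{\textnormal{down}}$ is $1$ and $\mathbf{m}$ is $0$ carries weight $< t_{\textnormal{down}}^2$; this yields $\sum_i \mathbf{a}_{\textnormal{down},i}^2(\mathbf{m}_i-\mathbf{m}_{\textnormal{down},i}) \ge t_{\textnormal{down}}^2(\|\mathbf{m}\|_0-\|\mathbf{m}_{\textnormal{down}}\|_0)$ pointwise, and taking expectations (all masks having equal expected size) kills the right-hand side. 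This step is essentially mechanical.

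The crux is $\mathcal{L}_{\textnormal{up}} < \mathcal{L}_{\textnormal{gate}}$. Here I would assume, per coordinate, that $\mathbf{a}_{\textnormal{gate},i}$ and $\mathbf{a}_{\textnormal{up},i}$ are (approximately) independent, as they come from distinct projections of the shared hidden state; then $\widetilde{\mathcal{L}}_{\textnormal{up}} = \sum_i \mathbb{E}[\mathbf{a}_{\textnormal{gate},i}^2]\,\mathbb{E}[\mathbf{a}_{\textnormal{up},i}^2\,\mathbf{1}[|\mathbf{a}_{\textnormal{up},i}|<t_{\textnormal{up}}]]$ and $\widetilde{\mathcal{L}}_{\textnormal{gate}} = \sum_i \mathbb{E}[\mathbf{a}_{\textnormal{up},i}^2]\,\mathbb{E}[\mathbf{a}_{\textnormal{gate},i}^2\,\mathbf{1}[|\mathbf{a}_{\textnormal{gate},i}|<t_{\textnormal{gate}}]]$, so it suffices that the normalized truncated second moment $\rho(X):=\mathbb{E}[X^2\,\mathbf{1}[|X|<\tau_X]]/\mathbb{E}[X^2]$ (with $\tau_X$ the sparsity-$k$ threshold) satisfies $\rho(\mathbf{a}_{\textnormal{up},i}) < \rho(\mathbf{a}_{\textnormal{gate},i})$, i.e.\ a larger share of $\mathbf{a}_{\textnormal{gate}}$'s energy sits below its $k$-quantile. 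I would obtain this from the structural effect of $\texttt{SiLU}$: writing $\mathbf{a}_{\textnormal{gate},i}=\texttt{SiLU}(g_i)$ with $g_i$ comparable in law to $\mathbf{a}_{\textnormal{up},i}$, $\texttt{SiLU}$ contracts the large negative values of $g_i$ toward $0$ while acting nearly linearly on the positive side, disproportionately shrinking the large-magnitude coordinates that dominate $\mathbb{E}[g_i^2]$ and hence raising $\rho$. Formally I would encode this as a (strict, on a positive-measure set) star-shaped ordering of the normalized quantile functions of $|\mathbf{a}_{\textnormal{up}}|$ and $|\mathbf{a}_{\textnormal{gate}}|$ — a property directly checkable against~\cref{fig:sparsity_ob} — and then finish with the one-line identity $\rho(X)=\int_0^k q_X(u)^2\,du\big/\int_0^1 q_X(u)^2\,du$ ($q_X$ the quantile function of $|X|$) together with that ordering.

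The main obstacle is this last step: choosing the distributional hypothesis that is genuinely implied by the measured activation statistics, strong enough to force $\rho(\mathbf{a}_{\textnormal{up}})<\rho(\mathbf{a}_{\textnormal{gate}})$, yet not so strong that it trivializes the claim — naive Gaussian surrogates for $\mathbf{a}_{\textnormal{up}}$ can invert this inequality, and it is the heavy-tailed, outlier-rich character of real transformer activations, one side of which $\texttt{SiLU}$ clips, that makes the stated ordering hold, so the assumption must capture that. The independence factorization and the $\mathbf{W}^{\textnormal{down}}$-decoupling are likewise modeling assumptions rather than facts, and I would add a remark quantifying that the neglected cross-terms are second-order relative to the diagonal energy.
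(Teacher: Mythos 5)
Your reduction steps track the paper's own proof closely: rewriting each loss as the energy of a masked copy of $\mathbf{a}_{\textnormal{down}}$ (the paper uses $\mathbf{a}_{\textnormal{down}}-\mathbf{a}_{\textnormal{gate}}\odot\mathtt{S}_t(\mathbf{a}_{\textnormal{up}})=\mathbf{a}_{\textnormal{gate}}\odot(\mathbf{a}_{\textnormal{up}}-\mathtt{S}_t(\mathbf{a}_{\textnormal{up}}))$), decoupling $\mathbf{W}^{\textnormal{down}}$ via i.i.d.\ Gaussian weights and vanishing cross terms (\cref{lemma: basic expectation}, \cref{lemma: mean zero}), the exchange argument for $\mathcal{L}_{\textnormal{down}}\le\mathcal{L}_{\textnormal{up}},\mathcal{L}_{\textnormal{gate}}$ (the paper settles this with a one-line pointwise-optimality claim; your expectation-level exchange handling unequal per-realization support sizes is, if anything, more careful), and the independence factorization reducing $\mathcal{L}_{\textnormal{up}}$ vs.\ $\mathcal{L}_{\textnormal{gate}}$ to comparing normalized truncated second moments. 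Up to that point the proposal is sound.

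The genuine gap is at the step you yourself call the crux: you never prove $\rho(\mathbf{a}_{\textnormal{up}})<\rho(\mathbf{a}_{\textnormal{gate}})$, but re-postulate it as a ``star-shaped ordering of normalized quantile functions'' to be ``checked against'' \cref{fig:sparsity_ob}. That assumes exactly what the theorem is supposed to deliver, and it replaces a proof by an empirical inspection. The paper closes this by committing to a concrete parametric model consistent with the measurements --- $a_{\textnormal{up},i}\sim\mathcal{N}(0,\sigma_{\textnormal{up}}^2)$ and $a_{\textnormal{gate},i}=x_i-c$ with $x_i$ exponential of rate $\lambda$, capturing the post-\texttt{SiLU} spike near $-c\approx 0.28$ (\cref{fig:gate_no_relu}) --- then computing both truncated second moments in closed form (\cref{lemma: gaussian}, \cref{lemma: exp}) and proving the strict inequality $F(\eta)<G(\eta,p)$ for $p=\lambda c\ge 2$, $\eta\in[e^{-4},1/2]$ via explicit bounds on $q_\eta$, a monotonicity lemma, and endpoint evaluations (\cref{lemma:qgh}, \cref{lemma:fp}, \cref{lemma: F G key}); this is the bulk of the paper's proof and has no counterpart in your plan. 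Moreover, your guiding intuition for how to choose the missing hypothesis points the wrong way: you claim a Gaussian surrogate for $\mathbf{a}_{\textnormal{up}}$ can invert the inequality and that the heavy tails of $\mathbf{a}_{\textnormal{up}}$ are what make it hold, whereas in the paper $\mathbf{a}_{\textnormal{up}}$ is exactly Gaussian and the strict gap is driven entirely by the gate side: \texttt{SiLU} concentrates $a_{\textnormal{gate}}$ near $-c$, so at matched sparsity the pruned gate coordinates carry nearly their proportional share of energy ($G(\eta,p)\approx 1-\eta$ when $\lambda c\gg 1$), while the pruned zero-centered up coordinates are genuinely small ($F(\eta)=1-\eta-2z_\eta\phi(z_\eta)$). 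Without a distributional commitment of this kind and the accompanying calculation, the chain $\mathcal{L}_{\textnormal{down}}\le\mathcal{L}_{\textnormal{up}}<\mathcal{L}_{\textnormal{gate}}$ is not established.
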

}
\informaltheorem
% Although magnitude-based activation pruning using the input to the down-projection layer has been theoretically and empirically proven to be optimal for downstream task performance, its practical application faces significant limitations. The input to the down-projection layer depends on the outputs of both the gate-projection and up projection matrices, which means that leveraging sparsity based on the down-projection input only reduces computation for the final matrix. Furthermore, the introduction of non-linearities through the SiLU activation and Hadamard product complicates the prediction and preloading process during offloading. Both theoretical analysis and downstream task evaluations demonstrate that context-aware sparsity based on the output of the up projection layer outperforms gate-based sparsity at the same sparsity ratio. Therefore, we opt to utilize the sparsity of the up projection output for context-aware sparse.
While the input pruning of down projection shows theoretical optimality for downstream tasks, its effectiveness is constrained by two factors: (1) Dependency on gate/up projection outputs limits computational savings to the final projection, and (2) Non-linear operations (\texttt{SiLU}, Hadamard product) hinder prediction for offloading. 
% Empirical evaluations and theoretical analysis demonstrate that up projection output sparsity provides superior contextual computation reduction at equivalent sparsity levels, motivating our design choice.
\zj{Empirical evaluations and theoretical analysis show that the output sparsity of up projection, compared to the \texttt{SiLU} activation function, yields superior generative performance at equivalent sparsity ratios. This motivates our design to replace the original expert forward pass computation in~\cref{eq:expert} as follows:}
\begin{equation}
\mathbf{a}^{\mathtt{S}}(x) := (\texttt{SiLU}(\mathbf{x} \mathbf{W}^{\textnormal{gate}}) \odot \mathtt{S}_t((\mathbf{x} \mathbf{W}^{\textnormal{up}}))) \mathbf{W}^{\textnormal{down}}
\label{eq:sparse_expert}
\end{equation}

\begin{figure}[tbp]
\begin{center}
    % 第一行子图
    \subfigure[Sparsification Sensitivity]{
        \includegraphics[width=0.445\linewidth]{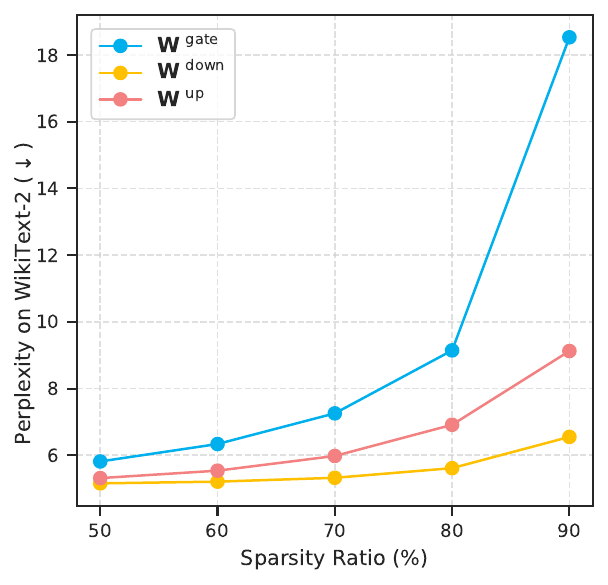}
        \label{subfig:sparsification_sensitivity1}
    }%
    \hfill
    \subfigure[Quantization Sensitivity]{
        \includegraphics[width=0.47\linewidth]{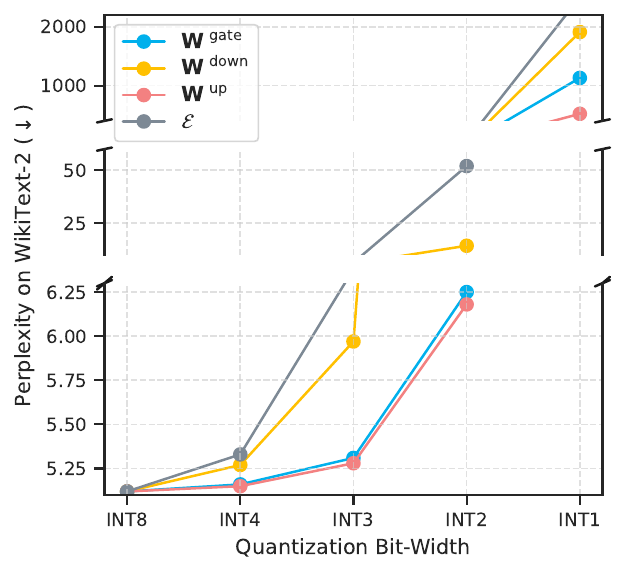}
        \label{subfig:quantization_sensitivity}
    }%
    \vspace{-10pt}
     \caption{Compression sensitivity of expert parameters: (a) Sparsification sensitivity; (b) Quantization sensitivity.}
    \label{fig:sensitivity}
\end{center}
    \vskip -0.2in
\end{figure}
% \subsubsection{ultra-low-bit quantization}
% zj
\subsubsection{Ultra-low-bit Quantization for the Up Projection}
\label{ssec:quantize_up}
% insight2: up_proj is insensitive to ultra-low-bit quantization.
% fig：up_proj down_proj gate_proj different bit quant's ppl
% explain the reason with fig

% Although contextual sparsity activates only 10\% of the weights of gate projection and down projection, the computation of each up projection involves the full set of parameters, as the output activations of up projection serves as the sparsity threshold for truncation. Therefore, we aim to reduce the transmission cost by compressing the up projection using quantization.
% Previous studies have shown that ultra-low bit-width quantization, such as \texttt{INT2} and below, is effective for deploying MoE models on consumer-grade GPUs but results in a significant performance degradation~\citep{mixtral-offloading}.

% zj
\zj{Thanks to contextual sparsity, only 10\% of the weights in the gate and down projections ($\{\mathbf{W}^{\textnormal{gate}},\mathbf{W}^{\textnormal{down}}\}$) are activated.
% , forming two corresponding narrow but dense matrices $\mathbf{W}^{\mathtt{S}\textnormal{(gate)}}$ and $\mathbf{W}^{\mathtt{S}\textnormal{(down)}}$. 
However, the full parameter set of the up projection $\mathbf{W}^{\textnormal{up}}$ is required for computation, as its output activations determine the sparsity threshold for truncating $\mathbf{W}^{\textnormal{gate}}$ and $\mathbf{W}^{\textnormal{down}}$. 
% Prior work~\citep{art2023fasr} has successfully deployed MoE models on consumer-grade devices using ultra-low-bit quantization (e.g., \texttt{INT2} and lower).
% Yet uniform quantization of the three expert matrices leads to substantial performance loss.  
\zl{
As mentioned, prior work~\citep{art2023fasr} deploying MoE models on consumer-grade devices suffers from substantial performance loss due to the uniform ultra-low-bit quantization on the three projection matrices.
}
% zl: I did not see how `furthermore' links the two sentences here.
% zj [done]: seems redundant
% Furthermore, leveraging contextual activation sparsity, we reduce the number of parameters in the gate and down projections while maintaining performance. 
Building upon the fact that the contextual sparsity of gate and down projections has minimal impact on performance, thus alleviating the quantization burden on experts, the following question arises:
}

% zl: the following sentence has a grammar error.
% zj [done]: missing `and'?
\begin{question}\label{question_2} \zj{\textit{Can we quantize only the full up projection, from $\mathbf{W}^{\textnormal{up}}$ to $\mathbf{W}^{\mathtt{Q}\textnormal{(up)}}$, and effectively reverse the inherent performance degradation caused by uniform quantization?}} \end{question}

\begin{observation}\label{observation_2}
\zj{\textit{
The up projection exhibits low sensitivity to ultra-low-bit quantization.}}
\end{observation}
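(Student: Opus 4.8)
The plan is to quantify the expert-output deviation induced by replacing $\mathbf{W}^{\textnormal{up}}$ with its ultra-low-bit quantization $\mathbf{W}^{\mathtt{Q}\textnormal{(up)}}$ and to show that this deviation is both small in absolute terms and strictly smaller than the deviation produced by quantizing $\mathbf{W}^{\textnormal{gate}}$ at the same bit-width. I model a $b$-bit symmetric quantizer by the error matrix $\mathbf{E} := \mathbf{W}^{\textnormal{up}} - \mathbf{W}^{\mathtt{Q}\textnormal{(up)}}$, whose entries I take to be zero-mean, bounded by the half-step $s/2$, of variance $\approx s^2/12$, and pairwise uncorrelated (the standard dithered-quantization assumptions), and I write $u_j := (\mathbf{x}\mathbf{W}^{\textnormal{up}})_j = (\mathbf{a}_{\textnormal{up}})_j$ with $\Delta u_j := (\mathbf{x}\mathbf{E})_j$. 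The decisive structural fact is that the forward pass in~\cref{eq:expert} is \emph{linear} in $\mathbf{W}^{\textnormal{up}}$, so the output perturbation is \emph{exact} rather than first-order: setting $\mathbf{g} := \texttt{SiLU}(\mathbf{x}\mathbf{W}^{\textnormal{gate}})$ and $z_j := (\mathbf{x}\mathbf{W}^{\textnormal{gate}})_j$, we have $\Delta\mathbf{a} = \bigl(\mathbf{g}\odot(\mathbf{x}\mathbf{E})\bigr)\mathbf{W}^{\textnormal{down}}$.

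First I would bound $\mathbb{E}_{\mathbf{E}}\|\Delta\mathbf{a}\|_2^2$. Expanding the square and using that $\mathbf{E}$ is zero-mean and uncorrelated kills the cross terms, leaving a channel-wise sum in which the $j$-th channel enters weighted by $g_j^2 = \texttt{SiLU}(z_j)^2$ and by $\|\mathbf{W}^{\textnormal{down}}_{j,:}\|_2^2\,\mathrm{Var}(\Delta u_j)$. Two effects then make this quantity small. \textbf{(i) Gating attenuation}: by~\cref{observation_1} most gate activations cluster near zero, so the dominant mass of the $g_j^2$ weights is negligible and only the few active channels carry error. \textbf{(ii) Dimensional averaging}: each $\Delta u_j = \sum_i x_i E_{ij}$ is a sum of $d$ independent small perturbations, so $\mathrm{Var}(\Delta u_j)\approx \tfrac{s^2}{12}\|\mathbf{x}\|_2^2$ stays at the noise floor even though the per-weight error $s/2$ is large; consequently the \emph{relative} up-activation error remains $O(2^{-b})$ despite the INT2 setting. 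Together these give $\mathbb{E}\|\Delta\mathbf{a}\|_2^2 \ll \|\mathbf{a}\|_2^2$, which is the quantitative content of ``low sensitivity.''

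To explain why the up projection—rather than the gate—is the right matrix to quantize, I would contrast the two sensitivities. Since the forward pass is nonlinear in $\mathbf{W}^{\textnormal{gate}}$, quantizing the gate gives, to leading order, $\Delta\mathbf{a}^{\textnormal{gate}} = \bigl((\texttt{SiLU}'(\mathbf{z})\odot\mathbf{x}\mathbf{E}^{g})\odot\mathbf{a}_{\textnormal{up}}\bigr)\mathbf{W}^{\textnormal{down}}$, so the per-channel modulation becomes $\texttt{SiLU}'(z_j)\,u_j$ in place of $\texttt{SiLU}(z_j)$. The key asymmetry is that at the abundant near-zero channels $\texttt{SiLU}(z_j)\to 0$ (strong attenuation for up) while $\texttt{SiLU}'(z_j)\to \tfrac12 \neq 0$ (no attenuation for gate, since $\texttt{SiLU}'(0)=\tfrac12$). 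Hence these many channels contribute vanishing terms to $\Delta\mathbf{a}$ but $\Theta(1)$ terms $\sim \tfrac12 u_j (\mathbf{x}\mathbf{E}^{g})_j$ to $\Delta\mathbf{a}^{\textnormal{gate}}$, yielding $\mathbb{E}\|\Delta\mathbf{a}\|_2^2 < \mathbb{E}\|\Delta\mathbf{a}^{\textnormal{gate}}\|_2^2$ under comparable weight scales; this parallels the sparsification ordering of the informal theorem above, now driven by the gate \emph{derivative} rather than the gate value.

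The main obstacle is that INT2 error is \emph{not} infinitesimal, so the gate linearization above is not rigorous and, for the up projection, I must also guard its second role: the magnitudes $|\mathbf{a}_{\textnormal{up}}|$ set the sparsity threshold in~\cref{eq: s_t def} that truncates $\mathbf{W}^{\textnormal{gate}}$ and $\mathbf{W}^{\textnormal{down}}$. I therefore expect to (i) control the nonlinear gate remainder through a second-derivative bound on $\texttt{SiLU}$ over the observed activation range, and (ii) bound the probability that quantization reorders the top-$k$ mask by invoking the empirical activation-gap distribution implied by~\cref{observation_1}. Establishing that the exact, gated, dimension-averaged up error dominates this mask-stability term—and does so uniformly across the near-zero channels where the gate pathway is fragile—is the crux; the remaining estimates are routine once the dithered-noise and activation-concentration assumptions (consistent with experimental observations, as in the informal theorem above) are granted.
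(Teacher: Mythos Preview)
The paper does not prove this analytically. Observation~\ref{observation_2} is established \emph{empirically}: HQQ quantization at bit-widths from \texttt{INT8} down to \texttt{INT1} is applied to each of the three projection matrices in isolation, and WikiText-2 perplexity is reported (\cref{fig:sensitivity}(b), with further MoE models in \cref{sec:quanAppen}). The accompanying ``Analysis'' paragraph is purely qualitative, invoking the key--value memory view of MLPs (down stores knowledge and so needs precision; gate is penalized by the SwiGLU nonlinearity). No perturbation bound is derived.

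Your route is therefore genuinely different and considerably more ambitious: a perturbation-theoretic justification the paper never attempts. The exact identity $\Delta\mathbf{a} = (\mathbf{g}\odot(\mathbf{x}\mathbf{E}))\mathbf{W}^{\textnormal{down}}$ from linearity in $\mathbf{W}^{\textnormal{up}}$ is correct and is the strongest part of your sketch. Two gaps remain. First, you never analyze quantizing $\mathbf{W}^{\textnormal{down}}$, which the paper's data identify as the \emph{most} sensitive matrix; without that comparison you establish at best ``up $\lesssim$ gate,'' not ``up is least sensitive.'' Second, the up-versus-gate ordering rests on a distributional hypothesis you have not stated: to conclude that $\mathbb{E}[\texttt{SiLU}(z_j)^2]\,\mathrm{Var}(\Delta u_j) < \mathbb{E}[(\texttt{SiLU}'(z_j)\,u_j)^2]\,\mathrm{Var}(\Delta z_j)$ you need, at minimum, independence of $z_j$ and $u_j$ and comparable quantizer step sizes across the two matrices---and even then the inequality is a computation on the specific activation laws, precisely the kind of assumption the paper's sparsification theorem (\cref{thm: formal}) has to impose and verify by hand. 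Noting that $\texttt{SiLU}'(0)=\tfrac12$ while $\texttt{SiLU}(0)=0$ is suggestive but not sufficient, since the gate-side modulation carries the factor $u_j$, which by \cref{observation_1} is itself concentrated near zero. The issues you flag yourself (first-order linearization of the gate pathway at \texttt{INT2} scale, mask stability under \cref{eq: s_t def}) are real but secondary to these two structural omissions.
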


% \textbf{Question 2:} Will applying ultra-low-bit quantization only to the up
% projection of the experts lead to a performance decline?

% \textbf{Observation2:} The ultra-low-bit quantization of the up
% projection shows little sensitivity.

% \begin{question}\label{question_2}
% \textit{
% Will applying ultra-low-bit quantization only to the up
% projection of the experts lead to a performance decline?}
% \end{question}

% zj:
We employ Half-Quadratic Quantization (HQQ)~\citep{badri2023hqq} with various bit-widths to quantize the three projection matrices within each expert of Mixtral~8$\times$7B and evaluate their quantization sensitivity using perplexity on WikiText-2~\citep{wikitext2}\footnote{Phi-3.5-MoE-instruct~\cite{abdin2024phi3technicalreporthighly}, DeepSeek-MoE-16B-Base~\citep{dai2024deepseekmoe} and Qwen1.5-MoE-A2.7B~\citep{qwen_moe} are validated in the~\cref{sec:quanAppen}.}. As shown in \cref{fig:sensitivity}(b), quantizing the projection matrices at \texttt{INT8} and \texttt{INT4} results in minimal performance impact, with perplexity changes under 3\%. At \texttt{INT3} and \texttt{INT2}, perplexity increases, with the down projection exhibiting the most significant change, followed by the gate projection, while the up projection remains the least sensitive. At \texttt{INT1}, up projection quantization yields only 46.01\% of the perplexity of gate projection quantization and 27.23\% of the perplexity for down projection quantization. Across all bit-widths, the up projection consistently shows the lowest perplexity. 
%gate is lowest in llama
% Surprisingly, this trend is also observed in dense architectures, with corresponding results provided in \cref{sec:quanAppen}.
% We conducted an initial study on the quantization sensitivity of the three projection matrices within the experts. We quantized the gate projection, down projection, and up projection matrices using different bit-widths with Half-Quadratic Quantization(HQQ)~\citep{badri2023hqq} method and evaluated the text perplexity on WikiText-2~\citep{wikitext2}. 
% As shown in Figure \ref{subfig:quantization_sensitivity}, quantization of individual projection matrices within the expert at \texttt{INT8} and \texttt{INT4} bit-widths has minimal impact on the model's performance, with perplexity changes smaller than 3\%. 
% At \texttt{INT3} and \texttt{INT2} bit-widths, the text perplexity increases, with the down projection showing the most noticeable change, followed by the gate projection, while the up projection remains the least sensitive. 
% At \texttt{INT1} quantization, the perplexity for up projection quantization is only 46.01\% of perplexity for gate projection quantization and 27.23\% of perplexity for down projection quantization. Across all bit-widths, up projection consistently maintains the lowest perplexity. These findings hold true for dense architectures as well, and we provide the corresponding results in~\cref{sec:quanAppen}.

\textbf{Analysis:}  
Some works~\citep{geva-etal-2021-transformer,yu-ananiadou-2024-neuron} 
% consider 
treat
the MLP layer
(i.e.,~the expert in MoE)
as a key-value memory model, where the up and gate projections serve as keys to selectively activate the values in the down projection, which stores knowledge related to the input. This theory aligns with our experimental results.
The down projection, storing knowledge as values, requires higher precision than the gate and up projection, as evidenced by its significant performance degradation across different quantization bit-widths. The gate projection, influenced by nonlinear activations, e.g.,~SwiGLU~\citep{shazeer2020glu}, demonstrates greater sensitivity at ultra-low bit-widths (\texttt{INT2}, \texttt{INT1}).

\textbf{Implementation:}  
The observation and analysis of quantization sensitivity above suggest that the up projection is the least sensitive to quantization, and therefore we choose to apply the \texttt{INT2} of HQQ method for its compression.

% Although contextual sparsity activates only 10\% of the neurons, the computation of each update (up) involves the full set of parameters, as up serves as the sparsity criterion. Therefore, we choose to reduce the transmission cost by applying ultra-low-bit quantization to the up matrix, which compresses it accordingly.

% conlusion: During the inference , for the activated expert parameters, our solution reduces the size by x% through sparsity compression and by x% through quantization compression, ultimately compressing a single expert by x%.

\subsection{Expert Sparsity Prediction}
\label{sec:predictor}

\zj{
In MoE models, each MoE layer uses a router to determine activated experts for each input hidden state $\mathbf{x}$, followed by a forward pass according to~\cref{eq:expert}. 
Although the hybrid compression reduces per-transfer cost for MoE models, the pipelining between transfer and computation is prevented due to the sequential execution of routing, quantized up projection computation, and DRAM expert fetching, inhibiting on-the-fly inference.
% While sparsely activated experts and expert hybrid compression (see~\cref{sec:hybridcompression}) reduce per-transfer costs, waiting for the router to select activated experts, then for the quantized up projection in activated experts (refer to~\cref{ssec:quantize_up}) to compute contextual sparse activations (refer to~\cref{ssec:sparse_gatedown}), and finally fetching the \textit{activated compressed experts} from DRAM, prevents communication and computation from overlapping, thus failing to achieve \textit{on-the-fly} inference.
Recalling sparsity prediction in dense LLMs~\citep{liu2023deja,Lee2024InfiniGen}, the residual structure of the model leads to high similarity between hidden state inputs before consecutive MLP layers. 
This allows the hidden state of the $i$-th MLP layer to be fed into a trained predictor to forecast the sparsity distribution for the $(i+1)$-th layer. 
Inspired by this approach and considering the same inputs to the router and quantized up projection, we pose the following question:}

\begin{question}\label{question_3} \textit{
% \zj{Can the hidden states $\mathbf{x}$ before the $i$-th MoE layer be fed to two sparsity predictors, replacing the router and up projection computations to prefetch the activated compressed experts for the $(i+1)$-th layer?}
\zl{
Can the hidden states $\mathbf{x}$ of the existing layer be used in sparsity prediction in prefetching the activated compressed experts for the successive layer, replacing the router and up projection computations?
}
} 
\end{question}
 
\zj{Fortunately, in sparse MoE models, we empirically validate the core principle behind sparsity prediction:}
\begin{observation}\label{observation_3} \textit{\zj{The hidden states input to the router and up projection in consecutive MoE layers exhibit high similarity.}} \end{observation}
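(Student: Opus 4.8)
The plan is to combine a structural argument about the transformer residual stream with a targeted empirical measurement, since Observation~\ref{observation_3} is a statement about the geometry of hidden states in a concrete family of trained models rather than a purely formal claim. First I would fix notation: let $\mathbf{x}^{(i)}$ denote the hidden state fed to the router --- equivalently, to $\mathbf{W}^{\textnormal{up}}$ --- at MoE layer $i$. In a pre-norm transformer the residual path satisfies $\mathbf{x}^{(i+1)} = \mathbf{x}^{(i)} + \boldsymbol{\delta}^{(i)}$, where $\boldsymbol{\delta}^{(i)}$ collects the attention-block and MoE-block outputs produced between the two consecutive routing points, and every LayerNorm acts inside a block rather than on the residual stream itself. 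The quantity of interest, the cosine similarity $\cos\angle\!\bigl(\mathbf{x}^{(i)},\mathbf{x}^{(i+1)}\bigr)$, can then be controlled purely through the relative update magnitude $r_i := \lVert\boldsymbol{\delta}^{(i)}\rVert_2 / \lVert\mathbf{x}^{(i)}\rVert_2$: writing $\rho_i$ for the alignment $\langle \mathbf{x}^{(i)},\boldsymbol{\delta}^{(i)}\rangle / (\lVert\mathbf{x}^{(i)}\rVert_2\lVert\boldsymbol{\delta}^{(i)}\rVert_2)$, a one-line computation gives $\cos\angle\!\bigl(\mathbf{x}^{(i)},\mathbf{x}^{(i+1)}\bigr) = (1 + r_i\rho_i)/\sqrt{1 + 2 r_i\rho_i + r_i^2}$, which is $1 - O(r_i^2)$ whenever $r_i$ is bounded away from $1$. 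So it suffices to show that $r_i$ is small for the layers we care about.

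Next I would argue that $r_i$ is indeed small for the bulk of the layers. Structurally this is the well-documented ``residual stream growth'' behavior of deep transformers: $\lVert\mathbf{x}^{(i)}\rVert_2$ accumulates roughly additively across layers, while each block output $\boldsymbol{\delta}^{(i)}$ stays comparatively bounded --- the MoE output is a top-$k$ convex combination of individually bounded expert outputs, and the attention output is a convex combination of value vectors --- so $r_i$ decays on the order of $1/i$ after the first few layers. I would make this quantitative by measuring the empirical distribution of $r_i$ layer-by-layer on a calibration corpus such as C4, in the same spirit as~\cref{fig:sparsity_ob}, and then feed the measured $r_i$ into the bound above to obtain the claimed high similarity for all layers except the earliest ones.

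The hard part is twofold. First, in the first handful of layers $r_i = \Theta(1)$ and the cosine bound is vacuous; here I would simply restrict the statement, as the prefetching pipeline of~\cref{sec:predictor} already does, to the layers actually used. Second, and more subtly, a single outlier coordinate of $\boldsymbol{\delta}^{(i)}$ can dominate its $\ell_2$ norm and make a worst-case bound pessimistic even when the functionally relevant directions are stable, so raw cosine similarity of $\mathbf{x}^{(i)}$ is arguably the wrong end quantity. The fix is to measure --- and ultimately to prove stability of --- exactly what the two predictors in~\cref{sec:predictor} consume: the top-$k$ routing decision and the support pattern of $\mathtt{S}_t(\mathbf{x}\mathbf{W}^{\textnormal{up}})$ from~\cref{eq: s_t def}, under the substitution $\mathbf{x}^{(i)} \mapsto \mathbf{x}^{(i+1)}$. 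Turning ``the inputs are close'' into ``these discrete decisions agree'' is then a standard Lipschitz/perturbation argument: the router logits are linear in the input and the thresholding map $\mathtt{S}_t$ is $1$-Lipschitz away from its boundary, so decisions flip only when $\lVert\boldsymbol{\delta}^{(i)}\rVert_2$ exceeds the relevant stability margin --- the gap between the $k$-th and $(k{+}1)$-th largest routing logits, respectively the gap of $|\mathbf{a}_{\textnormal{up}}|$ around the threshold $t$. I expect that checking these margins are large relative to the observed $\lVert\boldsymbol{\delta}^{(i)}\rVert_2$ on the calibration data --- rather than the cosine estimate itself --- is where most of the work lies, and it is also the point at which the claim could fail for adversarial inputs even though it holds in aggregate.
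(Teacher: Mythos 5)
Your plan is consistent with what the paper actually does, but it is considerably more elaborate: the paper treats Observation~\ref{observation_3} as a purely empirical claim and validates it directly, by sampling 100 ShareGPT sequences of length 256, feeding them through Mixtral-8$\times$7B, and measuring the average cosine similarity between the hidden states entering consecutive MoE layers (above $0.95$ everywhere except layer 0, shown in \cref{fig:cos-sim}); it then addresses your ``hard part'' --- whether closeness of inputs translates into agreement of the discrete routing and sparsity decisions --- not by a Lipschitz/margin argument but again by direct measurement, reporting the inter-expert predictor's recall ($\approx 0.88$ average precision noted in the text) and the intra-expert predictor's recall ($\approx 0.95$) in the same figure. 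Your residual-stream decomposition $\mathbf{x}^{(i+1)}=\mathbf{x}^{(i)}+\boldsymbol{\delta}^{(i)}$ with the exact identity $\cos\angle(\mathbf{x}^{(i)},\mathbf{x}^{(i+1)})=(1+r_i\rho_i)/\sqrt{1+2r_i\rho_i+r_i^2}$ is correct and gives a cleaner \emph{explanation} of why the similarity is high (it is the same residual-structure intuition the paper borrows from Deja~Vu-style sparsity prediction), and your margin-stability analysis is exactly the right refinement if one wanted a guarantee rather than an aggregate statistic; the paper buys simplicity by skipping both and measuring the end quantities it cares about. Two caveats on your route: the router/up-projection input in Mixtral is the \emph{post-layernorm} hidden state, whose layer-dependent normalization gains are not captured by the raw residual-stream identity (cosine is scale-invariant only up to the per-layer elementwise gains), so your structural bound needs either an extra step or, as you concede, empirical calibration of $r_i$ anyway; and your claim that $r_i$ decays like $1/i$ is a heuristic you would still have to verify --- at which point the measurement you propose collapses into essentially the experiment the paper already runs.
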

\zj{Specifically, we randomly sample 100 sequences of length 256 from the ShareGPT~\citep{sharegpt} and feed them into Mixtral-8$\times$7B. Then, we compute the average next layer similarity, defined as the cosine similarity between the hidden states before the $i$-th layer and $(i+1)$-th layer.
\cref{fig:cos-sim} shows that the next layer similarity consistently remains above 0.95, except for the first layer.}

% \zj{
% Building on the high next layer similarity in MoE models, we further design two efficient yet effective sparsity predictors—inter-expert and intra-expert—that input the hidden states before the $i$-th MoE layer to accurately prefetch the activated compressed experts, thus replacing the computation of router and up projection in the $(i+1)$-th layer.
% }

% zl: check the correctness(d)
\zl{
Building on the observation, we devise two efficient yet effective predictors for inter- and intra-expert sparsity.
They both consume the input hidden states of the existing layer and prefetch the activated compressed experts to be visited in the next layer, hence excluding the upcoming computation of router and up projection.
}

\begin{figure}[!t]
\begin{center}
\includegraphics[width=0.45\textwidth]{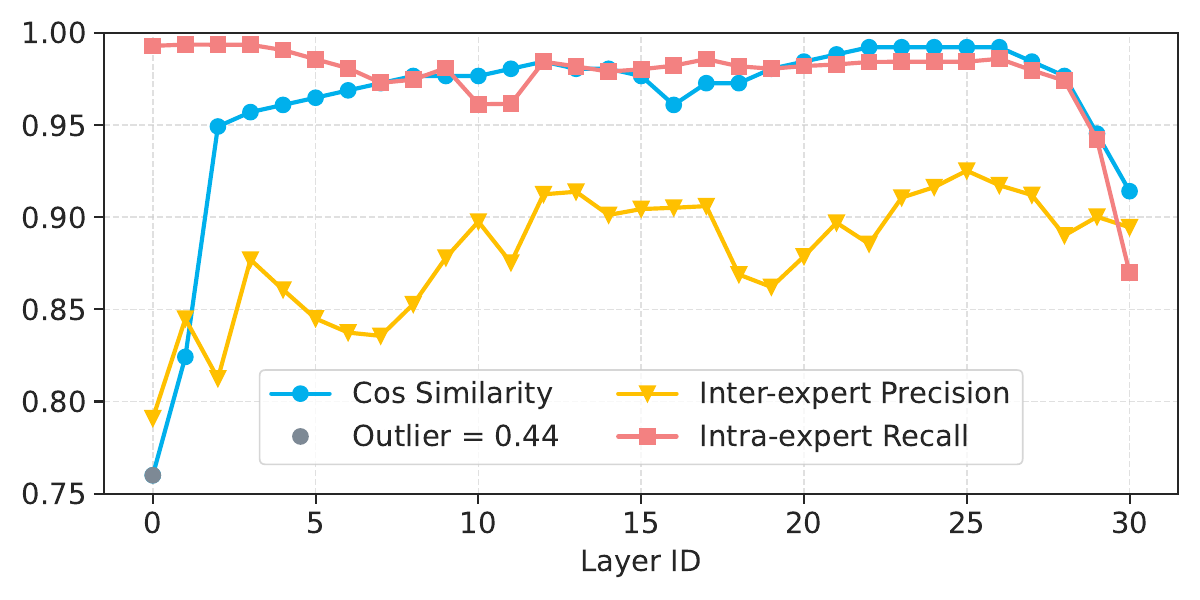}
% \vspace{-15pt} 
\caption{Next layer cosine similarity (blue), intra-expert predictor precision (yellow), inter-expert predictor recall (red), and the outlier corresponding to the cosine similarity at the 0-th layer (gray).}
\label{fig:cos-sim}
\end{center}
% \vspace{-15pt} 
\end{figure}

% \vspace{-15pt} 
\subsubsection{Inter-expert sparsity predictor}
\label{sec:inter-expert}

\zj{For inter-expert sparsity, we introduce a learning-based predictor that proactively predicts the experts required for the $(i+1)$-th layer while computing the $i$-th layer.}

\zj{
The core idea of the learning-based predictor is to collect the input from the previous layer along with the historical trajectory of expert selections, capturing the underlying correlations between them.
Leveraging these correlations, the predictor makes informed decisions about future expert selections. 
We observe that the complexity of prediction diminishes as the layer depth increases. 
To adapt to this, we dynamically adjust the predictor's parameters in practice, scaling from a single-layer MLP with 32K parameters to a two-layer MLP with 2M parameters. 
% The effectiveness of this approach 
The orange line in~\cref{fig:cos-sim} illustrates an average precision of 0.88, highlighting the inter-experts predictor's capability to maintain high accuracy while adapting to varying layer depths.
}
% This flexibility ensures efficient and accurate predictions across different layers. 

% lz: [Todo] Fig. Mixtral模型中各层的专家预测精度
% \vspace{-5pt} 
\subsubsection{Intra-expert sparsity predictor}

\zj{
For intra-expert sparsity, we introduce a parameter-free, reuse-based predictor. 
This predictor estimates the output activations of the up projection in the $(i+1)$-th layer by directly performing matrix multiplication between the hidden states before the $i$-th MoE layer and the reused up projection matrix of the $(i+1)$-th layer. 
Once the approximate output activations of the up projection are obtained, the contextual sparsity distribution can be computed in advance.
}

\zj{
Different from existing learning-based predictors~\citep{liu2023deja, shin2024sparseinfer, xue2024powerinfer2}, which, for example, impose an additional 2.19GB$\sim$9GB of memory footprint for models like Mixtral-8$\times$7B (detailed in~\cref{sec:cost_learningpredictor}), an unbearable burden for memory-constrained GPUs, our intra-expert sparsity predictor incurs little extra memory cost.
Furthermore, the red line in \cref{fig:cos-sim} shows an average recall of 0.95, demonstrating the predictor’s ability to maintain high accuracy across varying layer depths.
}

% \citep{shin2024sparseinfer}

% \begin{figure}[!t]
% % \vskip -0.5in
% \begin{center}
% \includegraphics[width=0.4\textwidth]{icml2025/fig/perplexity_vs_bitwidth.pdf}
% \vspace{-10pt} 
% \caption{perplexity vs bitwidth.}
% \label{fig:perplexity_vs_bitwidth}
% \end{center}
% \vskip -0.2in
% \end{figure}
% \setlength{\textfloatsep}{2pt}
\begin{algorithm}[t]
   \caption{Efficient Sparse Kernel}
   \label{alg:Efficient_Sparse_Kernel}
\begin{algorithmic}[1]
    \STATE \textbf{Input:} hidden states $\mathbf{x}$, threshold $t_{ij}$, $\mathcal{E}_{ij} = \{ \mathbf{W}^{\textnormal{gate}}_{ij}, \mathbf{W}^{\textnormal{down},\top}_{ij}, \mathbf{W}^{\textnormal{up}}_{ij}\}$
    \STATE $\mathbf{v} \gets \mathbf{x} \mathbf{W}^{\textnormal{up}}_{ij}$ 
    \STATE $\mathbf{mask} \gets \left( \left| \mathbf{v} \right| > t_{ij} \right)$ 
    \STATE $\mathbf{x}' \gets \textnormal{SiLU}\left(\mathbf{x} \mathbf{W}^{\textnormal{gate}}_{ij}[\mathbf{mask}]  \right) \odot \mathbf{v}[\mathbf{mask}]$
    \STATE $\mathbf{y} \gets ( \mathbf{W}^{\textnormal{down},\top}_{ij}[\mathbf{mask}] \mathbf{x'})^\top$ 
    \STATE \textbf{Return:} $\mathbf{y}$
\end{algorithmic}
% \vspace{-10pt}
\end{algorithm}

\subsection{System Co-optimization}
\label{sec:sysoptim}
\subsubsection{Efficient Sparse Kernel}

To translate the reduction in computational complexity introduced by sparsity into clock time acceleration, we developed a specialized sparse GEMV kernel using the Triton~\citep{triton}-based kernel introduced by CATS~\citep{lee2024cats}. We achieve maximal data read efficiency by transposing $\mathbf{W}^{\textnormal{down}}_{ij}$ and utilizing column-major storage. By selectively loading the columns of the weight matrices $\mathbf{W}^{\textnormal{gate}}_{ij}$ and $ \mathbf{W}^{\textnormal{down},\top}_{ij} $ based on a threshold, we reduce the number of memory accesses, thereby accelerating clock time.

As shown in~\cref{alg:Efficient_Sparse_Kernel},  this kernel accepts the input hidden state $ \mathbf{x} $, sparse threshold $ t_{ij} $, and expert weights $  \mathcal{E}_{ij} = \left\{ \mathbf{W}^{\textnormal{gate}}_{ij}, \mathbf{W}^{\textnormal{down},\top}_{ij}, \mathbf{W}^{\textnormal{up}}_{ij} \right\} $. First, a mask vector is generated based on the absolute values of the hidden vectors output by $\mathbf{x}\mathbf{W}^{\textnormal{up}}$ and the magnitude of the threshold. The \texttt{SiLU} activation and element-wise multiplication are fused into each block computed by $ \mathbf{W}^{\textnormal{gate}}_{ij}[\mathbf{mask}] \mathbf{x}$, which conserves memory operations required for multiple storage and loading of $ \mathbf{x'} $ and reduces kernel launch time. Subsequently, the resulting $ \mathbf{x'} $ is multiplied by the transposed $ \mathbf{W}^{\textnormal{down},\top}_{ij} $ to produce the output of the sparse MLP.
% (algorithm details in~\Cref{sec:Efficient_Sparse_Kernel}).
~\cref{exp:latency} shows our sparse GEMV kernel effectively reduces expert computation time as sparsity increases.

\subsubsection{Compact Asynchronous Transfer}
% Due to the selective activation of neurons, frequent transfers of small, non-contiguous memory blocks between the CPU and GPU are caused, making it difficult to fully utilize the bandwidth of the PCIe bus.
Due to the sparse activation of weights, the expert transfer process occurs across multiple non-contiguous memory blocks in DRAM and VRAM, making it difficult to fully utilize the PCIe bus bandwidth. The Pytorch~\citep{pytorch} naive implementation can only achieve a fraction of the PCIe bandwidth, significantly affecting inference latency. Therefore, we \zyx{compacted} the arrangement of the gate and down projection matrices to transfer data in larger chunks and further enhance data throughput using SIMD and multithreaded asynchronous transfer techniques. 
% In this section, 
Next,
we detail the optimization of the data transfer strategy.
% \vspace{-10pt}
\paragraph{Compact weights Layout}
In an expert, the activation of the $i$-th intermediate neuron corresponds to the usage of the $i$-th column from the gate \zyx{and up} projection matrix, along with the $i$-th row from the down projection matrix. By co-locating the corresponding columns of the gate projection and rows of the down projection in DRAM, we can \zyx{compact} the data into larger contiguous chunks for efficient transfer. Assuming each element of the $d_{\textnormal{hidden}} \times d_{\textnormal{intermediate}}$ weight matrices is stored in \texttt{num\_bytes}, this \zyx{layout} strategy increases the chunk size from $d_{\textnormal{hidden}} \times \texttt{num\_bytes}$ to $2 d_{\textnormal{hidden}} \times \texttt{num\_bytes}$, as illustrated in~\cref{fig:io}.
\begin{figure}[!t]
% \vskip -0.5in
\begin{center}
\includegraphics[width=0.4\textwidth]{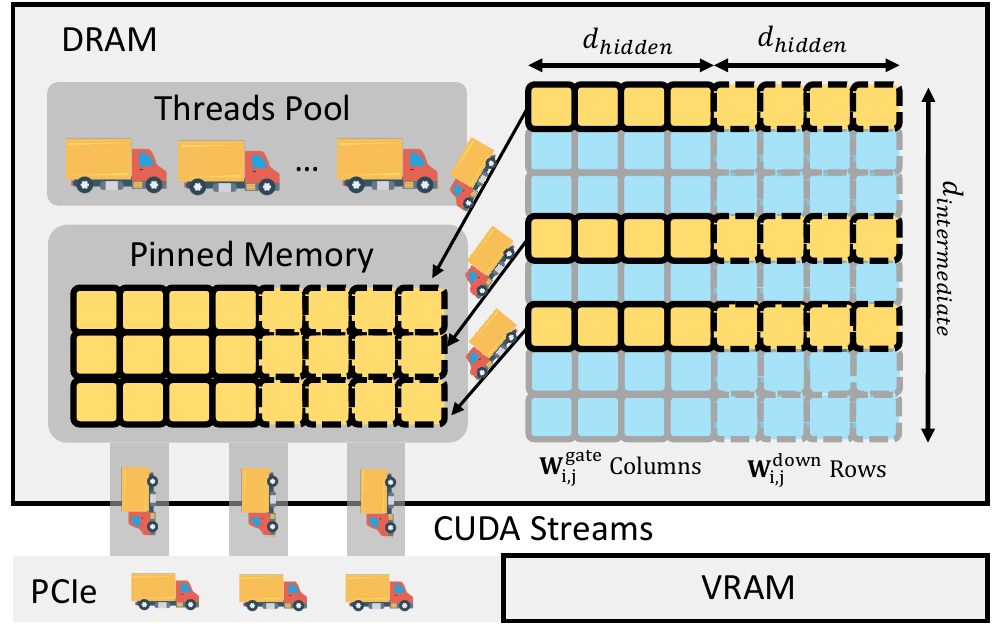}
% \vspace{-10pt} 
\caption{\zyx{Process} of \model{}'s compact asynchronous transfer: \zyx{compacting} weights layout in DRAM for reduced access latency and multi-threaded packaging of activated experts to enable asynchronous data transfer.}
%揭示了\model{}对于IO的优化，包括了对于神经元在DRAM中的排布优化，和对专家激活的神经元多线程打包异步传输的过程
\label{fig:io}
\end{center}
% \vskip -0.2in
\end{figure}
% \vspace{-5pt}
\paragraph{SIMD Asynchronous Transfer}
To fully leverage modern CPU capabilities, we use the AVX-512 instruction set and multithreaded asynchronous transfer. As shown in the~\cref{fig:io}, we allocate a pinned memory in the CPU (transferring data from pinned memory significantly improves transfer speed) to send weights to VRAM. 
We use multithreading in combination with SIMD instructions to bundle several weights groups for transfer into pinned memory, and asynchronously send transfer requests across multiple streams, minimizing idle time on the PCIe bus.
% \vspace{-5pt}
\section{Evaluation}
\label{sec:evaluation}
In this section, we aim to demonstrate that \model{} can speed up MoE decoding on limited GPU memory while preserving high accuracy. 
We first present our end-to-end system results showing wall-clock performance, followed by \model{}’s accuracy in downstream tasks\footnote{\zl{
We employ Mixtral-8$\times$7B as the MoE model for all test cases.
}}. Specifically,
% Note that 
\begin{itemize}[leftmargin=5pt]
% lz: without compromising model quality能不能写?
% 48.7× inference speedup on a single GeForce RTX 3090 compared to DeepSpeed-MII
    % \item \lz{In~\cref{exp:latency}, we demonstrate that \model{} enables 2.6x end-to-end acceleration compared to Mixtral-Offloading, with sparse kernel contributing up to 2x speedup and compact asynchronous transfer achieving 12.6x faster performance compared to the naive method.}
    \item \lz{In~\cref{exp:latency}, we demonstrate that \model{} enables 48.7× end-to-end acceleration compared to DeepSpeed-MII, with sparse kernel contributing up to 2x speedup and compact asynchronous transfer achieving 12.6x faster performance compared to the naive method.}
    \item In~\cref{exp:accuracy}, we show that \model{} achieves a performance gain of \lz{9.8\%} over other methods at high sparsity.
    % \item In~\cref{exp:ablation}, We present extensive ablation studies validating the effectiveness of each component of \model{} in optimizing GPU memory usage, reducing inference latency, and enhancing performance.
\end{itemize}

\subsection{Efficiency Evaluation}\label{exp:latency}
% \paragraph{Setup}
% zl: mixture-of-experts -> MoE(d)
% We employ the Mixtral 8×7B MoE model \citep{albert2024mixtral} on H100, A100-80GB, A6000, and GeForce RTX 3090 GPUs. 
% We measure single-expert latency for the sparse GEMV kernel on the C4 dataset \citep{2019t5c4}, and conduct end-to-end generation tests with expert offloading using ShareGPT~\citep{sharegpt} prompts of varying input/output lengths.
% zl: this sentence is confusing since multiple types of gpus are mentioned in the first sentence of the para.(d)
% End-to-end generation tests run on an RTX 3090 system with 24GB GPU memory, 256GB CPU memory, and 64 CPU cores at 2.3GHz, interconnected via PCIe 4.0 (32GB/s theoretical bandwidth).
\begin{figure*}[!t]
% \vskip -0.5in
\begin{center}
\includegraphics[width=0.95\textwidth]{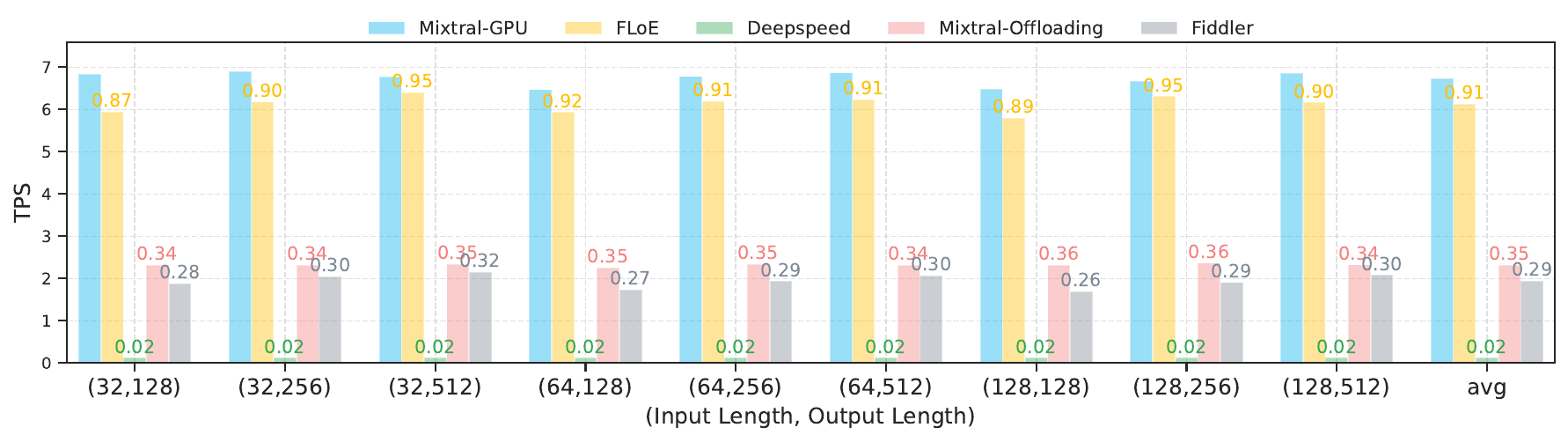}
\vspace{-10pt} 
\caption{Bars quantify generation speed of compared methods under 12GB VRAM constraints, with numerical labels indicating relative speedup ratios against the Mixtral-GPU baseline. Note that DeepSpeed uses \texttt{FP16} offloading.}
\label{fig:mainresult}
\end{center}
\vskip -0.2in
\end{figure*}

% \end{figure}
\begin{table*}[t]
\caption{Single-Expert Execution Latency with Sparse GEMV Kernel (ms).}
\label{tab:latency}
\begin{center}
\begin{small}
\begin{sc}
\begin{tabular}{@{}lcccccc@{}}
\toprule
GPU Model & \texttt{0\%} & \texttt{50\%} & \texttt{60\%} & \texttt{70\%} & \texttt{80\%} & \texttt{90\%} \\
\midrule
H100      & 0.169 & 0.134 (1.26$\times\uparrow$) & 0.123 (1.37$\times\uparrow$) & 0.114 (1.48$\times\uparrow$) & 0.106 (1.59$\times\uparrow$) & 0.103 (1.64$\times\uparrow$) \\
A100      & 0.253 & 0.195 (1.30$\times\uparrow$) & 0.188 (1.35$\times\uparrow$) & 0.176 (1.44$\times\uparrow$) & 0.166 (1.52$\times\uparrow$) & 0.155 (1.63$\times\uparrow$) \\
A6000     & 0.524 & 0.365 (1.44$\times\uparrow$) & 0.337 (1.56$\times\uparrow$) & 0.305 (1.72$\times\uparrow$) & 0.277 (1.89$\times\uparrow$) & 0.263 (1.99$\times\uparrow$) \\
RTX-3090  & 0.542 & 0.379 (1.43$\times\uparrow$) & 0.354 (1.53$\times\uparrow$) & 0.316 (1.72$\times\uparrow$) & 0.302 (1.80$\times\uparrow$) & 0.283 (1.92$\times\uparrow$) \\
\bottomrule
\end{tabular}
\end{sc}
\end{small}
\end{center}
\end{table*}
% \vspace{-10pt}

\zyx{
We analyze decode efficiency via end-to-end generation tests across various input/output lengths and VRAM usage, assess single-expert latency speedup for sparse GEMV, and evaluate transfer efficiency by simulating single-expert transfer with varying chunk sizes.
}

% \vspace{-15pt}
\paragraph{Setup}
We use GeForce RTX 3090 with 24G VRAM to evaluate end-to-end latency on ShareGPT~\citep{sharegpt} prompts.
% For the end-to-end test, we use ShareGPT~\citep{sharegpt} prompts and GeForce RTX 3090 with 24G VRAM. [zj:will left blank]
The system is also equipped with a 64-core CPU at 2.3GHz and 256G DRAM interconnected via PCIe 4.0.
For the single-expert latency test, we use C4 dataset \citep{2019t5c4} and employ four types of GPUs, including H100, A100, A6000, and GeForce RTX 3090.

% \paragraph{Baseline}
% % We establish four comparative baselines to evaluate \model{}'s efficiency:
% \zl{We employ four SOTA baselines in the evaluation:}
% % zl: missing year of the citation
% \textbf{DeepSpeed-MII} \citep{microsoft_deepspeed_mii}: 
% % A state-of-the-art inference system employing ZeRO-Infinity \citep{zero} for CPU memory offloading, selected for its multi-GPU optimization capabilities.
% % zl: check the correctness and fill in XXX.(d)
% \zl{An inference system utilizes ZeRO-Infinity \citep{zero} to deal with expert offloading.}
% \textbf{Mixtral-Offloading} \citep{mixtral-offloading}: 
% A MoE framework integrating expert prediction, caching mechanisms and quantization.
% \textbf{Fiddler} \citep{kamahori2024fiddler}: A CPU-GPU co-execution system minimizing data transfer overhead through computational offloading.
% % zl: need to follow the structure of previous sentences. 'A xxx imports xxx to xxx.'(d)
% \textbf{Mixtral-GPU}: A model with HQQ \texttt{INT2} quantized enabling complete GPU residency, serving as the latency lower-bound reference for on-the-fly scenario requirements.

\vskip -0.2in
\paragraph{Baseline}
\label{sec:efficient_baselines}
% We establish four comparative baselines to evaluate \model{}'s efficiency:
% \zl{We employ four SOTA baselines in the efficiency evaluation (details in~\cref{sec:efficient_baselines}):} \textbf{DeepSpeed-MII} \citep{microsoft_deepspeed_mii}, \textbf{Mixtral-Offloading} \citep{mixtral-offloading}, \textbf{Fiddler} \citep{kamahori2024fiddler}, and \textbf{Mixtral-GPU}.
% We establish four comparative baselines to evaluate \model{}'s efficiency:
We employ four SOTA baselines in the evaluation:
\textbf{DeepSpeed-MII} \citep{microsoft_deepspeed_mii}: 
% A state-of-the-art inference system employing ZeRO-Infinity \citep{zero} for CPU memory offloading, selected for its multi-GPU optimization capabilities.
% zl: check the correctness and fill in XXX.(d)
\zl{An inference system utilizes ZeRO-Infinity \citep{zero} to deal with expert offloading.}
\textbf{Mixtral-Offloading} \citep{mixtral-offloading}: 
An MoE framework integrating expert prediction, caching mechanisms, and quantization.
\textbf{Fiddler} \citep{kamahori2024fiddler}: A CPU-GPU co-execution system minimizing data transfer overhead through computational offloading.
% zl: need to follow the structure of previous sentences. 'A xxx imports xxx to xxx.'(d)
\textbf{Mixtral-GPU}: A model with HQQ \texttt{INT2} quantized enabling complete GPU residency, serving as the latency lower-bound reference for on-the-fly scenario requirements.

\vspace{-15pt}
\paragraph{Analysis}
%端到端的测试我们用单批次推理的延迟评估 \model{}的efficiency，
% 我们在RTX3090 GPU上占用12GB下测试不同长度的输入和输出标记的生成速度，我们通过将输出标记的数量除以端到端延迟（包括预填充和解码阶段）来计算每秒生成的平均标记数。我们选择输入长度为 [32, 64, 128]，输出长度为 [128, 256, 512]。所有方法，我们均展示5次运行的平均值，图中展示了结果\model{}在所有输入输出长度上都保持着最接近mixtral-GPU的性能，最高达到了95/%，远优于 DeepSpeed-MII 和 mixt和fildder。以每秒标记数衡量的性能随着输出长度的增加而提高，对于相同的输入长度，因为预填充阶段逐层替换专家的开销在更长的输出中被摊薄。平均而言\model{}达到了mixtral-GPU 91\%的速度，是deepspee的48.7$\times$， fillder的48.7$\times$
% zl: why the following setting is different from the 'setup' para?(d)
% We evaluate \model{}’s end-to-end efficiency on a single RTX 3090 GPU and set VRAM usage within 12GB. 
% zl: the computation of TPS is confusing.(d)
% Inference latency is measured for single-batch generation, and average tokens per second (TPS) is computed by dividing the output token by total end-to-end latency (prefill + decode).
% We use input lengths \{32, 64, 128\} and output lengths \{128, 256, 512\}, averaging over 5 runs.
We evaluate \model{}'s end-to-end efficiency with varying input/output lengths, and the results averaging over 5 runs are depicted in \cref{fig:mainresult}.
In the figure, we measure the inference speed for single-batch generation.
We select the average output tokens per second (TPS) as the measurement.
% zl: mention Mixtral-Offloading in the next sentence, and make DeepSpeed in the figure consistency with DeepSpeed-MII.(d)
As seen, \model{} achieves 91\% of Mixtral-GPU’s speed (95\% at most), delivering 48.7$\times$, 2.60$\times$and 3.14$\times$ speedups over DeepSpeed-MII, Mixtral-Offloading and Fildder, respectively.
It should be noted that with longer outputs for fixed inputs, TPS improves as layer-wise expert replacement overhead is amortized over longer sequences.
% zl: Why use Mixtral-GPU with a dash(-) and Mistral Offloading with a space?(Mixtral-Offloading)
% As shown in \cref{fig:mainresult}, \model{} closely matches Mixtral-GPU across all configurations, peaking at 95\% relative efficiency and significantly surpassing DeepSpeed-MII, Mixtral-Offloading and Fildder. 
% With longer outputs for fixed inputs, TPS improves as layer-wise expert replacement overhead is amortized over longer sequences. 
% Overall, \model{} achieves 91\% of Mixtral-GPU’s speed, delivering 48.7$\times$ and 3.14$\times$ speedups over DeepSpeed-MII and Fildder, respectively.

%图展示了在输入长度64，输出长度512的设置下，VRAM使用量从12GB到24GB，我们的模型与各个基线方法的生成速率的比较，在增加的显存中，我们放入了更多的moe layers作为缓存，来降低专家预测错误重新加载的代价，于此同时，这些专家的激活中，我们应用了我们实现的稀疏GEMV 内核，进一步提高生成速度，我们在各个DRAM usage下都保持着接近Mixtral-GPU的速度，在24GB时，略有超出，这是由于稀疏GEMV内核本身执行快于int2量化的专家计算；21G，mixtral offlading完全存储在DRAM中，退化成了Mixtral-GPU的设置，但由于部分专家为int3量化，速度较Mixtral-GPU略慢；
% \cref{fig:vram} compares generation throughput under an input length of 64, an output length of 512, and VRAM usage ranging from 12GB to 24GB. 
\zl{
\cref{fig:vram} compares the generation throughput under input/output length of 64/256 and VRAM usage ranging from 12GB to 24GB.
}
With additional VRAM, we cache more MoE layers to reduce expert misprediction reload overhead.
Meanwhile, our sparse GEMV kernel applied to expert activations further boosts generation speed.
% zl: DRAM or VRAM? capacities or usage?(d)
Across different VRAM capacities, our method remains close to Mixtral-GPU’s performance and slightly surpasses it at 24GB.
% , owing to the sparse GEMV kernel’s faster execution relative to \zl{\texttt{INT2}} quantization. 
% zl: 21GB or 18GB?(21)
When DRAM usage reaches 21GB, Mixtral-Offloading essentially mirrors the Mixtral-GPU setup but is marginally slower, as it still relies on \zl{\texttt{INT3}} quantization for certain experts.

\zyx{\cref{tab:latency} compares the sparse kernel’s speedup across sparsity levels and \zl{GPUs} for a single expert, including dense up projection GEMV, fused SiLU activation, sparse gate projection GEMV, and sparse down-projection GEMV. 
Using 500 tokens from C4 dataset, we ran 80 warm-up iterations and 200 timed trials to measure execution latency.
Our kernel consistently outperforms the dense baseline (sparsity = 0). 
% At 50\% sparsity, it achieves over 1.26× speedup, and reaches 1.44× at 70\% sparsity, nearing the sparsity ratio.
\zl{
At 50\% and 70\% sparsity, it achieves over 1.26x and 1.44x speedup, respectively.}
At 90\% sparsity, only A6000 and RTX 3090 \zl{obtain} nearly 2× speedup, while H100 and A100 are limited by kernel launch overhead and other non-computational factors due to their higher computational throughput.
\zl{The results evidence our sparse GEMV kernel is advantageous on consumer-grade devices.}
% Thus, our sparse GEMV kernel is particularly advantageous on consumer-grade GPUs.
}
% At all sparsity levels, our kernel consistently outperforms the dense baseline (sparsity = 0). At 50\% sparsity, it achieves over 1.3$\times$ speedup on each GPU, approaching the theoretical 1.5$\times$. At 70\% sparsity, it attains a 1.5$\times$ speedup (theoretical 1.875$\times$). Beyond this point, only the A6000 and RTX 3090 maintain nearly 2$\times$ speedup at 90\% sparsity. On H100 and A100, the overhead from kernel launches and other non-computational factors becomes dominant, given their higher raw computational throughput. Thus, our sparse GEMV kernel shows stronger advantages on consumer-grade GPUs.

\zyx{For transfer efficiency,} we randomly selected 20\% of expert weights (20\% of columns in the gate projection matrix and corresponding columns in the transposed down projection) and transferred them from DRAM to VRAM using varying chunk sizes (number of weight columns per thread). The average over 20 trials is shown in~\cref{fig:iotest}. Our compact asynchronous transfer achieve up to 88\% of peak bandwidth, 12.6× faster than PyTorch~\citep{pytorch} native implementation. Compact weights layout improves efficiency across all chunk sizes. Transfer latency first increases and then decreases as chunk size grows—small chunks are dominated by API calls and CUDA launch overhead, while large chunks suffer from excessive DRAM packing time, limiting transfer overlap. The optimal chunk size in our setup is 50.
\begin{figure}[!t]
% \vskip -0.5in
\begin{center}
\includegraphics[width=0.45\textwidth]{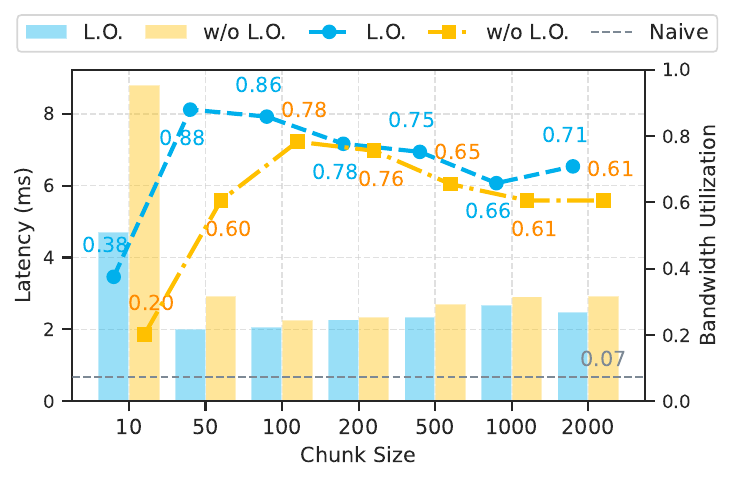}
\vspace{-15pt} 
% 柱体表示一个专家内预测的神经元从DRAM到VRAM的传输延迟，折线表示相对于PCIE实际峰值带宽的利用效率，灰色虚线标表示pytoch原生操作的带宽利用率
\caption{Comparison transfer latency and bandwidth utilization: bars show DRAM-to-VRAM 
% weights 
transfer delays per expert, while lines depict utilization relative to PCIe 4.0's actual peak bandwidth. Gray dashed lines are PyTorch's native implementation.}
\label{fig:iotest}
\end{center}
\vskip -0.2in
\end{figure}

% \begin{figure}[!t]
% % \vskip -0.5in
% \begin{center}
% \includegraphics[width=0.45\textwidth]{icml2025/fig/kernel2.pdf}
% \vspace{-15pt} 
% % 表示稀疏GEMV算子在不同稀疏度和不同GPU上执行一个专家计算的延迟。
% \caption{Execution latency of sparse GEMV kernel for per-expert computation across sparsity ratios and GPU architectures}
% \label{fig:kernel}
% \end{center}
% \vskip -0.2in

\begin{figure}[!t]
\begin{center}
\includegraphics[width=0.48\textwidth]{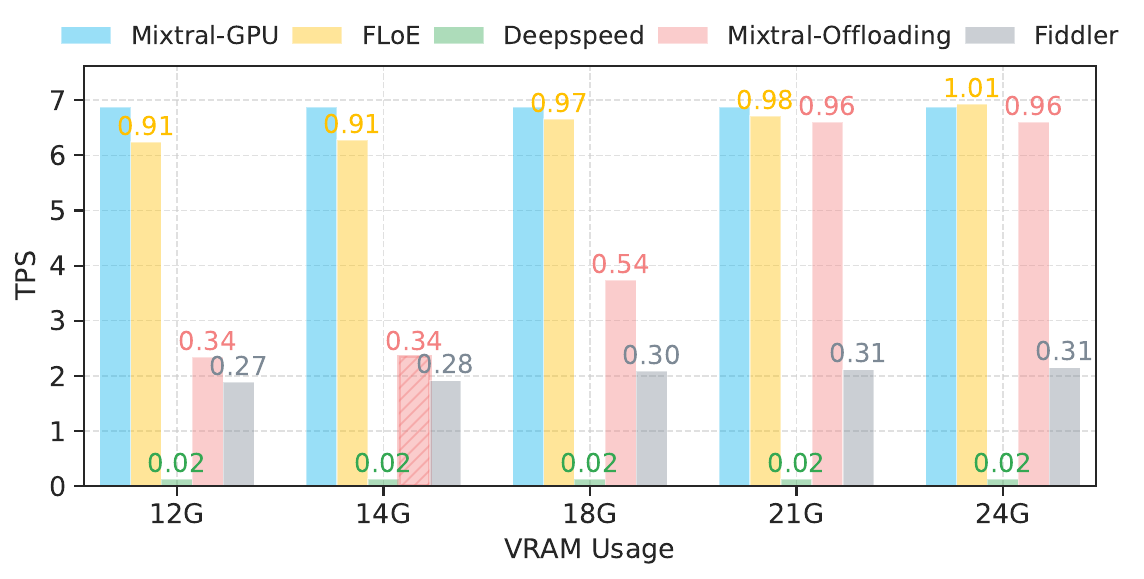}
% 不同DRAM使用量下各个方法的生成速度，数字表示相对于Mixtral-GPU的比例
% \vspace{-15pt}
\vskip -0.2in
\caption{Illustration of the generation speed of different methods under varying DRAM usage, with numbers indicating the speed relative to Mixtral-GPU.Since Mistral-Offloading caches by layer, there is no configuration for 14GB DRAM usage, so we use the 12GB result instead.}
\label{fig:vram}
\end{center}
\vskip -0.2in
\end{figure}

\begin{figure}[tbp]
\begin{center}
    
    % 第一行子图
    \subfigure[Sparsification Sensitivity]{
        \includegraphics[width=0.46\linewidth]{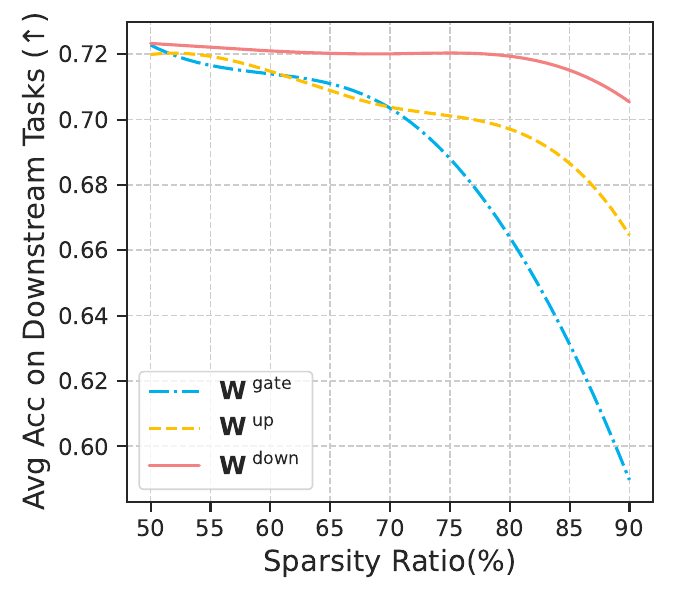}
        \label{subfig:quant-performance}
    }%
    \hfill
    \subfigure[Quantization Compatibility]{
        \includegraphics[width=0.46\linewidth]{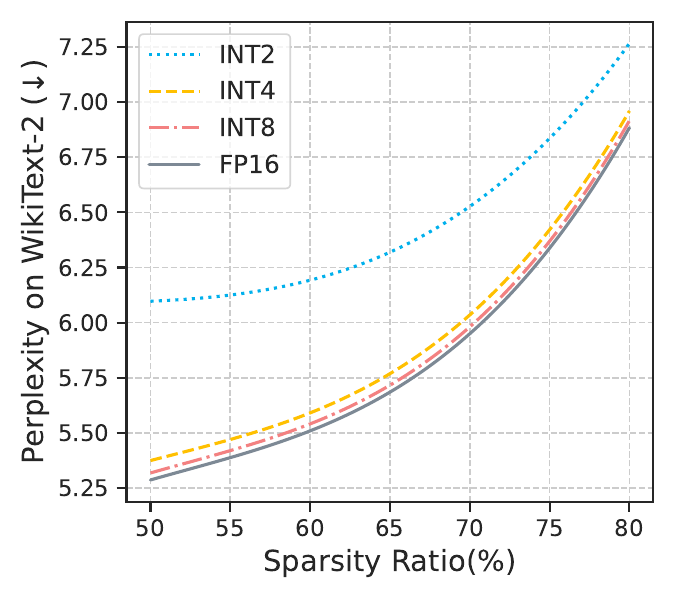}
        \label{subfig:quant-sparsity}
    }%
    \vskip -0.1in
     \caption{\zj{Impact of scaling up sparsity ratio on performance: (a) Task accuracy across different sparsity strategies, and (b) Text perplexity of \model{} combined with various quantization bit-widths.}}
    \label{fig:evaluation_end}
\end{center}
    \vskip -0.2in
\end{figure}

% \begin{figure}[!t]
% % \vskip -0.5in
% \begin{center}
% \includegraphics[width=0.45\textwidth]{icml2025/fig/sparsity-performance-smooth.pdf}
% \vspace{-20pt} 
% % 柱体表示12GB显存下，各方法生成速度，数字表示相对于Mixtral-GPU的比例
% \caption{}
% \label{fig:quant-performance}
% \end{center}
% \vskip -0.2in
% \end{figure}

% \vspace{-5pt}
\subsection{Efficacy Evaluation}\label{exp:accuracy}
We analyze model efficacy via downstream tasks and validate the compatibility of the quantization with \model{}.

% \vspace{-10pt}
\paragraph{Setup}
% experiments
% , we use the OpenBookQA, ARC Easy, ARC Challenge, Winogrande, BoolQ, SCI-Q and MMLU datasets from the Eleuther AI Evaluation Harness~\citep{eval-harness}; 
For the downstream task performance, we use seven downstream tasks using the EleutherAI LM Harness~\citep{eval-harness}, \lz{including zero-shot ARC easy and challenge, zero-shot BoolQ, zero-shot SciQ, zero-shot OpenBookQA, zero-shot Winogrande and 5-shot MMLU~\citep{clark2018arcc,clark2019boolq,SciQ,sakaguchi2019winogrande,hendrycks2021mmlu}.}
These tasks are originally chosen to measure the abilities of the models across various domains, such as reading comprehension and reasoning.

% We measure the performance of sparsified model on language modeling using the WikiText-2~\citep{wikitext2} validation set, and on an aggregate of seven downstream tasks using the EleutherAI LM Harness~\citep{eval-harness}, 
% including 5-shot MMLU, 25-shot ARC challenge, 10-shot HellaSwag, 5-shot GSM8K, zero-shot PiQA, and zero-shot Winogrande 
% \lz{including zero-shot ARC easy and challenge, zero-shot BoolQ, zero-shot SciQ, zero-shot openbookqa, zero-shot Winogrande and 5-shot MMLU. }
% (Hendrycks et al., 2021; Clark et al., 2018; Zellers et al., 2019; Cobbe et al.2021; Bisk et al., 2019; Sakaguchi et al., 2019). For language modeling, we evaluate all models on
% the same 128 random samples, using a 2048-token context and 512-token evaluation window

% FineWeb~\citep{penedo2024finewebdatasetsdecantingweb}

\paragraph{Baseline}
% \lz{
% We employ three sparsity or quantization baselines in the evaluation:
% \textbf{CATS}~\citep{lee2024cats}: \zl{A SOTA} activation sparsification method, which applies magnitude pruning to FFN activations. \textbf{CHESS}~\citep{he2024chess}: A general activation sparsification approach via channel-wise thresholding and selective sparsification. 
% \textbf{HQQ quantization}~\citep{badri2023hqq}: A fast and accurate model quantizer that skips the need for calibration data. 
% }
% We employ three sparsity or quantization baselines in the efficacy evaluation (details in~\cref{sec:efficacy_baselines}):
% \textbf{CATS}~\citep{lee2024cats}, \textbf{CHESS}~\citep{he2024chess}, and \textbf{HQQ quantization}~\citep{badri2023hqq}.

\label{sec:efficacy_baselines}
\lz{
We employ three sparsity or quantization baselines in the evaluation:
\textbf{CATS}~\citep{lee2024cats}: \zl{A SOTA} activation sparsification method, which applies magnitude pruning to FFN activations. \textbf{CHESS}~\citep{he2024chess}: A general activation sparsification approach via channel-wise thresholding and selective sparsification. 
\textbf{HQQ quantization}~\citep{badri2023hqq}: A fast and accurate model quantizer that skips the need for calibration data. 
}
\paragraph{Analysis}
%%% 三部分
% 1. 80%/90% 稀疏下
% 2. \model 方法的有效性
% 3. 与量化的兼容性
% our expert hybrid compression
% We evaluate \model{} against baselines performance .
% 1. 我们在80和90稀疏度下与SOTA方法进行了对比，在80稀疏度下我们的方法优于SOTA稀疏方法3.45\%，优于HQQ int3 2.46\%；在90稀疏度下我们方法效果更凸显出来了，高于SOTA方法10.74\%.
% We first compared \model{}-$\mathbf{W}^{\textnormal{up}}$ with SOTA approaches at sparsity ratios of 80\% and 90\%.
% \model{} variant
\lz{As shown in~\cref{fig:downstreamtask}, \model{}-$\mathbf{W}^{\textnormal{up}}$ achieves a 2.8\% accuracy improvement at 80\% sparsity and a significant performance gain of 9.8\% over the SOTA methods at 90\% sparsity. 
The reason lies in the fact that activations for $\mathbf{W}^{\textnormal{up}}$ demonstrate better performance compared to those for $\mathbf{W}^{\textnormal{gate}}$, as evidenced by the trend in~\cref{fig:evaluation_end}(a). In addition, \model{} combines both quantization and sparsity, trading off a small amount of accuracy for improved deployment speed. Despite this trade-off, its performance remains higher than that of HQQ \texttt{INT3} and CHESS.
% 值得注意的是MMLU任务上，增加量化后性能下降较多，这是因为对MMLU任务进行int2量化损失较大
On the MMLU task, we observe a noticeable performance drop from 
% It is worth noting that on the MMLU task, performance degrades significantly  
\model{}-$\mathbf{W}^{\textnormal{up}}$ to \model{}. 
When GPU memory is sufficient, increasing the bit-width of the $\mathbf{W}^{\textnormal{up}}$ matrix can mitigate this issue.
}

% \lz{As shown, the \model{}-$\mathbf{W}^{\textnormal{up}}$ achieves a 2.46\% accuracy improvement at 80\% sparsity and a significant performance gain of 10.74\% over the SOTA methods at 90\% sparsity. The reason lies in the activation for $\mathbf{W}^{\textnormal{up}}$ demonstrate better performance compared to that for $\mathbf{W}^{\textnormal{gate}}$, which can be verified from the trend in~\cref{subfig:quant-performance}.
% \model{} combines both quantization and sparsity, trading off a small amount of accuracy for improved deployment speed. The performance is still higher than HQQ \texttt{INT3} and CHESS.}

% \begin{table}[]
% \begin{center}
% % \resizebox{0.5\textwidth}{!}{%
% \caption{Performance of downstream tasks under different compression methods}
% \label{tab:downstream_average}
% \begin{small}
% \begin{sc}
% \begin{tabular}{@{}lc@{}}
% \toprule 
%                        & Average \\
% \midrule
% HQQ int3               & 0.6748  \\
% CATS-80\%                   & 0.6640  \\
% Chess-80\%                  & 0.6649  \\
% \model{} w/Up-80\% (Ours)                  & 0.6994  \\
% \midrule
% HQQ int2               & 0.3241  \\
% CATS-90\%                   & 0.5897  \\
% Chess-90\%                  & 0.5920  \\
% \model{} w/Up-90\% (Ours)                  & 0.6646  \\
% \midrule
% \model{}-80\% (Ours) & 0.6877  \\
% \model{}-90\% (Ours) & 0.6630  \\
% \bottomrule
% \end{tabular}%
% \end{sc}
% \end{small}
% % }
% \end{center}
% \end{table}

\begin{figure}[!t]
% \vskip -0.5in
\begin{center}
\includegraphics[width=0.46\textwidth]{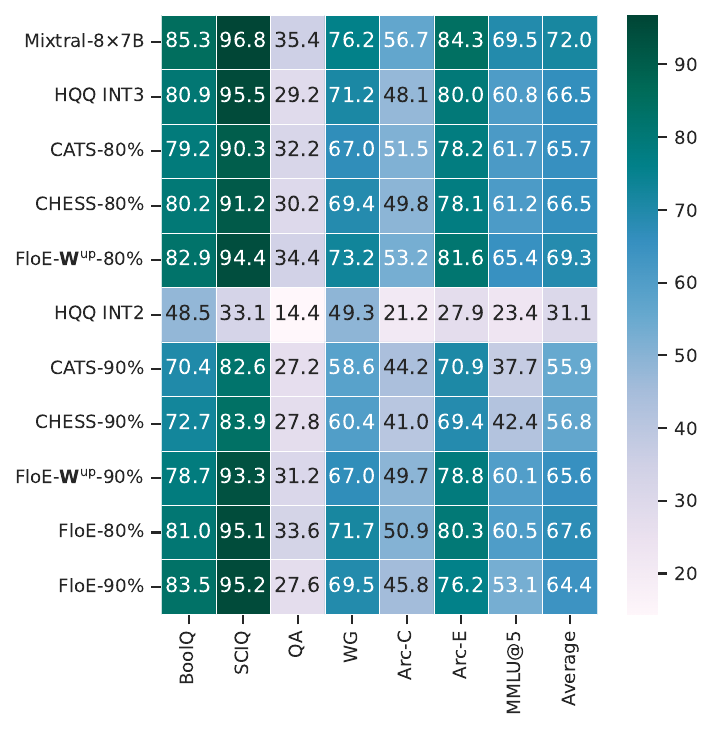}
% of \model{} and baselines
\vskip -0.2in
\caption{Downstream task performance. \model{}-$\mathbf{W}^{\textnormal{up}}$ refers to our contextural sparsificaition according up projection.}
% without ultra-low-bit quantization for the up projection
\label{fig:downstreamtask}
\end{center}
\vskip -0.3in
\end{figure}

% \subsection{Ablation Study}\label{exp:ablation}
% We present extensive ablation studies of \model{}, focusing on two key aspects: (1) the compatibility of sparsity algorithms and quantization, and (2) the effectiveness of transmission optimization.
% 我们展示了与量化技术的兼容性。我们考虑了HQQ int8.int4 int2量化，并在图  中绘制了 Mixtral-8*7B 在 WikiText2 上的困惑度变化。不同位宽下困惑度上升的趋势相似，这表明激活稀疏性和权重量化带来的误差在一定程度上是独立叠加的。与量化技术的结合，将进一步减少传输代价，集合稀疏与量化内核，会是我们未来的工作。
% \texttt{INT8}, \texttt{INT4}, and \texttt{INT2}
We also demonstrate the compatibility with quantization techniques by evaluating different HQQ quantizations and plotting the perplexity variations of Mixtral-8$\times$7B on WikiText-2 in~\cref{fig:evaluation_end}(b). The perplexity increases exhibit similar trends across different bit widths, indicating that the errors introduced by activation sparsity and weight quantization are largely independent and additive. 

\section{Conclusion}
% \lz{
% We present \model{}, an on-the-fly inference MoE system on memory-constrained GPUs. \model{} optimizes GPU memory usage through an expert hybrid compression scheme that combines contextual sparsity and ultra-low-bit quantization, leveraging untapped intra-expert redundancy in sparse MoE models. To enable the overlap of transfer and computation, \model{} uses two efficient yet effective sparsity predictors (inter-expert and intra-expert) to prefetch the activated compressed experts. Empirically, \model{} enables deployment on a GPU with only 11GB VRAM and delivers a 2.6× inference speedup on a single GeForce RTX 3090.}

% , effectively leveraging the untapped intra-expert redundancy in sparse MoE models
\lz{
% We introduce \model{}, an on-the-fly inference system for MoE models on memory-constrained GPUs optimizing GPU memory utilization through an expert hybrid compression scheme and employs highly effective sparsity predictors, delivering a 2.6× inference speedup on a single GeForce RTX 3090. 
%% 相比deepspeed
% We introduce \model{}, an on-the-fly inference system for MoE models on memory-constrained GPUs, optimizing GPU memory utilization through an expert hybrid compression scheme and highly effective sparsity predictors, ultimately delivering a 48.7× inference speedup on a single GeForce RTX 3090 compared with DeepSpeed-MII.
We introduce \model{}, an on-the-fly inference system for MoE models on memory-constrained GPUs, which optimizes GPU memory utilization through an expert hybrid compression scheme and 
% highly 
effective sparsity predictors, achieving a remarkable 48.7× inference speedup on a single GeForce RTX 3090 compared to DeepSpeed-MII.
% \model{} optimizes GPU memory utilization through an expert hybrid compression scheme and employs highly effective sparsity predictors. 
We hope our work inspires further research on MoE inference with offloading from a sparsity perspective and believe \model{} will serve as a valuable tool for the community, enabling on-the-fly inference of sparse MoE models on consumer-grade hardware.}

% allowing for larger batch sizes. 
% It reduces decoding overhead by accurate sparse attention, boosting throughput while maintaining accuracy. Our empirical experiments demonstrate \model can support up to 6× larger batch sizes and enhance throughput by up to 3.04× on an A100 across various long-context models, including Llama-3.1-8B, Llama-3-8B-1M, GLM-4-9B-1M, and Yi-9B-200K. \model{} holds great promise for improving long-context LLM inference.

% \clearpage
% \section*{Accessibility}
% Authors are kindly asked to make their submissions as accessible as possible for everyone including people with disabilities and sensory or neurological differences.
% Tips of how to achieve this and what to pay attention to will be provided on the conference website \url{http://icml.cc/}.

% \section*{Software and Data}

% If a paper is accepted, we strongly encourage the publication of software and data with the
% camera-ready version of the paper whenever appropriate. This can be
% done by including a URL in the camera-ready copy. However, \textbf{do not}
% include URLs that reveal your institution or identity in your
% submission for review. Instead, provide an anonymous URL or upload
% the material as ``Supplementary Material'' into the OpenReview reviewing
% system. Note that reviewers are not required to look at this material
% when writing their review.

% Acknowledgements should only appear in the accepted version.
\section*{Acknowledgements}

This work was supported by the Pioneer R\&D Program of Zhejiang (No. 2024C01021), the National Regional Innovation and Development Joint Fund (No. U24A20254), and the Zhejiang Province 'Leading Talent of Technological Innovation Program' (No. 2023R5214).

\section*{Impact Statement}
This paper presents work whose goal is to advance the field of Machine Learning. There are many potential societal consequences of our work, none which we feel must be specifically highlighted here.
% Authors are \textbf{required} to include a statement of the potential 
% broader impact of their work, including its ethical aspects and future 
% societal consequences. This statement should be in an unnumbered 
% section at the end of the paper (co-located with Acknowledgements -- 
% the two may appear in either order, but both must be before References), 
% and does not count toward the paper page limit. In many cases, where 
% the ethical impacts and expected societal implications are those that 
% are well established when advancing the field of Machine Learning, 
% substantial discussion is not required, and a simple statement such 
% as the following will suffice:

% ``This paper presents work whose goal is to advance the field of 
% Machine Learning. There are many potential societal consequences 
% of our work, none which we feel must be specifically highlighted here.''

% The above statement can be used verbatim in such cases, but we 
% encourage authors to think about whether there is content which does 
% warrant further discussion, as this statement will be apparent if the 
% paper is later flagged for ethics review.

% In the unusual situation where you want a paper to appear in the
% references without citing it in the main text, use \nocite
\nocite{langley00}

\bibliography{ICML2025}
\bibliographystyle{ICML2025}

%%%%%%%%%%%%%%%%%%%%%%%%%%%%%%%%%%%%%%%%%%%%%%%%%%%%%%%%%%%%%%%%%%%%%%%%%%%%%%%
%%%%%%%%%%%%%%%%%%%%%%%%%%%%%%%%%%%%%%%%%%%%%%%%%%%%%%%%%%%%%%%%%%%%%%%%%%%%%%%
% APPENDIX
%%%%%%%%%%%%%%%%%%%%%%%%%%%%%%%%%%%%%%%%%%%%%%%%%%%%%%%%%%%%%%%%%%%%%%%%%%%%%%%
%%%%%%%%%%%%%%%%%%%%%%%%%%%%%%%%%%%%%%%%%%%%%%%%%%%%%%%%%%%%%%%%%%%%%%%%%%%%%%%
\newpage
\appendix
\onecolumn

\clearpage
\section{Theoretical Analysis}

\subsection{Preliminary}
\label{sec: preliminary}
Given a vector $x \in \mathbb{R}^m$, we use $\|x\|_2 := \sqrt{\sum_{i=1}^m x_i^2}$ to denote the two-norm of $x$.
We use $[m]$ to represent the set $\{1,2,...,m\}$.
We use $\odot$ to denote the element-wise multiplication of two vectors or matrice.

We denote $\mathcal{N}(\mu, \sigma^2)$ to be the Gaussian distribution with mean $\mu$ and variance $\sigma^2$.
We let $ \phi(x) = \frac{1}{\sqrt{2\pi}} e^{-x^2 / 2} $ and $\Phi(x) = \int_{-\infty}^{x} \phi(y)dy$ to be the probability density function (PDF) and cumulative distribution function (CDF)  of the standard normal distribution, respectively. Given function $f$, we use $f^{-1}$ to define its reverse function.

\subsection{Main Theorem}
In the main paper, we state the informal theorem as below:

\informaltheorem

% \begin{theorem}[informal]
%     Denote $\mathbf{a}_{\textnormal{gate}} \in \mathbb{R}^m$ and $\mathbf{a}_{\textnormal{up}} \in \mathbb{R}^m$ as activations after gate and up projections, and $\mathbf{a}_{\textnormal{down}} = \mathbf{a}_{\textnormal{gate}} \odot \mathbf{a}_{\textnormal{up}}$. Let $W_{\textnormal{down}} \in \mathbb{R}^{m \times n}$ to be the weight of down projection. From the definition of $\mathtt{S}_t$ (see~\cref{eq: s_t def}), we define:
%     \begin{align}
%         \mathcal{L}_{\textnormal{down}} 
%         &= \mathbb{E}\|\Big(\mathbf{a}_{\textnormal{down}} - \mathtt{S}_t(\mathbf{a}_{\textnormal{down}})\Big)\cdot W_{\textnormal{down}}\|_2^2\\
%         \mathcal{L}_{\textnormal{up}} 
%         &= \mathbb{E}\|\Big(\mathbf{a}_{\textnormal{down}} - \mathbf{a}_{\textnormal{gate}}\odot \mathtt{S}_t(\mathbf{a}_{\textnormal{up}})\Big)\cdot W_{\textnormal{down}}\|_2^2\\
%         \mathcal{L}_{\textnormal{gate}} 
%         &= \mathbb{E}\|\Big(\mathbf{a}_{\textnormal{down}} - \mathtt{S}_t(\mathbf{a}_{\textnormal{gate}})\odot \mathbf{a}_{\textnormal{up}}\Big)\cdot W_{\textnormal{down}}\|_2^2
%     \end{align}
%     Then under the assumption that consistent with experimental observations, we have
%     \begin{equation}
%         \mathcal{L}_{\textnormal{down}} \leq \mathcal{L}_{\textnormal{up}} < \mathcal{L}_{\textnormal{gate}}
%     \end{equation}
% \end{theorem}

Here we restate the  theorem in a formal format.

\begin{theorem}\label{thm: formal}
    Let \(\mathbf{a}_{\textnormal{gate}} \in \mathbb{R}^m\) and \(\mathbf{a}_{\textnormal{up}} \in \mathbb{R}^m\) be the activations after the \texttt{SiLU}
    function 
    and the up projection, respectively, and define \(\mathbf{a}_{\textnormal{down}} = \mathbf{a}_{\textnormal{gate}} \odot \mathbf{a}_{\textnormal{up}}\). Let \(\mathbf{W}^{\textnormal{down}} \in \mathbb{R}^{m \times n}\) be the weight matrix for the down projection. 
    From the definition of \(\mathtt{S}_t\) in~\cref{eq: s_t def}, we define:
    \begin{align}
        \mathcal{L}_{\textnormal{down}} 
        &= \mathbb{E}\bigl\|\bigl(\mathbf{a}_{\textnormal{down}} - \mathtt{S}_t(\mathbf{a}_{\textnormal{down}})\bigr)\,\mathbf{W}^{\textnormal{down}}\bigr\|_2^2,\\
        \mathcal{L}_{\textnormal{up}} 
        &= \mathbb{E}\bigl\|\bigl(\mathbf{a}_{\textnormal{down}} - \mathbf{a}_{\textnormal{gate}} \odot \mathtt{S}_t(\mathbf{a}_{\textnormal{up}})\bigr)\,\mathbf{W}^{\textnormal{down}}\bigr\|_2^2,\\
        \mathcal{L}_{\textnormal{gate}} 
        &= \mathbb{E}\bigl\|\bigl(\mathbf{a}_{\textnormal{down}} - \mathtt{S}_t(\mathbf{a}_{\textnormal{gate}}) \odot \mathbf{a}_{\textnormal{up}}\bigr)\,\mathbf{W}^{\textnormal{down}}\bigr\|_2^2.
    \end{align}
    We assump that all the $W_{ij}$ in $\mathbf{W}^{\textnormal{down}}$ are \textit{i.i.d.} and satisfies $W_{ij} \sim \mathcal{N}(0, \sigma_W^2)(i \in [m], j \in [n])$.Similarly, for all $i \in [m]$, we assume $a_{\text{gate},i}$ are $i.i.d.$ and satisfies $a_{\text{gate},i} \sim \mathcal{N}(0, \sigma_{\text{gate}}^2)$ . And for all $i \in [m]$, we let $a_{\text{up},i}$ are $i.i.d.$ and $a_{\text{gate},i} = x_{\text{gate},i} - c$ for some constant $c > 0$, where $x_{\text{gate},i}$ satisfies exponential distribution with parameter $\lambda$. 
    We also assume $a_{\text{up}}$ and $a_{\text{gate}}$ are independent. 
    % Define $ t_{\eta} $ to be the activation threshold on $a_{\text{gate},i}$ for given non-sparisity rate $\eta \in [0,1]$, i.e., $P\left( |a_{\text{gate},i} | > t_{\eta} \right) = \eta$.
    Then if we keep the threshold of sparsity such that $(1-\eta) \times 100\%$ elements of the activations are set to zero in \(\mathtt{S}_t\), we can explictly write out $\mathcal{L}_{\textnormal{up}}$ and $\mathcal{L}_{\textnormal{gate}}$ as follows:
    \begin{equation}
        \mathcal{L}_{\textnormal{up}} = n m \sigma_W^2 \cdot  \sigma_{\text{up}}^2 (\frac{2}{\lambda^2} - \frac{2c}{\lambda} + c^2)\cdot \Big( 1 - \eta - 2 z_{\eta} \phi(z_{\eta}) \Big). 
    \end{equation}
    \begin{equation}
        \mathcal{L}_{\textnormal{up}} = n m \sigma_W^2 \cdot  \sigma_{\text{up}}^2 \cdot\Big[e^{\lambda (q_\eta-c)} \big(  
        \frac{2}{\lambda^2} 
        - 2 \frac{q_\eta}{\lambda}
        + q_\eta^2 
        \big) - e^{-\lambda (c + q_\eta)} \big( 
         \frac{2}{\lambda^2} 
         + 2 \frac{q_\eta}{\lambda}
        + q_\eta^2
        \big)\Big], 
    \end{equation}
    where $z_{\eta} = \Phi^{-1}\left(1 - \frac{\eta}{2}\right)$, $ \phi(x) = \frac{1}{\sqrt{2\pi}} e^{-x^2 / 2}$ and $q_\eta = \frac{1}{\lambda c}\sinh^{-1}(\frac{1-\eta}{2}e^{\lambda c})$.
    Furthermore, if $\lambda c \geq 2$, and $\eta \in [e^{-4}, 1/2]$, we can obtain
    % Then under assumptions consistent with experimental observations, we have
    \begin{equation}
        \mathcal{L}_{\textnormal{down}} \;\leq\; \mathcal{L}_{\textnormal{up}} \;<\; \mathcal{L}_{\textnormal{gate}}.
    \end{equation}
\end{theorem}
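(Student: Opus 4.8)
# Proof Proposal for Theorem~\ref{thm: formal}

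The plan is to compute each of the three loss quantities in closed form by exploiting the independence of $\mathbf{W}^{\textnormal{down}}$ from the activations, and then compare the resulting scalar expressions under the stated parameter regime. First I would observe that since each $W_{ij}\sim\mathcal{N}(0,\sigma_W^2)$ is independent of the activations, for any random vector $\mathbf{r}\in\mathbb{R}^m$ (here $\mathbf{r}$ will be one of the three residuals $\mathbf{a}_{\textnormal{down}}-\mathtt{S}_t(\mathbf{a}_{\textnormal{down}})$, etc.) we have
\begin{equation}
\mathbb{E}\bigl\|\mathbf{r}\,\mathbf{W}^{\textnormal{down}}\bigr\|_2^2 = \sum_{j=1}^n \mathbb{E}\Bigl(\sum_{i=1}^m r_i W_{ij}\Bigr)^2 = n\,\sigma_W^2\,\mathbb{E}\|\mathbf{r}\|_2^2 = n\,\sigma_W^2 \sum_{i=1}^m \mathbb{E}[r_i^2].
\end{equation}
So each loss reduces to $n\sigma_W^2$ times a sum of $m$ i.i.d.\ scalar second moments, and the whole comparison becomes a one-dimensional problem: compare $\mathbb{E}[(a_{\textnormal{down}}-\mathtt{S}_t(a_{\textnormal{down}}))^2]$, $\mathbb{E}[a_{\textnormal{gate}}^2(a_{\textnormal{up}}-\mathtt{S}_t(a_{\textnormal{up}}))^2]$, and $\mathbb{E}[(a_{\textnormal{gate}}-\mathtt{S}_t(a_{\textnormal{gate}}))^2 a_{\textnormal{up}}^2]$ at a single coordinate.

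Next I would handle $\mathcal{L}_{\textnormal{up}}$: using independence of $a_{\textnormal{up}}$ and $a_{\textnormal{gate}}$, the residual is $a_{\textnormal{gate}}(a_{\textnormal{up}}-\mathtt{S}_t(a_{\textnormal{up}}))$, so $\mathbb{E}[\cdot^2] = \mathbb{E}[a_{\textnormal{gate}}^2]\cdot\mathbb{E}[(a_{\textnormal{up}}-\mathtt{S}_t(a_{\textnormal{up}}))^2]$. The first factor is $\sigma_{\textnormal{gate}}^2$ (wait — the stated formula has $\sigma_{\textnormal{up}}^2$ and an exponential-type factor; I would reconcile this by taking the threshold to be set on $a_{\textnormal{up}}$ which is a shifted exponential, hence the $\frac{2}{\lambda^2}-\frac{2c}{\lambda}+c^2$ term is $\mathbb{E}[a_{\textnormal{up}}^2]$ when $a_{\textnormal{up}}=x-c$ with $x\sim\textnormal{Exp}(\lambda)$, and the bracket $(1-\eta-2z_\eta\phi(z_\eta))$ comes from the Gaussian truncated second moment when the sparsified activation is $a_{\textnormal{gate}}\sim\mathcal{N}(0,\sigma_{\textnormal{gate}}^2)$). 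The key computation is the truncated-second-moment identity: for $Z\sim\mathcal{N}(0,1)$, $\mathbb{E}[Z^2\mathbf{1}\{|Z|<z_\eta\}] = 1-\eta-2z_\eta\phi(z_\eta)$ where $z_\eta=\Phi^{-1}(1-\eta/2)$, which follows from integration by parts on $\int_{-z_\eta}^{z_\eta} y^2\phi(y)\,dy$. For $\mathcal{L}_{\textnormal{gate}}$ I would similarly split off $\mathbb{E}[a_{\textnormal{up}}^2]$ and compute the truncated second moment of the shifted exponential $a_{\textnormal{gate}}$, which produces the $q_\eta$-expression involving $\sinh^{-1}$; the quantile $q_\eta$ is obtained by solving $\mathbb{P}(|a_{\textnormal{gate}}|<q_\eta)=\eta$, i.e.\ $\Phi$-type inversion replaced by the two-sided tail of the shifted exponential. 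For $\mathcal{L}_{\textnormal{down}}$ I would just note it equals $n\sigma_W^2$ times the truncated second moment of $a_{\textnormal{down}}$ itself, which is bounded above by $\mathcal{L}_{\textnormal{up}}$ because thresholding the product $a_{\textnormal{gate}}a_{\textnormal{up}}$ directly removes exactly the smallest-magnitude coordinates of the product, whereas thresholding $a_{\textnormal{up}}$ alone removes a suboptimal set; this is a pointwise-optimality argument (for a fixed mass $1-\eta$ to zero out, removing the smallest $|a_{\textnormal{down}}|$ minimizes $\sum$ over removed coords of $a_{\textnormal{down}}^2$).

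Finally I would establish the inequality chain $\mathcal{L}_{\textnormal{down}}\le\mathcal{L}_{\textnormal{up}}<\mathcal{L}_{\textnormal{gate}}$ in the regime $\lambda c\ge 2$, $\eta\in[e^{-4},1/2]$. The left inequality follows from the optimality remark above (thresholding the true product beats thresholding a factor). For the right inequality $\mathcal{L}_{\textnormal{up}}<\mathcal{L}_{\textnormal{gate}}$, after canceling the common factor $nm\sigma_W^2\sigma_{\textnormal{up}}^2$, I reduce to a purely analytic inequality between the Gaussian bracket $\bigl(\tfrac{2}{\lambda^2}-\tfrac{2c}{\lambda}+c^2\bigr)(1-\eta-2z_\eta\phi(z_\eta))$ and the shifted-exponential bracket in terms of $q_\eta$; I would bound $z_\eta\phi(z_\eta)$ and $q_\eta$ from above/below on the interval $\eta\in[e^{-4},1/2]$ using monotonicity in $\eta$, and use $\lambda c\ge 2$ to control the sign and size of the exponential terms $e^{\lambda(q_\eta-c)}$ and $e^{-\lambda(c+q_\eta)}$ (the latter being exponentially small). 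I expect the main obstacle to be exactly this last step: the $\mathcal{L}_{\textnormal{gate}}$ expression is a difference of two exponential-polynomial terms and showing it strictly exceeds the Gaussian quantity uniformly over the whole rectangle in $(\lambda c,\eta)$ requires careful monotonicity/convexity estimates rather than a one-line bound — in particular verifying that the lower bound on $q_\eta$ stays large enough relative to $c$ that $e^{\lambda(q_\eta-c)}$ does not collapse the gate loss. A secondary subtlety is making sure the definition of $\mathtt{S}_t$ used in each $\mathcal{L}$ is the two-sided magnitude threshold consistent with~\cref{eq: s_t def}, so that the "$(1-\eta)\times100\%$ zeroed" normalization lines up with the $z_\eta=\Phi^{-1}(1-\eta/2)$ and $q_\eta$ definitions; I would state this alignment as a short lemma before the main estimates.
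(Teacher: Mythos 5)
Your overall skeleton matches the paper's proof: reduce each loss via the i.i.d.\ zero-mean Gaussian $\mathbf{W}^{\textnormal{down}}$ to $n\sigma_W^2$ times per-coordinate second moments, factor $\mathcal{L}_{\textnormal{up}}$ and $\mathcal{L}_{\textnormal{gate}}$ by independence into (full second moment of the untouched factor) $\times$ (truncated second moment of the sparsified factor), get $\mathcal{L}_{\textnormal{down}}$ minimal by the optimality of magnitude-thresholding the product itself, and compute the Gaussian and shifted-exponential truncated second moments. However, your handling of the distributional assumptions is a genuine problem. The only assignment consistent with the stated closed forms, with the claimed direction $\mathcal{L}_{\textnormal{up}}<\mathcal{L}_{\textnormal{gate}}$, and with the paper's empirical remark is: $a_{\textnormal{up},i}\sim\mathcal{N}(0,\sigma_{\textnormal{up}}^2)$ and $a_{\textnormal{gate},i}=x_i-c$ with $x_i$ exponential (the SiLU output is the shifted exponential). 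Your explicit reconciliation takes the opposite assignment ($a_{\textnormal{up}}$ shifted exponential, $a_{\textnormal{gate}}$ Gaussian) and is also internally inconsistent: in $\mathcal{L}_{\textnormal{up}}$ the sparsified factor is $a_{\textnormal{up}}$, not $a_{\textnormal{gate}}$, so the Gaussian bracket cannot arise from "sparsifying $a_{\textnormal{gate}}$"; and later, for $\mathcal{L}_{\textnormal{gate}}$, you switch to treating $a_{\textnormal{gate}}$ as the shifted exponential. If your first assignment were carried through consistently, the two closed forms would swap and you would obtain $\mathcal{L}_{\textnormal{up}}>\mathcal{L}_{\textnormal{gate}}$, i.e.\ the reverse of the theorem, so this is not a cosmetic issue. (Minor: the quantile should satisfy $P(|a|\ge t_\eta)=\eta$, not $P(|a|<q_\eta)=\eta$, given that $(1-\eta)\times100\%$ of entries are zeroed.)

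The second, larger gap is that the decisive step is only named, not proved. After normalization the whole comparison $\mathcal{L}_{\textnormal{up}}<\mathcal{L}_{\textnormal{gate}}$ is the inequality $F(\eta)=1-\eta-2z_\eta\phi(z_\eta)<G(\eta,p)$, where $G$ is the shifted-exponential truncated second moment divided by its full second moment and $p=\lambda c$; this is precisely the paper's key lemma (\cref{lemma: F G key}), and your plan defers it to unspecified "monotonicity/convexity estimates." Generic monotonicity in $\eta$ plus "the $e^{-\lambda(c+q_\eta)}$ term is exponentially small" does not close it: the leading gate term carries the factor $e^{\lambda(t_\eta-c)}<1$ (since $t_\eta<c$ in this regime), and one needs the sharp quantile identity $\sinh(\lambda t_\eta)=\tfrac{1-\eta}{2}e^{\lambda c}$ to get $e^{\lambda(t_\eta-c)}>1-\eta$ and hence a lower bound of the form $G_1\ge(1-\eta)\bigl(1+\ln(1-\eta)\bigr)$, together with the bound $e^{-2p}\tfrac{p^2+2p+2}{p^2-2p+2}\le 5e^{-4}$ for $p\ge2$ to control the subtracted term, and finally positivity of $2z_\eta\phi(z_\eta)-(1-\eta)\bigl[5e^{-4}-\ln(1-\eta)\bigr]$ on $\eta\in[e^{-4},1/2]$, which the paper establishes by a unimodality/monotonicity argument plus explicit endpoint evaluations ($\approx0.116>0.108$ and $\approx0.429>0.392$). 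You correctly identify that "$e^{\lambda(q_\eta-c)}$ must not collapse the gate loss" as the crux, but without these quantitative ingredients (or equivalents) the claimed strict inequality over the whole rectangle $\lambda c\ge2$, $\eta\in[e^{-4},1/2]$ is not established, and the narrow numerical margins at the endpoints show a coarse bound will not suffice.
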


\begin{figure}
    \centering
    \includegraphics[width=0.5\linewidth]{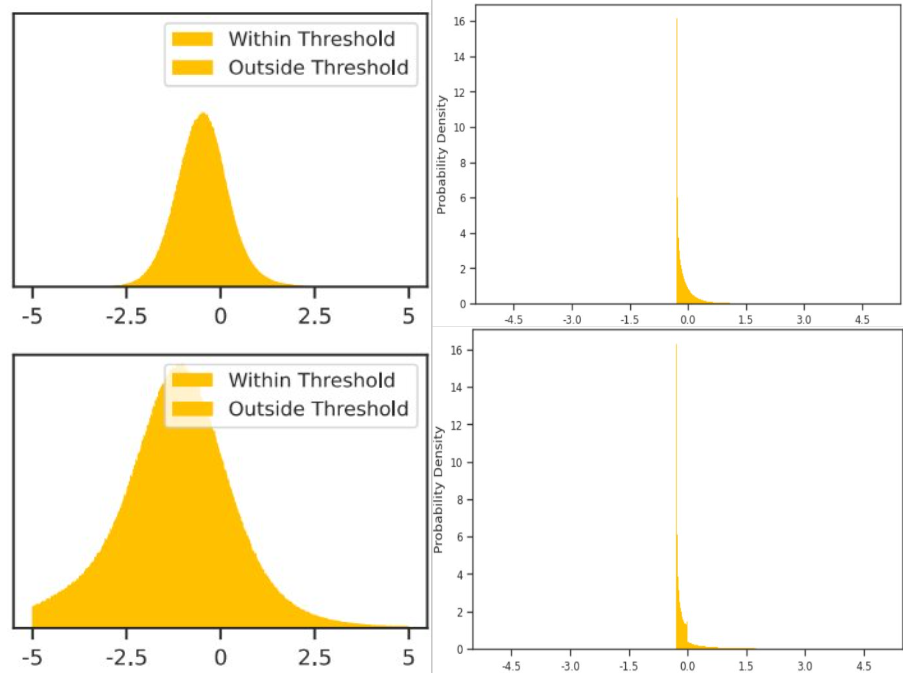}
    \hfill\includegraphics[width=0.45\linewidth]{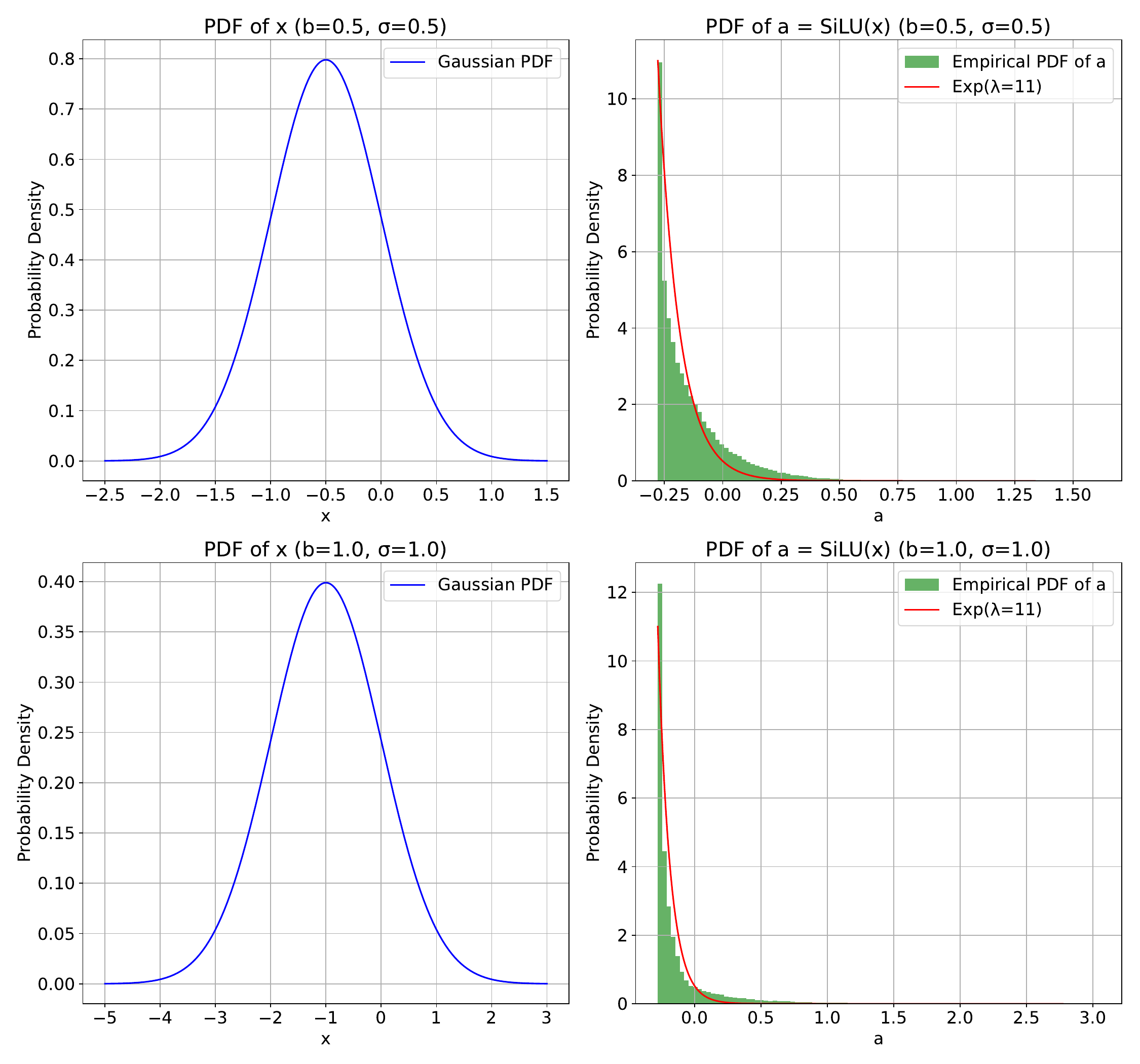}
    \caption{(\textbf{Left}) Distribution of gate-projection elements before (1st column) and after (2rd column) SiLU functions. The first row is for 15-th layer and second row is for 31-th layer. We can see that before SiLU function, the activations are roughly shifted gaussian, while after SiLU, they has very high probability density at value $x \approx -0.28$, which is the minimum value of SiLU function.
    (\textbf{Right}) Simulation of SiLU outputs on shifted gaussian variables. We find that for input $x \sim \mathcal{N}(-b, \sigma^2)$ with reasonable $b, \sigma$ (as in the left figure), the outputs after SiLU function has similar truncated unimodal distribution, and can be well fitted by shifted exponential distribution, i.e., $a = x - c$ for $x$ satisfies exponential distribution with parameters $\lambda = 11$ and shifted constant $c = 0.28$. 
    }
    \label{fig:gate_no_relu}
\end{figure}

\begin{remark}
    Here we discuss the rationality of the theorem assumptions: 
    First, the choice of $\mathcal{L}_2$ loss, independence between random variables, and gaussian assumptions are widely used in machine learning theory community~\cite{tripuraneni2020theory,tripuraneni2021provable,du2020few,thekumparampil2021sample,pmlr-v162-chen22j,wang2023improved}, and from Figure~\ref{fig:sparsity_ob} in the main paper, we can observe that the distribution of elements in activations after up projection satisfy gaussian distrbution.
    
    On the other hand, the shifted exponential distribution on gate-projection activations mainly comes from the property of SiLU function. The distribution of gate-projection elements before and after SiLU functions are shown in Figure~\ref{fig:gate_no_relu}.
    We find that the distribution of $a_{\text{gate},i}$ can be well fitted by shifted exponential distribution, i.e., $a = x - c$ for $x$ satisfies exponential distribution with parameters $\lambda \geq 10$ and shifted constant $c = 0.28$. 
    Here $c$ is the negative value of the minimum of SiLU function, and thus is a fixed value. 
    Therefore, our assumptions are consistent with experimental observations, and we can see that condition $\lambda c \geq 2$ is also satisfied.
    We add this data distribution assumption mainly because the theoretical difficulty for handling the reverse function of SiLU function, and we will see that even with this simplification, the proof is still non-trivial.
\end{remark}

Then we comes to the proof of our main theorem:

\textit{Proof of \cref{thm: formal}}:
From assumptions and~\cref{lemma: basic expectation},~\cref{lemma: mean zero}, we have
\begin{align}
    \mathcal{L}_{\textnormal{down}} 
        &= n \sigma_W^2 \cdot  \sigma_{\text{up}}^2 \cdot\mathbb{E}\bigl\|\mathbf{a}_{\textnormal{down}} - \mathtt{S}_t(\mathbf{a}_{\textnormal{down}})\bigr\|_2^2, 
        \\
    \mathcal{L}_{\textnormal{up}} 
        &= n \sigma_W^2 \cdot  \sigma_{\text{up}}^2 \cdot\mathbb{E}\bigl\|\mathbf{a}_{\textnormal{down}} - \mathbf{a}_{\textnormal{gate}} \odot \mathtt{S}_t(\mathbf{a}_{\textnormal{up}})\bigr\|_2^2,\\
    \mathcal{L}_{\textnormal{gate}} 
        &= n \sigma_W^2 \cdot  \sigma_{\text{up}}^2 \cdot\mathbb{E}\bigl\|\mathbf{a}_{\textnormal{down}} - \mathtt{S}_t(\mathbf{a}_{\textnormal{gate}}) \odot \mathbf{a}_{\textnormal{up}}\|_2^2.
    \end{align}

Note that obviously, for any vector $\mathbf{a}$ and any fixed ratio of non-sparsity rate $1-\eta$, $\mathtt{S}_t(a)$ is the sparsified vectors with maximum norm, and all three kind of sparsification strategies have the same non-sparsity ratios, so we must have
\begin{align}
    \|\mathbf{a}_{\textnormal{down}} - \mathtt{S}_t(\mathbf{a}_{\textnormal{down}})\bigr\|_2^2 &\leq \|\mathbf{a}_{\textnormal{down}} - \mathbf{a}_{\textnormal{gate}} \odot \mathtt{S}_t(\mathbf{a}_{\textnormal{up}})\bigr\|_2^2
    \\
    \|\mathbf{a}_{\textnormal{down}} - \mathtt{S}_t(\mathbf{a}_{\textnormal{down}})\bigr\|_2^2 &\leq 
    \|\mathbf{a}_{\textnormal{down}} - \mathtt{S}_t(\mathbf{a}_{\textnormal{gate}}) \odot \mathbf{a}_{\textnormal{up}}\|_2^2
\end{align}

On the other hand, note that
\begin{align}
    \mathbb{E}\bigl\|\mathbf{a}_{\textnormal{down}} - \mathbf{a}_{\textnormal{gate}} \odot \mathtt{S}_t(\mathbf{a}_{\textnormal{up}})\bigr\|_2^2
    &= \mathbb{E}\bigl\|\mathbf{a}_{\textnormal{gate}} \odot (\mathbf{a}_{\textnormal{up}} - \mathtt{S}_t(\mathbf{a}_{\textnormal{up}}))\bigr\|_2^2\\
    &= m\mathbb{E}\bigl[{a}_{\textnormal{gate},i} \cdot ({a}_{\textnormal{up},i} - \mathtt{S}_t({a}_{\textnormal{up},i}))\bigr]^2, \quad (\text{i.i.d.})\\
    &= m\mathbb{E}\bigl[{a}_{\textnormal{gate},i}^2 \bigl]\cdot \mathbb{E}\bigl[({a}_{\textnormal{up},i} - \mathtt{S}_t({a}_{\textnormal{up},i}))\bigr]^2, \quad (\text{independence, Lemma~\ref{lemma: mean zero}, Lemma~\ref{lemma: basic expectation}})
\end{align}

Similar formulas hold for $\mathcal{L}_{\text{gate}}$. Then combining~\cref{lemma: gaussian},~\cref{lemma: exp}, and~\cref{lemma: F G key}, we can get the results.

\qed

\subsection{Technical Proof}
\begin{lemma}
\label{lemma: gaussian}
Assume that random variable $ a \sim \mathcal{N}(0, \sigma^2) $. For a given $ \eta \in (0,1) $, define the threshold $ t_{\eta} $ such that $P\left( |a| > t_{\eta} \right) = \eta$. 
Then if we define the inverse sparsity function $ \bar{S}_{t_\eta}(a) $ as
\begin{equation}\label{eq:sa_def}
\bar{S}_{t_\eta}(a) = 
\begin{cases}
0, & \textnormal{if } |a| \geq t_{\eta}, \\
a, & \textnormal{otherwise}.
\end{cases}
\end{equation}
Then, the threshold $ t_{\eta} $ and the expectation $ \mathbb{E}[\bar{S}_{t_\eta}(a)^2] $ are given by
\begin{equation}\label{eq:threshold_expression}
t_{\eta} = \sigma \, \Phi^{-1}\left(1 - \frac{\eta}{2}\right),
\end{equation}
and
\begin{equation}\label{eq:expected_sa2}
\mathbb{E}[\bar{S}_{t_\eta}(a)^2] = \sigma^2 \left[ 1 - \eta - 2 z_{\eta} \phi(z_{\eta}) \right] = \Big( 1 - \eta - 2 z_{\eta} \phi(z_{\eta}) \Big) \cdot \mathbb{E}[a^2] ,
\end{equation}
where $ z_{\eta} = \Phi^{-1}\left(1 - \frac{\eta}{2}\right) =  t_\eta/\sigma $. And as defined in Appendix~\ref{sec: preliminary}, $ \phi(x) = \frac{1}{\sqrt{2\pi}} e^{-x^2 / 2} $ is the PDF of the standard normal distribution, $ \Phi^{-1}(\cdot) $ denotes its inverse cumulative distribution function (CDF).
\end{lemma}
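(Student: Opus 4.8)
The statement is a self-contained computation about a symmetric truncated Gaussian, so the plan is to establish its two claims directly rather than appealing to anything external. First I would pin down the threshold $t_\eta$. Standardizing via $Z = a/\sigma \sim \mathcal{N}(0,1)$ and exploiting the symmetry of the centered Gaussian, the tail probability splits for $t_\eta \ge 0$ as $P(|a| > t_\eta) = 2\bigl(1 - \Phi(t_\eta/\sigma)\bigr)$. Setting this equal to $\eta$ gives $\Phi(t_\eta/\sigma) = 1 - \eta/2$, and inverting the CDF yields $t_\eta = \sigma\,\Phi^{-1}(1-\eta/2) = \sigma z_\eta$, which is exactly \cref{eq:threshold_expression}.

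Second, I would compute $\mathbb{E}[\bar{S}_{t_\eta}(a)^2]$. Since $\bar{S}_{t_\eta}$ retains $a$ exactly on the event $\{|a| < t_\eta\}$ and annihilates it otherwise, the second moment reduces to the truncated integral $\mathbb{E}[\bar{S}_{t_\eta}(a)^2] = \int_{-t_\eta}^{t_\eta} x^2 \tfrac{1}{\sqrt{2\pi}\,\sigma} e^{-x^2/(2\sigma^2)}\,dx$. The substitution $x = \sigma u$ converts this to $\sigma^2 \int_{-z_\eta}^{z_\eta} u^2 \phi(u)\,du$, cleanly isolating all $\sigma$-dependence into the prefactor $\sigma^2$. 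The remaining dimensionless integral I would evaluate by integration by parts, using the defining identity $\phi'(u) = -u\,\phi(u)$ so that an antiderivative of $u^2\phi(u)$ is $\Phi(u) - u\,\phi(u)$.

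Evaluating this antiderivative at the endpoints $\pm z_\eta$ and invoking the evenness of $\phi$ together with $\Phi(-z_\eta) = 1 - \Phi(z_\eta)$ gives $\int_{-z_\eta}^{z_\eta} u^2\phi(u)\,du = 2\Phi(z_\eta) - 1 - 2 z_\eta\,\phi(z_\eta)$. Finally I would substitute the threshold relation $\Phi(z_\eta) = 1 - \eta/2$ obtained in the first step, which collapses $2\Phi(z_\eta) - 1$ into $1 - \eta$ and produces the bracket $1 - \eta - 2 z_\eta\,\phi(z_\eta)$; multiplying by the prefactor $\sigma^2 = \mathbb{E}[a^2]$ then yields both equivalent forms asserted in \cref{eq:expected_sa2}. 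There is no genuine obstacle here, since the computation is elementary; the only point requiring care is the sign bookkeeping when evaluating the odd term $u\,\phi(u)$ at the symmetric endpoints, which is precisely what causes the two contributions to add rather than cancel and hence produces the $-2 z_\eta\,\phi(z_\eta)$ term.
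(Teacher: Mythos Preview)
Your proposal is correct and takes essentially the same approach as the paper: standardize, invoke symmetry to obtain $t_\eta = \sigma\,\Phi^{-1}(1-\eta/2)$, and then evaluate the second-moment integral via integration by parts on $u^2\phi(u)$ using $\phi'(u)=-u\phi(u)$. The only cosmetic difference is that the paper computes the tail $\mathbb{E}[a^2\mathbf{1}_{|a|\ge t_\eta}] = 2\sigma^2\!\int_{z_\eta}^{\infty}u^2\phi(u)\,du$ and subtracts it from $\sigma^2$, whereas you integrate over $[-z_\eta,z_\eta]$ directly; both routes land on the same antiderivative and the same substitution $\Phi(z_\eta)=1-\eta/2$.
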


\begin{proof}
\textbf{Threshold $ t_{\eta} $:}

Given $ a \sim \mathcal{N}(0, \sigma^2) $, standardize $ a $ by defining $ Z = \frac{a}{\sigma} $, so that $ Z \sim \mathcal{N}(0,1) $. 
% The condition $ P(|a| > t_{\eta}) \leq \eta $ can be rewritten in terms of $ Z $ as
Therefore:
$$
P\left( |Z| > \frac{t_{\eta}}{\sigma} \right) \leq \eta.
$$
Due to the symmetry, we have
$$
2P\left( Z > \frac{t_{\eta}}{\sigma} \right) \leq \eta \quad \Rightarrow \quad P\left( Z > \frac{t_{\eta}}{\sigma} \right) \leq \frac{\eta}{2}.
$$
Therefore we have
$$
\frac{t_{\eta}}{\sigma} = \Phi^{-1}\left(1 - \frac{\eta}{2}\right) \quad \Rightarrow t_{\eta} = \sigma \, \Phi^{-1}\left(1 - \frac{\eta}{2}\right).
$$

\textbf{Expectation $ \mathbb{E}[\bar{S}_{t_\eta}(a)^2] $:}

First note that
\begin{align}
\mathbb{E}[\bar{S}_{t_\eta}(a)^2] 
&= \mathbb{E}\left[ a^2 \cdot \mathbf{1}_{\{|a| < t_{\eta}\}} \right]\\
&= \mathbb{E}[a^2] - \mathbb{E}\left[ a^2 \cdot \mathbf{1}_{\{|a| \geq t_{\eta}\}} \right]\\
&= \sigma^2 - \mathbb{E}\left[ a^2 \cdot \mathbf{1}_{\{|a| \geq t_{\eta}\}} \right]
\end{align}

And let $z = a/\sigma \sim N(0, \sigma^2)$, $z_{\eta} = t_{\eta}/\sigma$, we can obtain
\begin{align}
    \mathbb{E}\left[ a^2 \cdot \mathbf{1}_{\{|a| \geq t_{\eta}\}} \right]
    &= 2 \sigma^2\int_{z_\eta}^{\infty}z^2 \phi(z) dz\\
    &= 2 \sigma^2 \{[-z\phi(z)]^{\infty}_{z_{\eta}} + \int_{z_\eta}^{\infty}\phi(z)dz\}\\
    &= 2 \sigma^2 (z_\eta \phi(z_\eta) + Q(z_\eta))
\end{align}

where $ Q(z) = 1 - \Phi(z) $. Substituting $ z_{\eta} = \Phi^{-1}\left(1 - \frac{\eta}{2}\right) $, we get $Q(z_{\eta}) = \frac{\eta}{2}$. Therefore, finally we have
\begin{equation}
\mathbb{E}[\bar{S}_{t_\eta}(a)^2] = \sigma^2 - 2 \sigma^2 \left[ \frac{z_{\eta}}{\sqrt{2\pi}} e^{-z_{\eta}^2 / 2} + \frac{\eta}{2} \right] = \sigma^2 \left[ 1 - \eta - 2 z_{\eta} \phi(z_{\eta}) \right]
\end{equation}
% where $ \phi(z) = \frac{1}{\sqrt{2\pi}} e^{-z^2 / 2} $.
This concludes the proof.
\end{proof}

\begin{lemma}
\label{lemma: exp}
We define $t_\eta$ and $\bar{S}_{t_\eta}$ similarly  as Lemma~\ref{lemma: gaussian}.
If $x$ satisfies exponential distribution with parameter $\lambda$, and $a = x - c$ for some constant $c$ ($c \geq t_\eta$). Then we have
    \begin{equation}
        \mathbb{E}[a^2] = \frac{2}{\lambda^2} - \frac{2c}{\lambda} + c^2
    \end{equation}
    And,
    \begin{equation}
        \mathbb{E}[\bar{S}_{t_\eta}(a)^2] = 
        e^{\lambda (t_\eta-c)} \big(  
\frac{2}{\lambda^2} 
- 2 \frac{t_\eta}{\lambda}
+ t_\eta^2 
\big) - e^{-\lambda (c + t_\eta)} \big( 
 \frac{2}{\lambda^2} 
 + 2 \frac{t_\eta}{\lambda}
+ t_\eta^2
\big)
\end{equation}
Furthermore,  $t_\eta$ satisfies
\begin{equation}
    t_\eta  =
    \begin{cases} 
    \frac{1}{\lambda}\sinh^{-1}\Big(\frac{1-\eta}{2}e^{\lambda c}\Big), & \eta \geq \exp(-2\lambda c) \\
    - \frac{1}{\lambda}\ln(\eta) - c, &\text{otherwise}. \\
    \end{cases}
\end{equation}

where $\sinh^{-1}(x) = \ln(x + \sqrt{x^2 + 1})$.
\end{lemma}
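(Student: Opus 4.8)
The plan is to establish the three assertions of \cref{lemma: exp} by direct computation against the shifted‑exponential density, in the order they are stated.

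\textbf{Second moment.} Since $x\sim\mathrm{Exp}(\lambda)$ has $\mathbb{E}[x]=1/\lambda$ and $\mathbb{E}[x^2]=2/\lambda^2$, expanding $a^2=(x-c)^2=x^2-2cx+c^2$ and taking expectations gives $\mathbb{E}[a^2]=2/\lambda^2-2c/\lambda+c^2$ immediately.

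\textbf{Threshold $t_\eta$.} Here $t_\eta$ is defined by $P(|a|>t_\eta)=\eta$ (as in \cref{lemma: gaussian}). The key point is that $x\ge 0$ forces $a=x-c\ge -c$, so the left‑tail event $\{a<-t_\eta\}$ is nonempty precisely when $t_\eta\le c$. I split into two regimes. When $t_\eta\le c$, both tails contribute: $P(a>t_\eta)=P(x>c+t_\eta)=e^{-\lambda(c+t_\eta)}$ and $P(a<-t_\eta)=P(x<c-t_\eta)=1-e^{-\lambda(c-t_\eta)}$; summing and setting the result equal to $\eta$ yields $1-\eta=e^{-\lambda(c-t_\eta)}-e^{-\lambda(c+t_\eta)}=2e^{-\lambda c}\sinh(\lambda t_\eta)$, hence $t_\eta=\tfrac{1}{\lambda}\sinh^{-1}\!\big(\tfrac{1-\eta}{2}e^{\lambda c}\big)$. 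I then check that this root actually satisfies $t_\eta\le c$: this is equivalent to $\tfrac{1-\eta}{2}e^{\lambda c}\le\sinh(\lambda c)$, which rearranges to $\eta\ge e^{-2\lambda c}$, giving the stated case boundary. In the complementary regime $\eta<e^{-2\lambda c}$, the root has $t_\eta>c$ so the left tail is empty, and $e^{-\lambda(c+t_\eta)}=\eta$ gives $t_\eta=-\tfrac{1}{\lambda}\ln\eta-c$. I will remark that the lemma's standing hypothesis $c\ge t_\eta$ puts us in the first branch, consistent with the range $\eta\in[e^{-4},1/2]$, $\lambda c\ge 2$ used in \cref{thm: formal}.

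\textbf{Truncated second moment.} By the definition of $\bar S_{t_\eta}$ (cf.\ \cref{eq: s_t def}), $\bar S_{t_\eta}(a)^2=a^2\mathbf 1_{\{|a|<t_\eta\}}$, so $\mathbb{E}[\bar S_{t_\eta}(a)^2]=\int_{|x-c|<t_\eta}(x-c)^2\lambda e^{-\lambda x}\,dx$. Because $c\ge t_\eta$, the integration domain is the interval $(c-t_\eta,\,c+t_\eta)\subset[0,\infty)$, so the substitution $u=x-c$ turns this into $\lambda e^{-\lambda c}\int_{-t_\eta}^{t_\eta}u^2 e^{-\lambda u}\,du$. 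Using the antiderivative $\int u^2 e^{-\lambda u}\,du=-e^{-\lambda u}\big(u^2/\lambda+2u/\lambda^2+2/\lambda^3\big)$ (two integrations by parts), evaluating at $\pm t_\eta$, multiplying through by $\lambda e^{-\lambda c}$, and grouping the $e^{+\lambda t_\eta}$ and $e^{-\lambda t_\eta}$ terms produces exactly $e^{\lambda(t_\eta-c)}\big(\tfrac{2}{\lambda^2}-\tfrac{2t_\eta}{\lambda}+t_\eta^2\big)-e^{-\lambda(c+t_\eta)}\big(\tfrac{2}{\lambda^2}+\tfrac{2t_\eta}{\lambda}+t_\eta^2\big)$, as claimed.

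\textbf{Main obstacle.} The computations are elementary calculus; the only genuine care‑point is the case analysis for $t_\eta$ — correctly accounting for the one‑sided support of $x$ so that $\{a<-t_\eta\}$ contributes only when $t_\eta\le c$, and pinning the crossover at $\eta=e^{-2\lambda c}$. Once the integration limits are fixed by the hypothesis $c\ge t_\eta$, the truncated‑moment integral is purely mechanical, and in the downstream application one simply substitutes the $\sinh^{-1}$ expression for $t_\eta$ into it.
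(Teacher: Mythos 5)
Your proposal is correct and follows essentially the same route as the paper's proof: direct moment computation for $\mathbb{E}[a^2]$, the substitution $u=x-c$ with the hypothesis $c\ge t_\eta$ fixing the integration range $(-t_\eta,t_\eta)$ for the truncated moment, and the same two-tail versus one-tail case split at $\eta=e^{-2\lambda c}$ for $t_\eta$. The only cosmetic difference is that you evaluate the antiderivative of $u^2e^{-\lambda u}$ directly where the paper folds the integral by symmetry into $2\int_0^{t_\eta}u^2\cosh(\lambda u)\,du$; both yield the identical closed form, and your explicit consistency check that the $\sinh^{-1}$ root indeed satisfies $t_\eta\le c$ exactly mirrors the paper's boundary verification.
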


\begin{proof}

\textbf{Expectation} $ \mathbb{E}[a^2] $:

Since $ x \sim \textnormal{Exp}(\lambda) $, we have $ \mathbb{E}[x] = \frac{1}{\lambda} $ and $ \textnormal{Var}(x) = \frac{1}{\lambda^2} $. For $ a = x - c $:
\begin{equation}
\mathbb{E}[a] = \mathbb{E}[x] - c = \frac{1}{\lambda} - c, \quad \textnormal{Var}(a) = \textnormal{Var}(x) = \frac{1}{\lambda^2}.
\end{equation}
Thus:
\begin{equation}
\mathbb{E}[a^2] = \textnormal{Var}(a) + (\mathbb{E}[a])^2 = \frac{1}{\lambda^2} + \left(\frac{1}{\lambda} - c\right)^2 = \frac{2}{\lambda^2} - \frac{2c}{\lambda} + c^2.
\end{equation}

\textbf{Expectation} $ \mathbb{E}[\bar{S}_{t_\eta}(a)^2] $:

By definition:
\begin{equation}
\mathbb{E}[\bar{S}_{t_\eta}(a)^2] = \mathbb{E}[a^2 \cdot \mathbf{1}_{\{|a| < t_\eta\}}] = \int_{c - t_\eta}^{c + t_\eta} (x - c)^2 \lambda e^{-\lambda x} dx.
\end{equation}
where $ a = x - c $. Recover it to the integral on $a$:
\begin{equation}
\mathbb{E}[\bar{S}_{t_\eta}(a)^2] = \lambda e^{-\lambda c} \int_{-t_\eta}^{t_\eta} a^2 e^{-\lambda a} da.
\end{equation}
Exploiting symmetry (valid if $ c \geq t_\eta $):
\begin{align}
\int_{-t_\eta}^{t_\eta} a^2 e^{-\lambda a} da 
&= \int_{-t_\eta}^{0} a^2 e^{-\lambda a} da + \int_{0}^{t_\eta} a^2 e^{-\lambda a} da \\
&= 2 \int_{0}^{t_\eta} a^2 (\frac{e^{\lambda a} + e^{-\lambda a}}{2}) da \\
&= 2 \int_{0}^{t_\eta} a^2 \cosh(\lambda a) da,
\end{align}

where $ \cosh(\lambda a) = \frac{e^{\lambda a} + e^{-\lambda a}}{2} $. Using the integral formula~\cite{gradshteyn2014table}:
\begin{equation}
\int a^2 \cosh(\lambda a) da = \frac{e^{\lambda a}}{2\lambda^3} \left( (\lambda a)^2 - 2\lambda a + 2 \right) 
- \frac{e^{-\lambda a}}{2\lambda^3} \left( (\lambda a)^2 + 2\lambda a + 2 \right) + C,
\end{equation}
Then we have
\begin{align}
\mathbb{E}[\bar{S}_{t_\eta}(a)^2] 
&= 
2 \lambda e^{-\lambda c}\int_{0}^{t_\eta} a^2 \cosh(\lambda a) da \\
&= 
\frac{ e^{-\lambda c}}{\lambda^2} \Big[ e^{\lambda t_\eta} \big( (\lambda t_\eta)^2 - 2\lambda t_\eta + 2 \big) - e^{-\lambda t_\eta} \big( (\lambda t_\eta)^2 + 2\lambda t_\eta + 2 \big) \Big].\\
&= 
e^{\lambda (t_\eta-c)} \big(  
\frac{2}{\lambda^2} 
- 2 \frac{t_\eta}{\lambda}
+ t_\eta^2 
\big) - e^{-\lambda (c + t_\eta)} \big( 
 \frac{2}{\lambda^2} 
 + 2 \frac{t_\eta}{\lambda}
+ t_\eta^2
\big)
\end{align}
% Combining all terms:
% \begin{equation}
% \mathbb{E}[\bar{S}_{t_\eta}(a)^2] = \frac{2 e^{-\lambda c}}{\lambda^3} \Big[ e^{\lambda t_\eta} \big( (\lambda t_\eta)^2 - 2\lambda t_\eta + 2 \big) - e^{-\lambda t_\eta} \big( (\lambda t_\eta)^2 + 2\lambda t_\eta + 2 \big) \Big].
% \end{equation}
\textbf{Determination of $t_\eta$:}

The threshold satisfies $P(|a| \geq t_\eta) = \eta$. For $a = x - c$ with $x \sim \text{Exp}(\lambda)$:
\begin{equation}
P(x \geq c + t_\eta) + P(x \leq c - t_\eta) = e^{-\lambda(c + t_\eta)} + \left(1 - e^{-\lambda(c - t_\eta)}\right) = \eta.
\end{equation}

\textbf{Case 1: $\eta \leq e^{-2\lambda c}$}

When $t_\eta > c$, the lower tail vanishes:
\begin{equation}
P(x \geq c + t_\eta) = e^{-\lambda(c + t_\eta)} = \eta \implies t_\eta = -\frac{1}{\lambda}\ln(\eta) - c. 
\end{equation}
It's clear the condition in this case is: $t_\eta \geq c \Rightarrow \eta \leq \exp(-2\lambda c)$.

\textbf{Case 2: $\eta \geq e^{-2\lambda c}$}

When $t_\eta \leq c$, both terms contribute:
\begin{align}
e^{-\lambda c}(e^{-\lambda t_\eta} - e^{\lambda t_\eta}) + 1 &= \eta, \\
e^{\lambda t_\eta} - e^{-\lambda t_\eta} &= (1 - \eta)e^{\lambda c}.
\end{align}
Note that $\sinh(y) = \frac{e^y - e^{-y}}{2}$:
\begin{equation}
\sinh(\lambda t_\eta) = \frac{(1 - \eta)e^{\lambda c}}{2} \implies t_\eta = \frac{1}{\lambda}\sinh^{-1}\left(\frac{1 - \eta}{2}e^{\lambda c}\right).
\end{equation}

It's easy to check now we have $\eta \geq \exp(-2\lambda c)$.
This concludes the proof.
\end{proof} 

\begin{remark}\label{remark: exp}
From calculation, we can have some approximation of the key terms in \cref{lemma: exp}:
% \begin{corollary}
    If $\lambda c \gg 1$ and $\eta \in [\exp(-2\lambda c), 1]$, then we have
    \begin{equation}
        t_\eta \approx c + \frac{1}{\lambda}\ln(1-\eta)
    \end{equation}
    \begin{equation}
        \mathbb{E}[\bar{S}_{t_\eta}(a)^2] \approx
  (1-\eta)
  \Bigl(\frac{2}{\lambda^2}
    - 2 \frac{t_\eta}{\lambda}
    + t_\eta^2
  \Bigr)
    \end{equation}
% \end{corollary}
Then if $\eta \in [\exp(-2\lambda c), 1/2]$, we can see $t_\eta$ is very close to $c$, which matches our experiment observations in \cref{fig:gate_no_relu}.
\end{remark}

Then we compare the second moments calculated above.

\begin{lemma}\label{lemma:qgh}
    Let $q_\eta = \frac{1}{p}\sinh^{-1}(\frac{1-\eta}{2}e^{p})$, $g_\eta = p(q_\eta - 1)$, and $h_\eta = p(q_\eta + 1)$. Then if $p \geq 2$ and $\eta \in [ e^{-2p}, 1/2]$, we have
    \begin{equation}
        0 < 1 + \frac{\ln(1-\eta)}{p} < q_\eta < 1
    \end{equation}
    \begin{equation}
        \ln(1-\eta) < g_\eta < 0, \qquad 2p + \ln(1-\eta) < h_\eta < 2p
    \end{equation}
\end{lemma}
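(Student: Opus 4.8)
The plan is to reduce everything to the explicit identity $\sinh^{-1}(x) = \ln\bigl(x + \sqrt{x^{2}+1}\bigr)$ together with the monotonicity of $\sinh^{-1}$. Set $u := \tfrac{1-\eta}{2}\,e^{p}$, so that $p\,q_\eta = \sinh^{-1}(u)$ and, crucially, $2u = (1-\eta)e^{p}$ and $e^{p} = \tfrac{2u}{1-\eta}$. All three displayed inequalities of the lemma will follow once the two-sided bound $1 + \tfrac{\ln(1-\eta)}{p} < q_\eta < 1$ is in hand, since $g_\eta = p(q_\eta - 1)$ and $h_\eta = p(q_\eta + 1)$ are affine in $q_\eta$.

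First I would prove the upper bound $q_\eta < 1$. By monotonicity of $\sinh^{-1}$ this is equivalent to $u < \sinh(p) = \tfrac{e^{p} - e^{-p}}{2}$; substituting $u = \tfrac{1-\eta}{2}e^{p}$ and clearing denominators, it becomes $(1-\eta)e^{p} < e^{p} - e^{-p}$, i.e. $\eta > e^{-2p}$, which holds on the stated range (strictly on the interior; note that at $\eta = e^{-2p}$ one gets exactly $q_\eta = 1$, so this endpoint is the only place the strict inequality is tight — when the lemma is invoked via $p = \lambda c \ge 2$ and $\eta \ge e^{-4}$ this is harmless). For the lower bound $q_\eta > 1 + \tfrac{\ln(1-\eta)}{p}$, multiply through by $p$: it is equivalent to $\sinh^{-1}(u) > p + \ln(1-\eta) = \ln\bigl((1-\eta)e^{p}\bigr) = \ln(2u)$, i.e. $u + \sqrt{u^{2}+1} > 2u$, i.e. $\sqrt{u^{2}+1} > u$, which holds for all $u \ge 0$. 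Finally the leftmost inequality $1 + \tfrac{\ln(1-\eta)}{p} > 0$ is equivalent to $1 - \eta > e^{-p}$, which follows from $\eta \le 1/2$ and $e^{-p} \le e^{-2} < 1/2$ whenever $p \ge 2$; this is the only place the hypothesis $p \ge 2$ is used.

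The remaining two lines are then immediate algebra on $1 + \tfrac{\ln(1-\eta)}{p} < q_\eta < 1$: subtracting $1$ and multiplying by $p$ gives $\ln(1-\eta) < g_\eta < 0$, and adding $1$ and multiplying by $p$ gives $2p + \ln(1-\eta) < h_\eta < 2p$. I do not expect any genuine obstacle here — the content is entirely elementary. The only points requiring care are the bookkeeping of strict versus non-strict inequalities at the endpoint $\eta = e^{-2p}$, and the observation that $p \ge 2$ is exactly what is needed to force $e^{-p} < 1/2$ so that $1-\eta > e^{-p}$ for all admissible $\eta$; the upper cap $\eta \le 1/2$ likewise keeps $q_\eta$ comfortably bounded away from both $0$ and the degenerate value $1$.
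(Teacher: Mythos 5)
Your proposal is correct and follows essentially the same route as the paper's proof: the bound $\sinh^{-1}(x)=\ln\bigl(x+\sqrt{x^{2}+1}\bigr)>\ln(2x)$ for the lower bound, the comparison $\tfrac{1-\eta}{2}e^{p}\le\sinh(p)$ with monotonicity of $\sinh^{-1}$ for the upper bound, and $\eta\le 1/2$ together with $p\ge 2$ for positivity, after which the $g_\eta,h_\eta$ bounds are the same affine bookkeeping. Your remark that the inequality $q_\eta<1$ is only non-strict at the endpoint $\eta=e^{-2p}$ is in fact slightly more careful than the paper, which asserts strictness there while its own argument only yields $q_\eta\le 1$; this does not affect how the lemma is used downstream.
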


\begin{proof}
    Note that 
    \begin{equation}\sinh^{-1}(x) = \ln(x + \sqrt{x^2 + 1}) \geq \ln(2x)
    \end{equation}
    So we have
    \begin{align}
        q_\eta 
        = \frac{1}{p}\sinh^{-1}\left(\frac{1-\eta}{2}e^p\right) 
        > \frac{1}{p}\ln\left(2 \cdot \frac{1-\eta}{2}e^p\right)
        = \frac{\ln(1-\eta)}{p} + 1 > 0.
    \end{align}
    And the last inequality holds when $\eta \leq 0.5 < 1 - e^{-2} \leq 1 - e^{-p}$.
    For the upper bound, from  $\eta \geq e^{-2p}$, we have:
    \begin{equation}
        \frac{1-\eta}{2}e^p \leq \frac{1 - e^{-2p}}{2}e^p = \frac{e^p - e^{-p}}{2} = \sinh(p).
    \end{equation}
    Since $\sinh^{-1}$ is strictly increasing and $\sinh^{-1}(\sinh(p)) = p$, we have:
    \begin{equation}
        % x < \sinh(p) \implies \sinh^{-1}(x) < p \implies 
        q_\eta = \frac{1}{p}\sinh^{-1}(\frac{1-\eta}{2}e^p) < 1.
    \end{equation}
    And thus inequalities for $g_\eta$ also hold.
\end{proof}

\begin{lemma}\label{lemma:fp}
For $p > 0$, the function
\begin{equation}
    f(p) = e^{-2p} \cdot \frac{2 + 2p + p^2}{2 - 2p + p^2}
\end{equation}
is strictly decreasing. And therefore when $p \geq 2$, $f(p) \leq f(2) = 5e^{-4}$.
\end{lemma}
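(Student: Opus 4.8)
The plan is to show that $f'(p) < 0$ for every $p > 0$ by logarithmic differentiation, and then obtain the bound at $p = 2$ by monotonicity. First I would note that the numerator $2 + 2p + p^2 = (p+1)^2 + 1$ and the denominator $2 - 2p + p^2 = (p-1)^2 + 1$ are both strictly positive on $(0,\infty)$ (indeed on all of $\mathbb{R}$), so $f$ is smooth and positive there; consequently the sign of $f'(p)$ coincides with the sign of $(\ln f)'(p)$, and we never have to worry about the denominator vanishing.

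Writing $A := 2 + 2p + p^2$ and $B := 2 - 2p + p^2$, differentiating $\ln f(p) = -2p + \ln A - \ln B$ gives
\[
\frac{f'(p)}{f(p)} \;=\; -2 + \frac{2+2p}{A} - \frac{2p-2}{B}.
\]
Since $f(p) > 0$ and $AB > 0$, the sign of $f'(p)$ is the sign of $N(p) := -2AB + (2+2p)B - (2p-2)A$. The one observation that makes this painless is the difference-of-squares structure: $A = (2+p^2) + 2p$ and $B = (2+p^2) - 2p$, so $AB = (2+p^2)^2 - 4p^2 = p^4 + 4$. A short expansion shows $(2+2p)B - (2p-2)A = 8 - 4p^2$, and therefore $N(p) = -2(p^4+4) + 8 - 4p^2 = -2p^4 - 4p^2 = -2p^2(p^2+2)$, which is strictly negative for $p > 0$. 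Hence $f$ is strictly decreasing on $(0,\infty)$.

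For the final claim, strict monotonicity gives $f(p) \le f(2)$ whenever $p \ge 2$, and direct substitution yields $f(2) = e^{-4}\cdot \frac{2+4+4}{2-4+4} = e^{-4}\cdot\frac{10}{2} = 5e^{-4}$, as asserted. There is no real obstacle in this lemma — the entire argument is an elementary computation; the only step worth isolating is recognizing $A,B$ as $(2+p^2)\pm 2p$ so that $AB = p^4+4$ and the surviving linear combination collapses to $8-4p^2$, which turns the otherwise messy polynomial bookkeeping into a one-line identity. Expanding everything naively also works, but I would present it via the difference-of-squares shortcut.
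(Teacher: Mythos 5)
Your proof is correct and matches the paper's argument in substance: both reduce the sign of $f'(p)$ to the same polynomial $-2p^4-4p^2<0$, the paper via a direct quotient-rule expansion and you via logarithmic differentiation with the difference-of-squares shortcut $AB=p^4+4$. The evaluation $f(2)=5e^{-4}$ and the monotonicity conclusion are likewise identical.
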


\begin{proof}
% First, we have $\lim_{p \to 0^+} f(p) = 1$.
Let $N(p) = p^2 + 2p + 2$ and $D(p) = p^2 - 2p + 2$. The derivative of $f(p)$ is:
\begin{align}
f'(p) 
&= e^{-2p} \left( \frac{N'(p)D(p) - N(p)D'(p) - 2 N(p)D(p)}{D(p)^2} \right)\\
&= e^{-2p} \left( \frac{(2p + 2)(p^2 - 2p + 2) - (p^2 + 2p + 2)(2p - 2) - 2(p^2 - 2p + 2)(p^2 + 2p + 2)}{D(p)^2} \right) \\
&= e^{-2p} \cdot \frac{-4p^2 + 8 - 2(p^4 + 4)}{D(p)^2}  \\
&= e^{-2p} \cdot \frac{-2p^4 - 4p^2}{D(p)^2}\\
&<0.
\end{align}

Therefore, $f(p)$ is strictly decreasing.
\end{proof}

\begin{lemma}\label{lemma: F G key}
    We define
    \begin{equation}
        F(\eta) = 1 - \eta - 2 z_{\eta} \phi(z_{\eta}) 
    \end{equation}
    where $z_{\eta} = \Phi^{-1}\left(1 - \frac{\eta}{2}\right)$ and $ \phi(x) = \frac{1}{\sqrt{2\pi}} e^{-x^2 / 2}$. And we define
    \begin{equation}
        G(\eta, p) = 
e^{p(q_\eta-1)} \big(\frac{  
2/p^2
- 2 q_\eta/p
+ q_\eta^2 
}{2/p^2 - 2/p + 1}\big) - e^{-p (1 + q_\eta)} \big( \frac{
 2/p^2 
 + 2 q_\eta/p
+ q_\eta^2}{2/p^2 - 2/p + 1}
\big)
    \end{equation}
    where $q_\eta = \frac{1}{p}\sinh^{-1}(\frac{1-\eta}{2}e^{p})$. 
    Then if $p \geq 2$ and $\eta \in [e^{-4}, 0.5]$, we have $ F (\eta) < G(\eta, p)$.
\end{lemma}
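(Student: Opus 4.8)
\textbf{Proof proposal for Lemma~\ref{lemma: F G key}.}

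The plan is to reduce the inequality $F(\eta) < G(\eta,p)$ to a pair of one-variable estimates by bounding each side over the relevant range $p \ge 2$, $\eta \in [e^{-4}, 1/2]$. First I would handle the Gaussian side $F(\eta) = 1 - \eta - 2z_\eta\phi(z_\eta)$ with $z_\eta = \Phi^{-1}(1-\eta/2)$. Since $\eta \le 1/2$, the quantile $z_\eta$ is bounded below by $\Phi^{-1}(3/4) > 0$, and one can show $F$ is increasing in $\eta$ on this range (differentiate: $F'(\eta) = -1 - 2\frac{d}{d\eta}[z_\eta\phi(z_\eta)]$; using $\phi'(z) = -z\phi(z)$ and $\frac{dz_\eta}{d\eta} = -\frac{1}{2\phi(z_\eta)}$, the bracket simplifies and one checks the sign). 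Hence $F(\eta) \le F(1/2)$, a fixed numerical constant $F(1/2) = 1/2 - 2z_{1/2}\phi(z_{1/2})$ with $z_{1/2} = \Phi^{-1}(3/4) \approx 0.6745$; this gives $F(\eta) \lesssim 0.18$ or so. So it suffices to show $G(\eta,p) > F(1/2)$ for all admissible $(\eta,p)$.

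Next I would lower-bound $G(\eta,p)$. Write $q_\eta \in (0,1)$ by Lemma~\ref{lemma:qgh}, and note the denominator $2/p^2 - 2/p + 1 = ((1-1/p)^2 + 1/p^2)$ is between, say, $1/2$ and $1$ for $p \ge 2$. The second term of $G$ is multiplied by $e^{-p(1+q_\eta)} \le e^{-p} \le e^{-2}$, which is tiny, so it only subtracts a small amount; the first term $e^{p(q_\eta-1)}\big(\frac{2/p^2 - 2q_\eta/p + q_\eta^2}{2/p^2 - 2/p + 1}\big)$ carries the weight. Using the approximation/bound from Remark~\ref{remark: exp} — $q_\eta \approx 1 + \ln(1-\eta)/p$ so $p(q_\eta - 1) \approx \ln(1-\eta)$, i.e.\ $e^{p(q_\eta-1)} \approx 1-\eta$ — together with $\frac{2/p^2 - 2q_\eta/p + q_\eta^2}{2/p^2 - 2/p + 1} \to 1$ as $q_\eta \to 1$, one sees $G(\eta,p) \approx (1-\eta)$, which is $\ge 1/2 > F(1/2)$ on the range. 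The rigorous version: establish the two-sided bound $1 + \ln(1-\eta)/p < q_\eta < 1$ from Lemma~\ref{lemma:qgh}, deduce $e^{p(q_\eta-1)} > 1-\eta$ (from the left bound on $q_\eta$), lower-bound the numerator ratio by noting it is increasing in $q_\eta$ on $(0,1)$ for $p \ge 2$ — or more crudely bound it below by some explicit $\rho(p) > 0$ using $q_\eta > 1 + \ln(1-\eta)/p \ge 1 - 2/p$ — and control the subtracted term by Lemma~\ref{lemma:fp}, which bounds exactly the quantity $e^{-2p}\frac{2+2p+p^2}{2-2p+p^2} \le 5e^{-4}$ that appears when $q_\eta$ is near $1$.

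The cleanest route is probably: combine the two terms of $G$ at $q_\eta = 1$ as a benchmark, where $G(\eta, p)\big|_{q_\eta = 1} = e^{0}\cdot 1 - e^{-2p}\cdot\frac{2/p^2 + 2/p + 1}{2/p^2 - 2/p + 1} = 1 - f(p)$ in the notation of Lemma~\ref{lemma:fp}, hence $\ge 1 - 5e^{-4} > 0.9$; then argue that moving $q_\eta$ down from $1$ to its actual value (which is $\ge 1 - 2/p \ge 0$) decreases $G$ by at most a controlled amount — quantified via $e^{p(q_\eta-1)} \ge 1-\eta \ge 1/2$ and monotonicity of the numerator ratio — so that $G(\eta,p) \ge \tfrac12(1 - 5e^{-4}) \cdot (\text{ratio lower bound}) - (\text{small})$ still exceeds $F(1/2)$. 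Then $F(\eta) \le F(1/2) < G(\eta,p)$ closes the argument.

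The main obstacle will be making the dependence on $q_\eta$ and $p$ simultaneously explicit: $q_\eta$ is defined through $\sinh^{-1}$, so the numerator ratio $\frac{2/p^2 - 2q_\eta/p + q_\eta^2}{2/p^2 - 2/p + 1}$ and the exponential factor $e^{p(q_\eta-1)}$ both depend on $p$ in an entangled way, and one must be careful that the worst case (smallest $G$) is attained at the corners $\eta = 1/2$ and/or $p = 2$ rather than in the interior. I expect the verification that $G$ is monotone in $p$ (or at least that its minimum over $p \ge 2$ is at $p = 2$) to be the delicate computational step; Lemmas~\ref{lemma:qgh} and~\ref{lemma:fp} are evidently the tools prepared precisely to sidestep a full monotonicity analysis, so the final proof likely just chains their explicit bounds with the numerical value $F(1/2)$.
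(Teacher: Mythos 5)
Your proposal hinges on the claim that $F$ is increasing in $\eta$, so that $F(\eta)\le F(1/2)\approx 0.07$–$0.18$, and then on beating that constant by a crude lower bound $G\gtrsim 1/2$. The monotonicity is backwards: differentiating with $\tfrac{dz_\eta}{d\eta}=-\tfrac{1}{2\phi(z_\eta)}$ and $\tfrac{d}{dz}[z\phi(z)]=(1-z^2)\phi(z)$ gives $F'(\eta)=-z_\eta^2<0$, so $F$ is strictly \emph{decreasing} on $[e^{-4},0.5]$, and its maximum is $F(e^{-4})\approx 0.87$ (recall $F$ is the fraction of Gaussian second moment destroyed when only an $\eta$-fraction of entries is kept; keeping fewer entries loses more). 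This does not just invalidate one step — it kills the whole decoupled strategy $\sup_\eta F<\inf_{\eta,p}G$: at the other corner of the domain one has $G(0.5,2)\approx 0.20$ (and $G(0.5,p)\to 0.5$ as $p\to\infty$), which is far below $\max_\eta F\approx 0.87$. The inequality $F(\eta)<G(\eta,p)$ is only true pointwise in $\eta$, and it is tight: near $\eta=e^{-4}$, $p=2$ the margin is of order $10^{-2}$, so no argument that compares a uniform upper bound on $F$ with a uniform lower bound on $G$ can succeed.

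The ingredients you list for bounding $G$ are essentially the right ones and match the paper — $e^{p(q_\eta-1)}>1-\eta$ from \cref{lemma:qgh}, and \cref{lemma:fp} to control the subtracted term by $5e^{-4}(1-\eta)$ — but they must be kept as functions of $\eta$, not collapsed to constants. The paper proceeds by rewriting the first term as $e^{g_\eta}\bigl(1+\tfrac{g_\eta(h_\eta-2)}{(p-1)^2+1}\bigr)$ with $g_\eta=p(q_\eta-1)$, $h_\eta=p(q_\eta+1)$, obtaining $G(\eta,p)>1-\eta+(1-\eta)\ln(1-\eta)-5e^{-4}(1-\eta)$, so that
\begin{equation}
G(\eta,p)-F(\eta)>Q(\eta):=2z_\eta\phi(z_\eta)-(1-\eta)\bigl[5e^{-4}-\ln(1-\eta)\bigr],
\end{equation}
and then shows $Q>0$ on $[e^{-4},0.5]$ by exploiting the shape of the two pieces (unimodality of $z\phi(z)$ composed with the decreasing quantile, monotone derivative of the second piece) together with the endpoint evaluations $0.116>0.108$ and $0.429>0.392$. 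Your proposal contains no analogue of this same-$\eta$ comparison, and without it the argument cannot be repaired by tightening constants; you would need to redo the final step along these lines.
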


\begin{figure}
    \centering
    \includegraphics[width=0.7\linewidth]{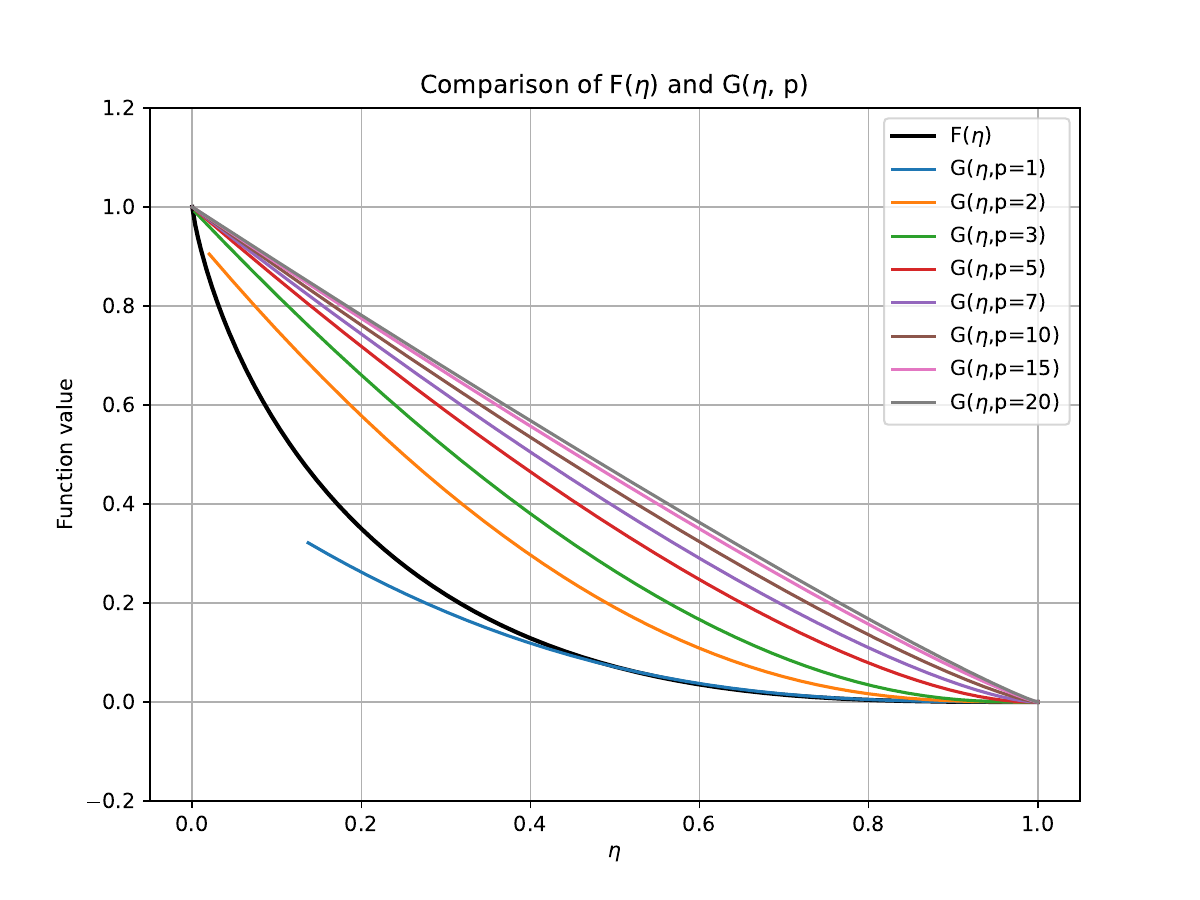}
    \caption{Comparison between $F(\eta)$ and $G(\eta,p)$.
    When $p\geq2$ and $\eta \in [e^{-4}, 0.5]$, we can see that $G(\eta, p) \geq F(\eta)$. And actually this gap increase as $p$ increases.}
    \label{fig:G F}
\end{figure}

\begin{proof}
    Let $g_\eta = p(q_\eta - 1)$ and $h_\eta = p(q_\eta + 1)$. Then for the first term we have
    \begin{align}
        G_1(\eta, p)
        &= e^{p(q_\eta-1)} \big(\frac{  
2/p^2
- 2 q_\eta/p
+ q_\eta^2 
}{2/p^2 - 2/p + 1}\big)\\
&= e^{g_\eta} (\frac{2 - 2pq_\eta + q_\eta^2p^2}{2 - 2p + p^2})\\
&= e^{g_\eta} (1 + \frac{- p^2 + 2p - 2pq_\eta + q_\eta^2p^2}{(p-1)^2 + 1})\\
&= e^{g_\eta} (1 + \frac{p[-p + 2 - 2q_\eta + q_\eta^2p]}{(p-1)^2 + 1})\\
&= e^{g_\eta} (1 + \frac{p[p(q_\eta^2-1) + 2(1-q_\eta)]}{(p-1)^2 + 1}) \\
&= e^{g_\eta} (1 + \frac{p\cdot(q_\eta-1)\cdot(p(q_\eta + 1) - 2)}{(p-1)^2 + 1}) \\
&= e^{g_\eta} (1 + \frac{g_\eta(h_\eta - 2)}{(p-1)^2 + 1})
    \end{align}

From Lemma~\ref{lemma:qgh}, 
% with the fact that $\eta \geq0.05 > e^{-4} > e^{-2p}$, 
it's easy to see that
\begin{align}
    G_1(\eta, p) &> (1-\eta) \cdot (1 + \ln(1-\eta)\cdot\frac{ (2p-2)}{(p-1)^2 + 1}) \\
    &> 1-\eta + (1-\eta)\ln(1-\eta)
\end{align}
Here we use $\ln(1-\eta) < 0$ and the property that $f(p):=\frac{ 2p-2}{(p-1)^2 + 1} \leq f(2) = 1$. 

Similarly, for the second term of $G$, we have
\begin{align}
    G_2(\eta, p) 
    &=e^{-p (1 + q_\eta)} \big( \frac{
 2/p^2 
 + 2 q_\eta/p
+ q_\eta^2}{2/p^2 - 2/p + 1}
\big) \\
&= e^{-h_\eta}(\frac{2 + 2pq_\eta + q_\eta^2p^2}{2 - 2p + p^2})\\
&< (1-\eta)\cdot e^{-2p}(\frac{2 + 2p + p^2}{2 - 2p + p^2}) \quad (\text{Lemma~\ref{lemma:qgh}})\\
&< 5e^{-4}(1-\eta) \quad (\text{Lemma~\ref{lemma:fp}})
\end{align}

Therefore we have
\begin{align}
    G(\eta, p) - F(\eta) 
    &= G_1(\eta, p) - G_2(\eta, p) - F(\eta) \\
    &> 1 - \eta + (1-\eta)\ln(1-\eta) - 5e^{-4}(1-\eta) - [1 - \eta - 2z_\eta\phi(z_\eta)]\\
    &= 2z_\eta\phi(z_\eta) - (1-\eta)[5e^{-4} - \ln(1-\eta)] \\
    &= Q(\eta)\label{eq: G minus F}
\end{align}

where $z_{\eta} = \Phi^{-1}\left(1 - \frac{\eta}{2}\right)$ and $ \phi(x) = \frac{1}{\sqrt{2\pi}} e^{-x^2 / 2}$.

Note that for $m(x) = x\phi(x)$, $m'(x) = (1-x^2)\phi(x)$, and thus $m(x)$ is strictly decreasing on $x \in (-\infty, -1) \cup (1,\infty)$ and strictly increasing on $(-1,1)$. 
And further note that $\Phi^{-1}(\cdot)$ is strictly decreasing and continous, 
% then since $\eta \in [0.05, 0.5]$, 
then since $\eta \in [e^{-2p}, 0.5]$, $z_\eta \in [\Phi^{-1}(0.75), \Phi^{-1}(1-e^{-2p}/2)]$, where $\Phi^{-1}(0.75) \approx 0.674$ and $\Phi^{-1}(1-e^{-2p}/2) > \Phi^{-1}(0.975) \approx 1.960 > 1$. Therefore, $m_2(\eta) = 2m(z_\eta)$ first increase, and then decrease on $\eta \in [e^{-2p}, 0.5] \supset [e^{-4}, 0.5]$.

Similarly, for $n(\eta) = (1-\eta)[5e^{-4} - \ln(1-\eta)]$, $n'(\eta) = 1-5e^{-4} + \ln (1-\eta)$. We can see $n'(\eta)$ consistently increase for $\eta \in[0, \eta^*] \supset [e^{-2p}, 0.5] \supset [e^{-4}, 0.5]$, where $\eta^* = 1-\exp( 5e^{-4} - 1) \approx 0.632$.

Combining these, to prove that $Q(\eta)$ (\cref{eq: G minus F}) is positive on $\eta \in [e^{-4}, 0.5]$, we just need to make sure $m_2(e^{-4}) > n(e^{-4})$ and $m_2(0.5) > n(0.5)$. And from calculation we get
\begin{equation}
    m_2(e^{-4})\approx 0.116 > n(e^{-4}) \approx 0.108, \qquad 
    m_2(0.5)\approx 0.429 > n(0.5) \approx 0.392
\end{equation}
% Actually, the 

Therefore, we have $G(\eta, p) - F(\eta) > Q(\eta) > 0$ for all $p \geq 2$ and $\eta \in [e^{-4}, 0.5]$. This completes the proof.

\end{proof}

\begin{remark}
    Lemma~\ref{lemma: F G key} is the key lemma of the whole proof, which compares the variance caused by the inverse sparsity function of up and gate function. We can further visualize $F(\eta)$ and $G(\eta, p)$ in~\cref{fig:G F}. We see that the visualization results match our proof, and showing that larger $p$ will has larger $G(\eta, p) - F(\eta)$ values.
\end{remark}

\begin{lemma}\label{lemma: basic expectation}
    Assume that $x \in \mathbb{R}^{m}$ and $W \in \mathbb{R}^{m \times n}$ are random vector and matrix whose elements are independent to each other. And all the $W_{ij}$ in $W$ satisfies $W_{ij} \sim \mathcal{N}(0, \sigma^2)(i \in [m], j \in [n])$. Then we have
    \begin{equation}
        \mathbb{E} [\| x W \|_2^2] = n \sigma^2 \mathbb{E} [\|x\|_2^2]. 
    \end{equation}
    Furthermore, if all $x_i$ are \textit{i.i.d.} (independent and identically distributed), with mean $0$ and variance $\sigma_x^2$, then 
    \begin{equation}
        \mathbb{E} [\| x W \|_2^2] = n m \sigma^2 \cdot \sigma_x^2. 
    \end{equation}
\end{lemma}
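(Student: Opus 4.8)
The plan is to reduce the statement to an elementary second-moment computation: expand $\|xW\|_2^2$ coordinate by coordinate and exploit that, within any fixed column of $W$, the entries are centered and mutually independent, while $x$ is independent of $W$.

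First I would write the $j$-th coordinate of $xW$ as $(xW)_j=\sum_{i=1}^m x_i W_{ij}$, so that
\begin{equation}
\|xW\|_2^2=\sum_{j=1}^n\bigl(\textstyle\sum_{i=1}^m x_i W_{ij}\bigr)^2=\sum_{j=1}^n\sum_{i=1}^m\sum_{k=1}^m x_i x_k\,W_{ij}W_{kj}.
\end{equation}
Since all these sums are finite, expectation passes inside termwise. Because $x$ is independent of $W$, each term factors as $\mathbb{E}[x_i x_k]\,\mathbb{E}[W_{ij}W_{kj}]$, and the $W$-factor equals $\sigma^2$ when $i=k$ (it is $\mathbb{E}[W_{ij}^2]=\mathrm{Var}(W_{ij})$, since $W_{ij}$ is centered) and $0$ when $i\neq k$ (then $W_{ij}$ and $W_{kj}$ are independent and centered). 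Hence only the diagonal $i=k$ terms survive, and
\begin{equation}
\mathbb{E}\|xW\|_2^2=\sum_{j=1}^n\sum_{i=1}^m\sigma^2\,\mathbb{E}[x_i^2]=n\sigma^2\sum_{i=1}^m\mathbb{E}[x_i^2]=n\sigma^2\,\mathbb{E}\|x\|_2^2,
\end{equation}
which is the first claim. For the second claim I would substitute $\mathbb{E}[x_i^2]=\mathrm{Var}(x_i)+(\mathbb{E}x_i)^2=\sigma_x^2$ (using the mean-zero, variance-$\sigma_x^2$ hypothesis on the i.i.d.\ coordinates $x_i$), so that $\mathbb{E}\|x\|_2^2=m\sigma_x^2$ and therefore $\mathbb{E}\|xW\|_2^2=nm\sigma^2\sigma_x^2$.

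There is no genuine obstacle here; the argument is routine, and the only points needing a little care are the index bookkeeping in the triple sum and attributing the vanishing of the off-diagonal terms to \emph{centeredness} of the entries of $W$ rather than to independence alone. I would also note that Gaussianity is inessential: the proof uses only that the $W_{ij}$ are square-integrable, centered, and pairwise independent within each column, with $x$ independent of $W$; likewise the second part relies only on $\mathbb{E}[x_i^2]=\sigma_x^2$, not on the full i.i.d.\ or mean-zero hypotheses on $x$.
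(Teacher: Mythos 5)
Your proof is correct and follows essentially the same route as the paper's: expand $\|xW\|_2^2$ column by column, factor expectations using the independence of $x$ and $W$, and kill the off-diagonal terms via the centered independent Gaussian entries, leaving $n\sigma^2\mathbb{E}\|x\|_2^2$ and then $nm\sigma^2\sigma_x^2$ in the i.i.d.\ case. Your added remark that Gaussianity is inessential (only centeredness, square-integrability, and within-column independence are used) is a fair observation but does not change the argument.
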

\begin{proof}
Let ${W}_j$ denote the $j$-th column of matrix $W$. Due to independence:
\begin{equation}
\mathbb{E}[\|xW\|_2^2] = \mathbb{E}\left[\sum_{j=1}^n (x{W}_j)^2\right] \label{eq:norm_expansion} = \sum_{j=1}^n \mathbb{E}\left[(x{W}_j)^2\right] = n \mathbb{E}\left[(x{W}_j)^2\right]
\end{equation}
Similarly by independence between $x$ and $W$.
\begin{align}
\mathbb{E}\left[(x{W}_j)^2\right] &= \mathbb{E}\left[\left(\sum_{i=1}^m x_i W_{ij}\right)^2\right]  \\
&= \sum_{i=1}^m \mathbb{E}[x_i^2]\mathbb{E}[W_{ij}^2] + 2\sum_{i<k}\mathbb{E}[x_ix_k]\mathbb{E}[W_{ij}W_{kj}]  \\
&= \sum_{i=1}^m \mathbb{E}[x_i^2]\sigma^2 + 0  \\
&= \sigma^2 \mathbb{E}\left[\sum_{i=1}^m x_i^2\right] \label{eq:sum_variance} \\
&= \sigma^2 \mathbb{E}[\|x\|_2^2] 
\end{align}
Combine them and we get the first equation. And the second equation is totally similar.
\end{proof}

\begin{lemma}\label{lemma: mean zero}
      If $a \sim \mathcal{N}(0, \sigma_a^2)$, $b$ is a random variable independent to $a$, sparsity function $\mathtt{S}_t$ is defined as \cref{eq: s_t def}. 
% \begin{equation}
% S_t(\mathbf{y}) = 
% \begin{cases} 
% \mathbf{y}, & \textnormal{if } |\mathbf{y}| \geq t, \\
% 0, & \textnormal{if } |\mathbf{y}| < t.
% \end{cases}
% \end{equation}
Then for any $t > 0$, $\mathbb{E}[(a - \mathtt{S}_t(a)) \cdot b] = \mathbb{E}[a \cdot (b - \mathtt{S}_t(b))] = 0$.
\end{lemma}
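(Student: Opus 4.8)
The plan is to reduce each of the two claimed identities to a one-line symmetry argument, after first putting the ``complementary'' sparsification operator into closed form. Directly from the definition of $\mathtt{S}_t$ in~\cref{eq: s_t def}, on the event $|a|\ge t$ we have $a-\mathtt{S}_t(a)=a-a=0$, while on $|a|<t$ we have $a-\mathtt{S}_t(a)=a-0=a$; hence
\[
a - \mathtt{S}_t(a) = a\,\mathbf{1}_{\{|a|<t\}},
\]
and likewise $b-\mathtt{S}_t(b)=b\,\mathbf{1}_{\{|b|<t\}}$. Two features of these expressions drive the whole proof: (i) each is bounded in absolute value by $t$, which guarantees the relevant products are integrable and lets us split expectations across independent factors; and (ii) the map $x\mapsto x\,\mathbf{1}_{\{|x|<t\}}$ is an \emph{odd} function, which is what kills the Gaussian factor.

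For the first identity, I would note that $|a-\mathtt{S}_t(a)|\le t$, so $\mathbb{E}\bigl|(a-\mathtt{S}_t(a))\,b\bigr|\le t\,\mathbb{E}|b|<\infty$ (finiteness of $\mathbb{E}|b|$ is available in the applications of this lemma inside~\cref{thm: formal}, where $b$ is a Gaussian or a shifted exponential variable); by independence of $a$ and $b$,
\[
\mathbb{E}\bigl[(a-\mathtt{S}_t(a))\,b\bigr]=\mathbb{E}\bigl[a\,\mathbf{1}_{\{|a|<t\}}\bigr]\,\mathbb{E}[b].
\]
Then I would apply the substitution $a\mapsto -a$: since $a\sim\mathcal{N}(0,\sigma_a^2)$ has an even density and $x\mapsto x\,\mathbf{1}_{\{|x|<t\}}$ is odd, $\mathbb{E}\bigl[a\,\mathbf{1}_{\{|a|<t\}}\bigr]=0$, so the product vanishes irrespective of the value of $\mathbb{E}[b]$.

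The second identity is even simpler and needs no extra integrability hypothesis: because $|b-\mathtt{S}_t(b)|\le t$ and $\mathbb{E}|a|<\infty$ (Gaussian), the product $a\,(b-\mathtt{S}_t(b))$ is integrable, and independence gives
\[
\mathbb{E}\bigl[a\,(b-\mathtt{S}_t(b))\bigr]=\mathbb{E}[a]\cdot\mathbb{E}\bigl[b-\mathtt{S}_t(b)\bigr]=0,
\]
using only $\mathbb{E}[a]=0$.

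The only real obstacle here is bookkeeping rather than mathematics: one must verify that each expectation is well-defined before factoring it across the independent pair $(a,b)$ — for the first identity this is exactly the point requiring $\mathbb{E}|b|<\infty$ (hence I would state the lemma for integrable $b$, which is automatic in every place it is invoked), while the second identity is unconditional. Once integrability is in hand, both claims are immediate from the closed form $a-\mathtt{S}_t(a)=a\,\mathbf{1}_{\{|a|<t\}}$, the oddness of this map, and the fact that $a$ is mean-zero and symmetric.
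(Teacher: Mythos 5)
Your proof is correct and follows essentially the same route as the paper's: factor each expectation across the independent pair $(a,b)$ and kill the Gaussian factor by symmetry (your observation that $a-\mathtt{S}_t(a)=a\,\mathbf{1}_{\{|a|<t\}}$ is odd is equivalent to the paper's appeal to $\mathbb{E}[\mathtt{S}_t(a)]=0$). Your explicit integrability check for $b$ is a small but reasonable tightening that the paper's one-line proof leaves implicit.
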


\begin{proof}
    Just need to note that $a$ and $b$ are independent and from symmetry, $\mathbb{E}[\mathtt{S}_t(a)] = 0$, and then the proof is trivial. 
\end{proof}

\newpage
\section{Sparsity Insensitivity of the Up Projection in Dense LLMs}
We evaluate the sparsity sensitivity of the up projection on LLaMA-3-8B~\citep{meta2024llama3}, with results consistent with our findings on MoE models.Some results are presented in~\cref{tab:dense}.
\begin{table*}[ht]
\begin{center}
\caption{Sparsity Insensitivity of the Up Projection in Dense LLMs}
\label{tab:dense}
\begin{small}
\begin{sc}
\begin{tabular}{@{}lccccccc@{}}
\toprule
Method     & bool\_q & sci\_q & openbookqa & winogrande & arc\_challenge & arc\_easy & average \\
\midrule
base       & 0.8187  & 0.961  & 0.370      & 0.7348     & 0.5026         & 0.8085    & 0.6993  \\
up-90\%    & 0.7116  & 0.925  & 0.300      & 0.6717     & 0.4002         & 0.7066    & 0.6192  \\
down-90\%  & 0.7780  & 0.959  & 0.336      & 0.6922     & 0.4241         & 0.7504    & 0.6566  \\
\bottomrule
\end{tabular}
\end{sc}
\end{small}
\end{center}
\end{table*}

\section{Downstream Tasks 
 Performance Details}
\label{sec:downstream_task}

% 原始表格
% \begin{table*}[]
% \begin{center}
% % \resizebox{0.5\textwidth}{!}{%
% \caption{Performance of downstream tasks under different compression methods}
% \label{tab:downstream_all}
% \begin{small}
% \begin{sc}
% \begin{tabular}{@{}lccccccccc@{}}
% \toprule 
%                        & BoolQ         & SCI-Q         & OpenBookQA     & Winogrande      & ARC C.          & ARC E.          & Average \\
% \midrule
% HQQ int3               & 0.8088  & 0.955  & 0.292      & 0.7119     & 0.4812 & 0.7996  & 0.6748  \\
% CATS-80\%                   & 0.792   & 0.903  & 0.322      & 0.67       & 0.5153 & 0.7815  & 0.6640  \\
% Chess-80\%                  & 0.8024  & 0.912  & 0.302      & 0.6937     & 0.4982 & 0.7811  & 0.6649  \\
% \model{} w/Up-80\% (Ours)                  & 0.8287  & 0.944  & 0.344      & 0.7324     & 0.5315 & 0.8156  & 0.6994  \\
% \midrule
% HQQ int2               & 0.485   & 0.331  & 0.144      & 0.4925     & 0.2124 & 0.2794  & 0.3241  \\
% CATS-90\%                   & 0.7036  & 0.826  & 0.272      & 0.5856     & 0.4419 & 0.7091  & 0.5897  \\
% Chess-90\%                  & 0.7266  & 0.839  & 0.278      & 0.6037     & 0.4104 & 0.694   & 0.5920  \\
% \model{} w/Up-90\% (Ours)                  & 0.7874  & 0.933  & 0.312      & 0.6701     & 0.4974 & 0.7878  & 0.6646  \\
% \midrule
% \model{}-80\% (Ours) & 0.8097 & 0.951  & 0.336      & 0.7174     & 0.50938 & 0.803   & 0.6877  \\
% \model{}-90\% (Ours) & 0.8346 & 0.952  & 0.276      & 0.6953     & 0.4581 & 0.7618 & 0.6630  \\
% \bottomrule
% \end{tabular}%
% \end{sc}
% \end{small}
% % }
% \end{center}
% \end{table*}
We present the detailed evaluation results of \model{} and baseline methods across downstream tasks in ~\cref{exp:accuracy} in~\cref{tab:downstream_all}.
We also evaluate the impact of different projection matrix sparsification sensitivity on downstream tasks in ~\cref{tab:SparsificationSensitivity}.

\begin{table*}[htp!]
\begin{center}
% \resizebox{0.5\textwidth}{!}{%
\caption{Performance of Downstream Tasks under Different Compression Methods}
\label{tab:downstream_all}
\begin{small}
\begin{sc}
\begin{tabular}{@{}lcccccccc@{}}
\toprule 
                       & MMLU@5 & BoolQ         & SCIQ         & QA     & WG      & Arc-C          & Arc-E          & Average \\

\midrule
Mixtral-8*7B & 0.695 & 0.853 & 0.968 & 0.354 & 0.762 & 0.567 & 0.843 & 0.720 \\
\midrule
HQQ int3               & 0.608 & 0.809  & 0.955  & 0.292      & 0.712     & 0.481 & 0.800  & 0.665  \\
CATS-80\%                   & 0.617 & 0.792   & 0.903  & 0.322      & 0.670       & 0.515 & 0.782  & 0.657  \\
Chess-80\%                  & 0.612 & 0.802  & 0.912  & 0.302      & 0.694     & 0.498 & 0.781  & 0.657  \\
\model{}-$\mathbf{W}^{\textnormal{up}}$-80\%                  & 0.654 & 0.829  & 0.944  & 0.344      & 0.732     & 0.532 & 0.816  & 0.693  \\
\midrule
HQQ int2               & 0.234 & 0.485   & 0.331  & 0.144      & 0.493     & 0.212 & 0.279  & 0.311  \\
CATS-90\%                   & 0.377 & 0.704  & 0.826  & 0.272      & 0.586     & 0.442 & 0.709  & 0.559  \\
Chess-90\%                  & 0.424 & 0.727  & 0.839  & 0.278      & 0.604     & 0.410 & 0.694   & 0.568  \\
\model{}-$\mathbf{W}^{\textnormal{up}}$-90\%                 & 0.601 & 0.787  & 0.933  & 0.312      & 0.670     & 0.497 & 0.788  & 0.656  \\
\midrule
\model{}-80\% & 0.605 & 0.810 & 0.951  & 0.336      & 0.717     & 0.509 & 0.803   & 0.676  \\
\model{}-90\% & 0.531 & 0.835 & 0.952  & 0.276      & 0.695     & 0.458 & 0.762 & 0.644  \\
\bottomrule
\end{tabular}%
\end{sc}
\end{small}
% }
\end{center}
\end{table*}

% \begin{table*}[htp!]
% \begin{center}
% % \resizebox{0.5\textwidth}{!}{%
% \caption{Performance of downstream tasks under different compression methods}
% \label{tab:downstream_all}
% \begin{small}
% \begin{sc}
% \begin{tabular}{@{}lccccccccc@{}}
% \toprule 
%                        & BoolQ         & SCIQ         & QA     & WG      & Arc-C          & Arc-E.          & Avg \\
% \midrule
% HQQ int3               & 0.809  & 0.955  & 0.292      & 0.712     & 0.481 & 0.800  & 0.675  \\
% CATS-80\%                   & 0.792   & 0.903  & 0.322      & 0.670       & 0.515 & 0.782  & 0.664  \\
% Chess-80\%                  & 0.802  & 0.912  & 0.302      & 0.694     & 0.498 & 0.781  & 0.665  \\
% \model{}-$\mathbf{W}^{\textnormal{up}}$-80\%                  & 0.829  & 0.944  & 0.344      & 0.732     & 0.532 & 0.816  & 0.699  \\
% \midrule
% HQQ int2               & 0.485   & 0.331  & 0.144      & 0.493     & 0.212 & 0.279  & 0.324  \\
% CATS-90\%                   & 0.704  & 0.826  & 0.272      & 0.586     & 0.442 & 0.709  & 0.590  \\
% Chess-90\%                  & 0.727  & 0.839  & 0.278      & 0.604     & 0.410 & 0.694   & 0.592  \\
% \model{}-$\mathbf{W}^{\textnormal{up}}$-90\%                 & 0.787  & 0.933  & 0.312      & 0.670     & 0.497 & 0.788  & 0.665  \\
% \midrule
% \model{}-80\% & 0.810 & 0.951  & 0.336      & 0.717     & 0.509 & 0.803   & 0.688  \\
% \model{}-90\% & 0.835 & 0.952  & 0.276      & 0.695     & 0.458 & 0.762 & 0.663  \\
% \bottomrule
% \end{tabular}%
% \end{sc}
% \end{small}
% % }
% \end{center}
% \end{table*}

\begin{table}[htp!]
% \resizebox{\textwidth}{!}{%
\begin{center}
\caption{Performance of Downstream Tasks Under Different Sparse Strategies}
\label{tab:downstream_sparse}
\begin{small}
\begin{sc}
\begin{tabular}{@{}lcccccc@{}}
\toprule
     & \texttt{0\%}                       & \texttt{50\%}   & \texttt{60\%}   & \texttt{70\%}   & \texttt{80\%}   & \texttt{90\%}   \\
    \midrule
gate & \multirow{3}{*}{0.7247} & 0.7228 & 0.7140 & 0.7035 & 0.6640 & 0.5897 \\
up   &                         & 0.7199 & 0.7148 & 0.7038 & 0.6971 & 0.6646 \\
down &                         & 0.7233 & 0.7210 & 0.7201 & 0.7194 & 0.7054 \\
\bottomrule
\end{tabular}%
\end{sc}
\end{small}
\end{center}
% }
\end{table}

\clearpage
\section{Sparsification Insensitivity of the Up Projection in More MoE models}
\label{sec:sparAppen}
Due to the smaller hidden layer dimensions of DeepSeek V2's experts, the sparsity rates tested were correspondingly lower, with results in~\cref{tab:SparsificationSensitivity-2}.

\begin{table*}[htp!]
\caption{Sparsification Sensitivity of MoE Models}
\label{tab:SparsificationSensitivity}
\begin{center}
\begin{small}
\begin{sc}
\begin{tabular}{@{}llccccc@{}}
\toprule
\textbf{Model} & \textbf{Operation} & \texttt{50\%} & \texttt{60\%} & \texttt{70\%} & \texttt{80\%} & \texttt{90\%} \\
\midrule
\multirow{3}{*}{Mixtral-8$\times$7B} & gate & 5.8151 & 6.3379 & 7.2570 & 9.1439 & 18.5280 \\
 & down & 5.1583 & 5.2101 & 5.3252 & 5.6147 & 6.5511 \\
 & up   & 5.3164 & 5.5390 & 5.9795 & 6.9141 & 9.1250 \\
% \midrule
% \multirow{3}{*}{DeepSeek-V2} & gate & 8.476 & 8.497 & 8.558 & 9.020 & 112.8 \\
%  & down & 8.476 & 8.659 & 9.364 & 27.70 & 350.1 \\
%  & up   & 8.476 & 8.489 & 8.602 & 9.090 & 83.57 \\
\midrule
\multirow{3}{*}{Phi-3.5-MoE-instruct} & gate & 5.4386 & 5.7809 & 6.4006 & 7.6114 & 11.2538 \\
 & down & 5.1495 & 5.2051 & 5.3255 & 5.6377 & 6.6271 \\
 & up   & 5.4092 & 5.6855 & 6.2642 & 7.3284 & 10.2146 \\
\bottomrule
\end{tabular}
\end{sc}
\end{small}
\end{center}
\end{table*}

\begin{table}[h]
\caption{Sparsification Sensitivity of DeepSeek-V2}
\label{tab:SparsificationSensitivity-2}
\begin{center}
\begin{small}
\begin{sc}
\begin{tabular}{@{}llccccc@{}}
\toprule
\textbf{Model} & \textbf{Operation} & \texttt{30\%} & \texttt{50\%} & \texttt{70\%} \\
\midrule
\multirow{3}{*}{DeepSeek-V2} & gate & 8.6434 & 8.8331 & 9.8083 &  \\
 & down & 8.6264  & 8.6400 & 8.7223  \\
 & up   & 8.6456 & 8.7818 & 9.2083  \\
\bottomrule
\end{tabular}
\end{sc}
\end{small}
\end{center}
\end{table}

% \begin{table*}[t]
% \caption{Quantization Sensitivity of MoE Models}
% \label{tab:SparsificationSensitivity}
% \begin{center}
% \begin{small}
% \begin{sc}
% \begin{tabular}{@{}llccccc@{}}
% \toprule
% \textbf{Model} & \textbf{Operation} & \texttt{INT8} & \texttt{INT4} & \texttt{INT3} & \texttt{INT2} & \texttt{INT1} \\
% \midrule
% \multirow{3}{*}{Phi-3.5-MoE-instruct} & gate & 5.768 & 5.785 & 5.952 & 6.623 & 608.7 \\
%  & down & 5.768 & 5.772 & 6.067 & 7.733 & 365.9 \\
%  & up   & 5.769 & 5.788 & 5.899 & 6.599 & 209.3 \\
% \midrule
% \multirow{3}{*}{DeepSeek-V2} & gate & 8.476 & 8.497 & 8.558 & 9.020 & 112.8 \\
%  & down & 8.476 & 8.659 & 9.364 & 27.70 & 350.1 \\
%  & up   & 8.476 & 8.489 & 8.602 & 9.090 & 83.57 \\
% \midrule
% \multirow{3}{*}{Mixtral-8$\times$7B} & gate & 5.119 & 5.158 & 5.310 & 6.245 & 1130 \\
%  & down & 5.121 & 5.270 & 5.968 & 14.36 & 1910 \\
%  & up   & 5.119 & 5.151 & 5.281 & 6.177 & 520.1 \\
% \bottomrule
% \end{tabular}
% \end{sc}
% \end{small}
% \end{center}
% \end{table*}

\section{Quantization Insensitivity of the Up Projection in More MoE models}
\label{sec:quanAppen}
Besides Mixtral 8$\times$7B, we also evaluated the quantization insensitivity of the up-projection in Phi-3.5-MoE-instruct~\cite{abdin2024phi3technicalreporthighly}, DeepSeek-MoE-16B-Base~\citep{dai2024deepseekmoe}, and Qwen1.5-MoE-A2.7B~\citep{qwen_moe}, all of which show that the up-projection is the least sensitive to ultra-low-bit quantization.The results are shown in~\cref{tab:QuantizationSensitivity}.
\begin{table*}[htp!]
\caption{Quantization Sensitivity of MoE Models}
\label{tab:QuantizationSensitivity}
\begin{center}
\begin{small}
\begin{sc}
\begin{tabular}{@{}llccccc@{}}
\toprule
\textbf{Model} & \textbf{Operation} & \texttt{INT8} & \texttt{INT4} & \texttt{INT3} & \texttt{INT2} & \texttt{INT1} \\
\midrule
\multirow{3}{*}{Phi-3.5-MoE-instruct} & gate & 5.768 & 5.785 & 5.952 & 6.623 & 608.7 \\
 & down & 5.768 & 5.772 & 6.067 & 7.733 & 365.9 \\
 & up   & 5.769 & 5.788 & 5.899 & 6.599 & 209.3 \\
\midrule
\multirow{3}{*}{DeepSeek-MOE-16\textbf{}} & gate & 8.476 & 8.497 & 8.558 & 9.020 & 112.8 \\
 & down & 8.476 & 8.659 & 9.364 & 27.70 & 350.1 \\
 & up   & 8.476 & 8.489 & 8.602 & 9.090 & 83.57 \\
\midrule
\multirow{3}{*}{Mixtral-8$\times$7B} & gate & 5.119 & 5.158 & 5.310 & 6.245 & 1130 \\
 & down & 5.121 & 5.270 & 5.968 & 14.36 & 1910 \\
 & up   & 5.119 & 5.151 & 5.281 & 6.177 & 520.1 \\
\midrule
\multirow{3}{*}{qwen-1.5-A2.7B} & gate & 9.227 & 9.258 & 9.364 & 10.72 & 102.0 \\
 & down & 9.224 & 9.419 & 9.655 & 12.53 & 138.4 \\
 & up   & 9.226 & 9.270 & 9.426 & 10.19 & 71.06 \\
\bottomrule
\end{tabular}
\end{sc}
\end{small}
\end{center}
\end{table*}

\section{Memory Footprint of Learning-based Contextual Sparsity Predictors}
\label{sec:cost_learningpredictor}
While some existing approaches rely on learning-based prediction methods \citep{liu2023deja, shin2024sparseinfer}, these methods often incur significant memory costs. For instance, in the Mixtral-8$\times$7B model, where the hidden state dimension $d$ is 4096 and the gating weight matrix $W_{gate}$ in an MLP block has a size of $d \times k = 4096 \times 14336$, \emph{PowerInfer} \citep{xue2024powerinfer2} requires $(4096 \times 1024 + 1024 \times 14336) \times 2 \textnormal{ (bytes)} \times 256 = 9\textnormal{GB}$ of memory when the rank of the \emph{DEJAVU} predictor \citep{liu2023deja} is set to 1024. Similarly, although \emph{SparseInfer} \citep{shin2024sparseinfer} achieves a more memory-efficient design by storing only the sign bit of each element, compactly packed into 32-bit variables, it still incurs a memory footprint of $14336 \times 160 \times 4 \textnormal{ (bytes)} \times 256 = 2.19\textnormal{GB}$.

\end{document}